\documentclass[journal]{IEEEtran}

\usepackage{cite}
\usepackage{color}
\definecolor{tabcolor}{rgb}{.105,.410,.113}%
\usepackage{booktabs}
\usepackage{colortbl}
\usepackage[colorlinks=true, linkcolor=blue, filecolor=blue, citecolor=blue, urlcolor=cyan]{hyperref}
\usepackage{algpseudocode}

\usepackage{graphicx}       
\usepackage{epstopdf}
\usepackage{amsmath}
\usepackage{amssymb}
\usepackage{threeparttable}

\usepackage{multirow}
\ifCLASSINFOpdf

\else

\fi

\usepackage{subfigure}
\usepackage{cite}
\usepackage{colortbl}

\usepackage{bm}
\usepackage{algorithm}
\usepackage{algpseudocode}

\usepackage{graphicx}     
\usepackage{epstopdf}

\floatname{algorithm}{\small Algorithm}

\usepackage{algorithm}
\usepackage{algpseudocode}
\usepackage{amsmath}
\usepackage{makecell}

\newtheorem{remark}{Remark}

\newtheorem{definition}{Definition}
\newtheorem{lemma}{Lemma}
\newtheorem{theorem}{Theorem}

\newtheorem{corollary}{Corollary}

\begin{document}

\title{Diffusion Policy with Behavioral Advantage Correction for Offline Reinforcement Learning}

\author{Botao~Dong, Longyang~Huang, Ning~Pang, and Hongtian~Chen
\thanks{Botao Dong is with the School of Chemistry, Chemical Engineering and Biotechnology, Nanyang Technological University, Singapore 637459 (e-mail: botao.dong@ntu.edu.sg).}
}

\markboth{
}%
{Shell \MakeLowercase{\textit{et al.}}: Bare Demo of IEEEtran.cls for IEEE Journals}

\maketitle

\begin{abstract}
	In offline reinforcement learning (RL), the distribution shift between behavioral data and the learned policy can lead to erroneous \emph{Q}-value estimation, thereby misguiding the direction of policy optimization. To address this issue, we develop a behavioral advantage corrected policy evaluation (BAC-PE) approach, which utilizes the \emph{Q}-function of the behavior policy to correct the learned policy's \emph{Q}-function, thus mitigating pessimistic conservatism and overestimation bias. Furthermore, the convergence of BAC-PE is analyzed theoretically, and an upper bound on the difference between the learned \emph{Q}-function and the true \emph{Q}-function is derived. To alleviate distribution shift, this work employs diffusion models to represent both the behavior policy and the learned policy, performing distribution matching for accurate policy regularization. Additionally, \emph{Q}-value guidance is incorporated into the training process to achieve effective policy improvement. By combining BAC-PE with diffusion policy modeling, we propose the diffusion policy with behavioral advantage correction (DPBAC) algorithm. Compared to existing offline methods, DPBAC demonstrates stronger policy representation capabilities and effectively mitigates the bias in \emph{Q}-value estimation. Experimental results on multiple domains of D4RL tasks show that DPBAC achieves superior performance, with notable advantages over state-of-the-art (SOTA) algorithms.
\end{abstract}

\begin{IEEEkeywords}
	Distribution shift, policy optimization, diffusion models, policy regularization.
\end{IEEEkeywords}

\IEEEpeerreviewmaketitle

\section{Introduction}
\IEEEPARstart{A}{s} a pivotal subfield of machine learning, RL enables agents to learn optimal policies through iterative interactions with the environment and feedback-driven adaptation \cite{Sutton1}. This methodology has achieved substantial advancements in various domains such as robotics \cite{Feng2025TCDS, WangH2024, 9817657}, autonomous driving \cite{Wang_X2023, Chen2023Milestones, JiangY2024}, and industrial processes \cite{Dogru2024}, underscoring its versatility and potential in solving complex decision-making tasks \cite{CuiY_TNNLS2023}. However, online RL training requires extensive agent-environment interactions to collect samples \cite{CuiY_TNNLS2024}, leading to safety risks \cite{WHChen2024} and high data acquisition costs \cite{Yang2022}. These limitations are particularly pronounced in scenarios requiring real-time decision-making and stringent safety guarantees.

Compared with online RL, offline RL relies on pre-collected datasets for training \cite{CaoSSMC2024}, which eliminates the necessity of real-time interaction with the environment \cite{XiaL2024}. This not only enhances agent safety but also ensures the efficient utilization of available data. However, directly applying online RL methods in offline settings often results in training failures due to inherent challenges in offline RL \cite{Prudencio2023}, such as undesirable distribution shift \cite{WangJ2023} and the overestimation of out-of-distribution (OOD) actions \cite{LiuJAAAI}. To address these issues, existing studies primarily focus on two main strategies: value regularization, which aims to mitigate overestimation biases in \emph{Q}-values \cite{HuS2024}, and policy regularization, which enhances the consistency between the learned policy and the behavior policy \cite{YuanZ2024}.

Value regularization approaches aim to correct value-estimation errors caused by Bellman backups over actions outside the support of the offline dataset. Since the critic is directly trained only on state-action pairs from the given dataset, OOD actions queried by the actor or target policy may be assigned unrealistically large value estimates due to the lack of empirical transition support. Such positive extrapolation errors can be recursively amplified through temporal-difference backups, driving the learned policy toward unsupported regions. To break this error-amplification loop, value regularization modifies the critic objective by lowering the values of unsupported actions, suppressing uncertain estimates, or restricting backups to reliable in-dataset actions \cite{MiaoC2024, ZhangY2024}. Representative methods include conservative \emph{Q}-learning (CQL) \cite{KumarA2020}, which penalizes OOD actions to learn a lower-bound critic; implicit \emph{Q}-learning (IQL) \cite{KostrikovI2021}, which avoids explicit OOD action sampling by estimating expectile values and extracting high-advantage in-dataset actions; falSe COrrelation REduction (SCORE) \cite{DengZ2024}, which suppresses unreliable estimates using bootstrapped ensembles; and supported value regularization (SVR) \cite{MaoY2024}, which preserves Bellman updates on supported actions while penalizing OOD regions. Although these methods differ in identifying unsupported or uncertain actions, they all inject pessimism into policy evaluation to prevent OOD value exploitation. However, excessive pessimism may also limit policy improvement, since conservative penalties cannot always distinguish unreliable OOD actions from potentially high-return but under-sampled actions in partially covered datasets. Moreover, broad penalties may spill over through function approximation to nearby or potentially valuable in-distribution regions, and the induced negative bias can further propagate through Bellman backups, especially in sparse-reward or long-horizon tasks \cite{ShaoJ20223}. Consequently, value regularization can reduce overestimation, but it may also mis-rank promising actions and provide weak signals for policy improvement. This limitation motivates the proposed behavioral advantage corrected policy evaluation mechanism. Instead of imposing a uniformly conservative penalty, the proposed mechanism uses the \emph{Q}-value of the behavior policy as an adaptive reference to correct both overestimation and excessive pessimism.

Unlike value regularization, which focuses on constraining the learned \emph{Q}-values, policy regularization addresses distribution shift from the actor side by constraining the learned policy to remain close to the behavior policy induced by the dataset \cite{MaC2025}. Its objective can be interpreted as maximizing the learned value while imposing a constraint or penalty on the divergence between the learned policy and the behavior policy \cite{ZhangR2024}. Existing methods mainly differ in how the behavior support is represented and how strictly the learned policy is allowed to deviate from it. At the most conservative end, behavior cloning (BC) \cite{Fujimoto2021330} performs pure supervised imitation, which achieves strong behavioral consistency but ignores reward-driven improvement \cite{10004017}. Action-support constraint methods such as batch-constrained \emph{Q}-learning (BCQ) \cite{FujimotoS2019} relax this restriction by learning a generative model of dataset actions and applying a small perturbation, so that the actor can improve values while remaining close to sampled behavior actions. Divergence-based or trust-region-based methods further formulate this idea as an explicit regularized optimization problem, where a penalty term controls the distance between the learned policy and the behavior policy \cite{RanY2023ef, ZhangZ2024}. Dataset-geometry-aware approaches such as DOGE \cite{LiJXzhan2023} refine the notion of support by using geometric information of the dataset, allowing policy improvement in regions that are close to or generalizable from the data while remaining conservative in uncertain areas. More recently, diffusion-based behavior regularization uses expressive diffusion behavior models to estimate heterogeneous behavior distributions and regularize policy optimization more accurately \cite{ChenH2024}. Thus, policy regularization methods form a spectrum from strict imitation to value-aware constrained improvement, differing mainly in whether they constrain pointwise actions, distributional divergence, trust regions, dataset geometry, or generative behavior scores. Their common limitation is that an overly tight constraint inherits the suboptimality of the behavior policy, whereas an overly loose constraint reintroduces OOD actions and extrapolation error \cite{CaoS2024, ZhouZ2024}. Moreover, practical regularization relies on an accurate representation of the behavior policy and an appropriate discrepancy measure. When the behavior distribution is multimodal, long-tailed, or generated by multiple skills, simple deterministic or unimodal Gaussian multi-layer perceptron policies may collapse different action modes and yield inaccurate divergence estimates. In this case, feasible behavior modes can be mistakenly treated as OOD, and the learned policy may either be over-regularized or drift toward unsupported actions, resulting in degraded offline RL performance.

Inspired by the aforementioned analysis, we identify three critical challenges faced by existing offline RL methods: (1) improving the \emph{Q}-value estimation mechanism to avoid both optimistic overestimation and excessive pessimism, (2) implementing appropriate policy regularization to control distribution shift without simply inheriting the suboptimality of the behavior policy, and (3) constructing an expressive policy representation capable of accurately modeling complex behavior distributions while supporting value-guided policy improvement. To address these challenges, this work develops a behavioral advantage corrected policy evaluation (BAC-PE) mechanism, which leverages the \emph{Q}-function of the behavior policy as an adaptive reference for correcting the learned policy's \emph{Q}-function. Moreover, diffusion models are employed to represent both the behavior policy and the learned policy, enabling high-fidelity behavior-policy modeling and \emph{Q}-guided diffusion policy improvement. The key contributions of this work are outlined as follows:
\begin{enumerate}
	\item A behavioral advantage corrected policy evaluation mechanism, termed BAC-PE, is proposed for offline RL. Rather than broadly penalizing unsupported actions, BAC-PE uses the behavior-policy \emph{Q}-value as an adaptive reference to correct the learned critic, thereby mitigating both OOD overestimation and excessive pessimism.
	\item The convergence of BAC-PE is rigorously established via the corrected Bellman operator. An explicit upper bound on the discrepancy between the corrected and true \emph{Q}-functions is further derived, showing its dependence on transition estimation bias and the distributional discrepancy between the learned and behavior policies.
	\item BAC-PE is integrated with diffusion-based behavior-policy modeling and \emph{Q}-guided policy improvement to construct DPBAC. This design combines adaptive critic correction with an expressive diffusion policy class, enabling accurate policy regularization and reward-driven improvement beyond behavior imitation.
	\item Comprehensive experiments are conducted on multiple domains of the D4RL benchmark to compare DPBAC with SOTA offline RL algorithms. Evaluations of task performance, policy expressiveness, sensitivity, sparse-reward learning, and \emph{Q}-value estimation bias demonstrate the effectiveness and superiority of DPBAC.
\end{enumerate}

Compared with representative offline RL methods, the distinctions of DPBAC are elaborated as follows. Unlike value-regularization methods such as CQL, IQL, SCORE, and SVR, which suppress OOD value exploitation through conservative, in-dataset, uncertainty-aware, or support-aware value estimation \cite{KumarA2020, KostrikovI2021, DengZ2024, MaoY2024}, DPBAC introduces BAC-PE to correct the learned critic using the behavior policy \emph{Q}-value as an adaptive reference, thereby mitigating both overestimation and excessive pessimism. Compared with policy-regularization methods such as BC, BCQ, DOGE, and behavior-regularized approaches, which constrain policy learning through imitation losses, generative action constraints, dataset geometry, or divergence/trust-region penalties \cite{Fujimoto2021330, FujimotoS2019, LiJXzhan2023, RanY2023ef, ZhangZ2024}, DPBAC combines diffusion-based behavior modeling with value-guided improvement to maintain dataset consistency while seeking higher-return actions. Unlike diffusion-based offline RL methods such as DQL \cite{Wang172023} and SRPO \cite{ChenH2024}, which mainly employ diffusion models as expressive policy classes or behavior score regularizers, DPBAC further integrates diffusion policy modeling with behavioral advantage corrected critic learning and provides theoretical guarantees for policy evaluation. Thus, DPBAC jointly addresses critic bias correction, expressive behavior modeling, and \emph{Q}-guided policy improvement within a unified offline RL framework.

\section{Preliminaries}\label{d3239neee}
\subsection{Online Reinforcement Learning}
As the mathematical foundation of RL, the Markov decision process is formally expressed by a tuple $(\mathcal{S}, \mathcal{A}, \mathbb{P}, r, \gamma, \varrho_0)$, where $\mathcal{S}$ denotes the state space, $\mathcal{A}$ represents the action space, $\mathbb{P}$ corresponds to the state transition probability, $r$ defines the reward function satisfying $ | r(\bm{s}, \bm{a}) | < \overline{r}$, $\gamma$ specifies the discount factor, and $\varrho_0$ characterizes the initial state distribution.

To derive the optimal policy $\pi^{\ast}(\cdot |\bm{s})$, RL approaches typically involve learning a \emph{Q}-function $Q^{\pi}(\bm{s}, \bm{a})$, which quantifies the expected cumulative reward of a given policy $\pi(\cdot |\bm{s})$ for a given state-action pair $(\bm{s}, \bm{a})$, i.e., $Q^\pi(\bm{s}, \bm{a}) \triangleq \mathbb{E}_{\rho_{\pi}}[\sum^\infty_{i=t}\gamma^{i-t} r( \bm{s}_i, \bm{a}_i) | \bm{s}_t = \bm{s}, \bm{a}_t = \bm{a}]$, where $\rho_{\pi}$ represents the trajectory distribution arising from $\pi$. Notice that $Q^\pi(\bm{s}, \bm{a})$ is uniquely determined as the solution to the classic Bellman equation $Q^{\pi}(\bm{s}, \bm{a}) = (\mathcal{B}^{\pi} Q^{\pi}) (\bm{s}, \bm{a})$, where the Bellman operator $\mathcal{B}^{\pi}$ is formulated as
\begin{align}\label{dano382j0r537g67f4}
	(\mathcal{B}^{\pi} Q) (\bm{s}, \bm{a}) = r(\bm{s}, \bm{a}) + \gamma \mathbf{P}^{\pi} Q(\bm{s}, \bm{a}),
\end{align}
with $\mathbf{P}^{\pi} Q(\bm{s}, \bm{a}) \triangleq \mathbb{E}_{\bm{s}' \sim \mathbb{P}(\cdot |\bm{s}, \bm{a}), \bm{a}' \sim \pi(\cdot |\bm{s}')} [Q(\bm{s}', \bm{a}')]$.

\subsection{Offline Reinforcement Learning}
Unlike online RL, offline RL operates solely on a static dataset $\mathcal{D}$ collected by a behavior policy $\mu(\cdot |\bm{s})$, with no interaction with environment throughout the learning process. As a result, the state transition probability $\mathbb{P}(\cdot |\bm{s}, \bm{a})$ and the Bellman operator $\mathcal{B}^{\pi}$ are not directly accessible in offline RL. To this end, an estimated state transition probability $\widehat{\mathbb{P}}(\cdot |\bm{s}, \bm{a})$ can be approximated based on the given dataset, resulting in an estimated Bellman operator $\widehat{\mathcal{B}}^{\pi}$ defined as
\begin{align}\label{dn38hf7439b3uef}
	\Big(\widehat{\mathcal{B}}^{\pi} Q \Big) (\bm{s}, \bm{a}) = r(\bm{s}, \bm{a}) + \gamma \widehat{\mathbf{P}}^{\pi} Q(\bm{s}, \bm{a}),
\end{align}
where $\widehat{\mathbf{P}}^{\pi} Q(\bm{s}, \bm{a}) \triangleq \mathbb{E}_{\bm{s}' \sim \widehat{\mathbb{P}}(\cdot |\bm{s}, \bm{a}), \bm{a}' \sim \pi(\cdot |\bm{s}')} \big[Q(\bm{s}', \bm{a}')\big]$. Let $\widehat{Q}^{\pi}(\bm{s}, \bm{a})$ denote the unique fixed point of $\widehat{\mathcal{B}}^{\pi}$, i.e., $\widehat{Q}^{\pi}(\bm{s}, \bm{a}) = \big(\widehat{\mathcal{B}}^{\pi} \widehat{Q}^{\pi}\big) (\bm{s}, \bm{a})$. 

\subsection{Diffusion Model}
Diffusion models utilize a forward process of progressive noise injection combined with a reverse denoising mechanism to generate data samples \cite{Ho2020}. By capturing the probabilistic structure of the given data distribution, diffusion models can generate samples that closely match the statistical characteristics of the training data. The core of training diffusion models lies in optimizing a neural network to accurately predict the noise introduced at each step of the forward diffusion process. This ability enables the network to progressively remove the corresponding noise during the reverse denoising steps, ultimately generating samples that closely resemble the original data \cite{ZhangJ2024}.

\section{BAC-PE Analysis and Diffusion Policy Design}\label{d34954keg}
Due to the distribution shift, offline RL may involve the selection of OOD actions, which lack sample support, leading to potential overestimation of the \emph{Q}-function. This can cause the learned policy to diverge from the behavior policy, resulting in degraded performance or even policy failure. To address this issue, offline RL attempts to maximize performance within the data-supported distribution of the behavior policy $\mu(\cdot |\bm{s})$, with the optimization objective being:
\begin{align}\label{dn2o3lr2390rh205}
	\max_{\pi} \mathbb{E}_{\bm{s} \sim \mathcal{D}, \bm{a} \sim \pi(\cdot |\bm{s})} \bigg[\widehat{Q}^{\pi}(\bm{s}, \bm{a}) - \frac{1}{\lambda} \varUpsilon_{KL}\big[\pi(\cdot |\bm{s}) \Vert \mu(\cdot |\bm{s}) \big]\bigg],
\end{align}
where $\lambda$ represents the weighting factor and $\varUpsilon_{KL}(\cdot \|\cdot)$ denotes the KL divergence. As demonstrated in \cite{ChenH2024}, the optimization problem (\ref{dn2o3lr2390rh205}) has an analytical solution given by:
\begin{align}\label{dan9124n58909}
	\pi^{\ast}(\bm{a} |\bm{s}) = \frac{1}{W(\bm{s})} \mu(\bm{a} |\bm{s}) \exp\big(\lambda \widehat{Q}^{\pi}(\bm{s}, \bm{a})\big),
\end{align}
where $W(\bm{s})$ denotes the partition function. As inferred from (\ref{dan9124n58909}), achieving the optimal policy $\pi^{\ast}(\cdot |\bm{s})$ requires accurate modeling of the behavior policy $\mu(\cdot |\bm{s})$ and reliable estimation of the \emph{Q}-function. To this end, we develop a policy evaluation approach aimed at providing accurate and trustworthy \emph{Q}-function estimation, as detailed in Section \ref{da89234h29353590u8}, while facilitating high-fidelity representation of the behavior policy $\mu(\cdot |\bm{s})$ and empowering the learned policy $\pi(\cdot |\bm{s})$ to model complex and multimodal behavior distributions, as elaborated in Section \ref{d2138954766b233}.

\subsection{Behavioral Advantage Corrected Policy Evaluation}\label{da89234h29353590u8}
To concurrently address the issues of pessimistic bias and overestimation in \emph{Q}-function, a policy evaluation method incorporating behavioral advantage correction is designed. Prior to introducing the update law for the \emph{Q}-function, the formal definition of the behavioral advantage is provided as follows.
\begin{definition}
	The behavioral advantage $A^{\mu}_{Q}$, defined as the difference between the \emph{Q}-function of the behavior policy $Q^{\mu}$ and a given \emph{Q}-function $Q(\bm{s}, \bm{a})$, is expressed as:
	\begin{align}\label{dj983th3867893}
		A^{\mu}_{Q}(\bm{s}, \bm{a}) = Q^{\mu}(\bm{s}, \bm{a}) - Q(\bm{s}, \bm{a}), \forall (\bm{s}, \bm{a}).
	\end{align}
\end{definition}
\begin{remark}
	Based on the above definition, a positive behavioral advantage indicates that the \emph{Q}-function of the behavior policy, $Q^{\mu}(\bm{s}, \bm{a})$, suggests a higher expected return for executing the behavior policy at the current state-action pair $(\bm{s}, \bm{a})$. Conversely, a negative behavioral advantage implies a potential overestimation in the given \emph{Q}-function, $Q(\bm{s}, \bm{a})$, for the corresponding state-action pair. Such overestimation may propagate through the successive policy iteration process, potentially compounding errors and leading to a significant deterioration in overall policy performance.
\end{remark}

To address the challenges of pessimistic bias and erroneous overestimation in the learned \emph{Q}-function, the classic Bellman operator $\mathcal{B}^{\pi}$ in (\ref{dano382j0r537g67f4}) is refined by incorporating the behavioral advantage $A^{\mu}_{Q}$ in (\ref{dj983th3867893}). The formulation of the corrected Bellman operator $\mathcal{T}^{\pi}$ is presented as follows:
\begin{align}
	\big(\mathcal{T}^{\pi} Q \big)(\bm{s}, \bm{a}) = \big(\mathcal{B}^{\pi} Q \big) (\bm{s}, \bm{a}) + \eta A^{\mu}_{Q}(\bm{s}, \bm{a}),
\end{align}
where $\eta$ denotes the correction coefficient. In offline settings, where the state transition probability $\mathbb{P}(\cdot |\bm{s}, \bm{a})$ is unavailable, the estimated form of the corrected Bellman operator, denoted by $\widehat{\mathcal{T}}^{\pi}$, is formulated as:
\begin{align}\label{dan73693hr39976g}
	\Big(\widehat{\mathcal{T}}^{\pi} Q \Big)(\bm{s}, \bm{a}) = \Big(\widehat{\mathcal{B}}^{\pi} Q \Big) (\bm{s}, \bm{a}) + \eta \widehat{A}^{\mu}_{Q}(\bm{s}, \bm{a}),
\end{align}
where the estimated behavioral advantage is described as $\widehat{A}^{\mu}_{Q}(\bm{s}, \bm{a}) = \widehat{Q}^{\mu}(\bm{s}, \bm{a}) - Q(\bm{s}, \bm{a})$, and $\widehat{Q}^{\mu}(\bm{s}, \bm{a})$ denotes the estimated \emph{Q}-function of the behavior policy. Motivated by the corrected Bellman operator $\widehat{\mathcal{T}}^{\pi}$ in (\ref{dan73693hr39976g}), a novel iterative rule for policy evaluation is developed:
\begin{align}\label{dn3974t34th39}
	\widetilde{Q}_{k+1}(\bm{s}, \bm{a}) = \Big(\widehat{\mathcal{T}}^{\pi} \widetilde{Q}_{k}\Big)(\bm{s}, \bm{a}), k = 0, 1, 2 \cdots,
\end{align}
where $\widetilde{Q}_{k+1}(\bm{s}, \bm{a})$ denotes the value at each iteration step, starting from the initial iteration value $\widetilde{Q}_{0}(\bm{s}, \bm{a})$.

The vectorized form of the \emph{Q}-function is denoted by the bold symbol $\bm{Q}$, whose dimensions correspond to the product of the state space and action space dimensions. Each component of $\bm{Q}$ represents the \emph{Q}-value associated with a specific $(\bm{s}, \bm{a})$ pair. Accordingly, the vectorized representation of (\ref{dn3974t34th39}) is given by $\widetilde{\bm{Q}}_{k+1} = \widehat{\mathcal{T}}^{\pi} \widetilde{\bm{Q}}_{k}$. Then, the fixed point of $\widehat{\mathcal{T}}^{\pi}$, along with the convergence analysis of the iterative rule (\ref{dn3974t34th39}), is subsequently established in Theorem \ref{194hb5476f4909j33}.
\begin{theorem}\label{194hb5476f4909j33}
	Given a policy $\pi$ and an initial iterative value $\widetilde{\bm{Q}}_{0}$, the sequence of iterative values will converge to $\widetilde{\bm{Q}}^{\pi}$, which satisfies the fixed-point equation $\widetilde{\bm{Q}}^{\pi} = \widehat{\mathcal{T}}^{\pi} \widetilde{\bm{Q}}^{\pi}$, provided that the iterative rule (\ref{dn3974t34th39}) is employed.
\end{theorem}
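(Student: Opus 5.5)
The plan is to show that $\widehat{\mathcal{T}}^{\pi}$ is a contraction in the sup-norm $\|\cdot\|_\infty$ on the space of bounded $Q$-vectors. Banach's fixed-point theorem then gives both existence and uniqueness of $\widetilde{\bm{Q}}^{\pi}$ and convergence of the iteration (\ref{dn3974t34th39}) from any $\widetilde{\bm{Q}}_0$.

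First, I will write the operator in affine vectorized form. Substituting $\widehat{A}^{\mu}_{Q} = \widehat{\bm{Q}}^{\mu} - \bm{Q}$ into (\ref{dan73693hr39976g}) gives
\begin{align*}
\widehat{\mathcal{T}}^{\pi}\bm{Q} = \bm{r} + \eta \widehat{\bm{Q}}^{\mu} + \big(\gamma \widehat{\mathbf{P}}^{\pi} + (1-\eta) I\big)\bm{Q},
\end{align*}
where $\widehat{\mathbf{P}}^{\pi}$ is a row-stochastic matrix, being the composition of $\widehat{\mathbb{P}}$ and $\pi$. Since $|r|<\overline{r}$, $\widehat{\bm{Q}}^{\mu}$ is bounded (it is the fixed point of $\widehat{\mathcal{B}}^{\mu}$, so $\|\widehat{\bm{Q}}^{\mu}\|_\infty\le \overline{r}/(1-\gamma)$). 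Hence $\widehat{\mathcal{T}}^{\pi}$ maps bounded vectors to bounded vectors.

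Next, for two vectors $\bm{Q}_1,\bm{Q}_2$ the constant terms cancel, and
\begin{align*}
\|\widehat{\mathcal{T}}^{\pi}\bm{Q}_1-\widehat{\mathcal{T}}^{\pi}\bm{Q}_2\|_\infty \le \big(\gamma + |1-\eta|\big)\|\bm{Q}_1-\bm{Q}_2\|_\infty ,
\end{align*}
using $\|\widehat{\mathbf{P}}^{\pi}\|_\infty = 1$.

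The main obstacle is that the contraction modulus $\gamma+|1-\eta|$ must be strictly less than one. This forces an admissible range of the correction coefficient, namely $\eta\in(\gamma,\,2-\gamma)$, and for $\eta\in[0,1]$ this reduces to $\eta>\gamma$. I would state this condition explicitly as an implicit standing assumption on $\eta$.

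If a weaker requirement such as $0<\eta\le 1$ is intended, I would instead exploit the nonnegativity of $(1-\eta)I + \gamma\widehat{\mathbf{P}}^{\pi}$. For $\eta\le 1$ this matrix is nonnegative with row sums $1-\eta+\gamma$, so the sup-norm bound above is tight. Its spectral radius is then $1-\eta+\gamma$, and the Neumann series $\sum_k \big((1-\eta)I+\gamma\widehat{\mathbf{P}}^{\pi}\big)^k$ converges exactly when $\eta>\gamma$. So the condition $\eta>\gamma$ cannot be removed, and I would verify that the paper's later choice of $\eta$ respects it.

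Finally, I will identify the fixed point explicitly:
\begin{align*}
\widetilde{\bm{Q}}^{\pi} = \big(\eta I - \gamma\widehat{\mathbf{P}}^{\pi}\big)^{-1}\big(\bm{r}+\eta\widehat{\bm{Q}}^{\mu}\big).
\end{align*}
Invertibility follows from the same contraction estimate. Unrolling the iteration gives $\|\widetilde{\bm{Q}}_k-\widetilde{\bm{Q}}^{\pi}\|_\infty\le(\gamma+|1-\eta|)^k\|\widetilde{\bm{Q}}_0-\widetilde{\bm{Q}}^{\pi}\|_\infty\to 0$, which proves the theorem. This closed form will also be convenient for the subsequent error bound against $\bm{Q}^{\pi}$.
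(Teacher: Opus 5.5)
There is a genuine gap, and it is in your first step: the affine form of $\widehat{\mathcal{T}}^{\pi}$ is wrong. From (\ref{dan73693hr39976g}),
\[
\widehat{\mathcal{T}}^{\pi}\bm{Q} = \bm{r} + \gamma\widehat{\mathbf{P}}^{\pi}\bm{Q} + \eta\big(\widehat{\bm{Q}}^{\mu} - \bm{Q}\big) = \bm{r} + \eta\widehat{\bm{Q}}^{\mu} + \big(\gamma\widehat{\mathbf{P}}^{\pi} - \eta\bm{I}\big)\bm{Q},
\]
because $\widehat{\mathcal{B}}^{\pi}\bm{Q}$ contains no identity term in $\bm{Q}$. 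Your linear part $\gamma\widehat{\mathbf{P}}^{\pi} + (1-\eta)\bm{I}$ describes the operator $\bm{Q}\mapsto\widehat{\mathcal{T}}^{\pi}\bm{Q}+\bm{Q}$, not the one iterated in (\ref{dn3974t34th39}). Everything downstream inherits this slip. The Lipschitz constant is $\gamma+\eta$, which is exactly the paper's $\widetilde{\gamma}$, not $\gamma+|1-\eta|$. The fixed point is $\widetilde{\bm{Q}}^{\pi} = \big((1+\eta)\bm{I}-\gamma\widehat{\mathbf{P}}^{\pi}\big)^{-1}\big(\bm{r}+\eta\widehat{\bm{Q}}^{\mu}\big)$, which exists and is unique for every $\eta\ge 0$; it is not $\big(\eta\bm{I}-\gamma\widehat{\mathbf{P}}^{\pi}\big)^{-1}\big(\bm{r}+\eta\widehat{\bm{Q}}^{\mu}\big)$. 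Your Perron--Frobenius/Neumann-series argument that $\eta>\gamma$ ``cannot be removed'' therefore concerns a matrix that never appears in the iteration.

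The resulting condition is not just a different sufficient condition; it points in the wrong direction. The sup-norm argument needs $\gamma+\eta<1$, i.e.\ $\eta<1-\gamma$, and this is sharp over all kernels. For a deterministic two-state cycle, $\widehat{\mathbf{P}}^{\pi}$ has eigenvalue $-1$, so $\gamma\widehat{\mathbf{P}}^{\pi}-\eta\bm{I}$ has eigenvalue $-(\gamma+\eta)$. The iteration then fails to converge from generic $\widetilde{\bm{Q}}_0$ whenever $\gamma+\eta\ge 1$, which covers your entire proposed range $\eta\in(\gamma,2-\gamma)$ as soon as $\gamma\ge 1/2$.

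Once the algebra is corrected, your plan is exactly the paper's proof. With $\bm{\Delta}=\bm{Q}_1-\bm{Q}_2$, bound $|\gamma\,\widehat{\mathbf{P}}^{\pi}\bm{\Delta}-\eta\bm{\Delta}|$ pointwise by $(\gamma+\eta)\|\bm{\Delta}\|_\infty$, invoke Banach's theorem, and unroll to the factor $\widetilde{\gamma}^{k+1}$. Your instinct to state the restriction on $\eta$ explicitly is sound. The paper tacitly needs $\widetilde{\gamma}<1$ here, and again in the $1/(1-\widetilde{\gamma})$ factors of Theorems \ref{sadn239er823rh923} and \ref{dmff2398472}, without saying so. However, the condition to check against the chosen $\eta=0.35$ is $\eta<1-\gamma$ (which holds only for $\gamma<0.65$), not $\eta>\gamma$.
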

\begin{IEEEproof}
	See Appendix \ref{proof_of_theorem_1}.
\end{IEEEproof}

Following this, when the estimation bias in the state transition probability $\widehat{\mathbb{P}}(\cdot |\bm{s}, \bm{a})$ is eliminated, $\widehat{\mathcal{T}}^{\pi}$ degenerates into $\mathcal{T}^{\pi}$. The contraction mapping property of $\mathcal{T}^{\pi}$, along with its fixed point $\overline{\bm{Q}}^{\pi}$, is presented in the subsequent corollary.
\begin{corollary}\label{dn7539hr339h8}
	The corrected Bellman operator $\mathcal{T}^{\pi}$ is a contraction mapping, with its fixed-point equation formulated as $\overline{\bm{Q}}^{\pi} = \mathcal{T}^{\pi} \overline{\bm{Q}}^{\pi}$.
\end{corollary}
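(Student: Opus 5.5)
The plan is to derive Corollary \ref{dn7539hr339h8} the same way Theorem \ref{194hb5476f4909j33} is obtained, but with the true transition kernel $\mathbb{P}$ in place of $\widehat{\mathbb{P}}$. I will show directly that $\mathcal{T}^{\pi}$ is a contraction in the sup-norm $\|\cdot\|_\infty$ on the space of bounded \emph{Q}-vectors. Banach's fixed-point theorem then gives the existence and uniqueness of $\overline{\bm{Q}}^{\pi}$ with $\overline{\bm{Q}}^{\pi}=\mathcal{T}^{\pi}\overline{\bm{Q}}^{\pi}$.

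\textbf{Step 1: $\mathcal{T}^{\pi}$ maps bounded vectors to bounded vectors.} Since $|r|<\overline{r}$, $Q^{\mu}$ is bounded by $\overline{r}/(1-\gamma)$. The map $\bm{Q}\mapsto r+\gamma\mathbf{P}^{\pi}\bm{Q}+\eta(\bm{Q}^{\mu}-\bm{Q})$ is affine, so the image of a bounded vector is bounded. The space of bounded vectors with $\|\cdot\|_\infty$ is complete.

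\textbf{Step 2: the contraction estimate.} For any two vectors $\bm{Q}_1,\bm{Q}_2$, the reward and the behavior term $\eta\bm{Q}^{\mu}$ cancel, leaving
\begin{align*}
\mathcal{T}^{\pi}\bm{Q}_1-\mathcal{T}^{\pi}\bm{Q}_2=\gamma\mathbf{P}^{\pi}(\bm{Q}_1-\bm{Q}_2)-\eta(\bm{Q}_1-\bm{Q}_2).
\end{align*}
$\mathbf{P}^{\pi}$ is an expectation under the probability measure $\mathbb{P}(\cdot|\bm{s},\bm{a})\pi(\cdot|\bm{s}')$, so $\|\mathbf{P}^{\pi}\Delta\|_\infty\le\|\Delta\|_\infty$. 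The triangle inequality then yields
\begin{align*}
\|\mathcal{T}^{\pi}\bm{Q}_1-\mathcal{T}^{\pi}\bm{Q}_2\|_\infty\le(\gamma+|\eta|)\|\bm{Q}_1-\bm{Q}_2\|_\infty .
\end{align*}
Hence $\mathcal{T}^{\pi}$ is a contraction with modulus $\gamma+|\eta|$ whenever the correction coefficient satisfies $|\eta|<1-\gamma$. I will take this admissibility condition on $\eta$ to be the same one under which Theorem \ref{194hb5476f4909j33} holds, since the estimate there is identical with $\widehat{\mathbf{P}}^{\pi}$ replacing $\mathbf{P}^{\pi}$.

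\textbf{Step 3: fixed point.} Banach's theorem gives a unique $\overline{\bm{Q}}^{\pi}$ satisfying $\overline{\bm{Q}}^{\pi}=\mathcal{T}^{\pi}\overline{\bm{Q}}^{\pi}$. Iterating $\mathcal{T}^{\pi}$ from any start converges to it geometrically. Solving the linear fixed-point equation also gives the closed form
\begin{align*}
\overline{\bm{Q}}^{\pi}=\big((1+\eta)I-\gamma\mathbf{P}^{\pi}\big)^{-1}(r+\eta\bm{Q}^{\mu}).
\end{align*}
This is a useful consistency check and is the form I expect the later error bound to use.

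The main obstacle is that the sup-norm modulus $\gamma+|\eta|$ cannot be improved in general. A difference vector whose successor values are sign-flipped relative to it attains the bound $(\gamma+|\eta|)\|\Delta\|_\infty$. So the corollary genuinely needs the restriction on $\eta$, and I would state that restriction explicitly rather than try to prove contraction for arbitrary $\eta$. If the restriction looks too strong, the fallback is to note that uniqueness of the fixed point and the closed form above need only invertibility of $(1+\eta)I-\gamma\mathbf{P}^{\pi}$, which holds for all $\eta>\gamma-1$. However, the statement as worded, that $\mathcal{T}^{\pi}$ is a contraction, is proved only under $|\eta|<1-\gamma$.
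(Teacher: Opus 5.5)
Your proposal is correct and takes essentially the paper's route: the paper proves the corollary by rerunning the sup-norm estimate of Theorem~\ref{194hb5476f4909j33} (Appendix~\ref{proof_of_theorem_1}) with $\mathbb{P}$ in place of $\widehat{\mathbb{P}}$, which gives modulus $\widetilde{\gamma}=\gamma+\eta$, and then applies the same contraction and fixed-point argument as your Steps 2--3. The paper never states that it needs $\widetilde{\gamma}<1$ (i.e., $\eta<1-\gamma$ for $\eta\ge 0$), so your explicit admissibility condition, the closed form, and the invertibility remark make precise what the paper leaves implicit rather than constituting a different proof.
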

\begin{IEEEproof}
	The proof of Corollary \ref{dn7539hr339h8} can be readily derived from that of Theorem \ref{194hb5476f4909j33} by substituting $\widehat{\mathcal{T}}^{\pi}$ with $\mathcal{T}^{\pi}$.
\end{IEEEproof}

Theorem \ref{194hb5476f4909j33} presents the convergence and fixed-point properties of the iterative rule (\ref{dn3974t34th39}). However, the relationship between the fixed points, $\widetilde{\bm{Q}}^{\pi}$ and $\bm{Q}^{\pi}$, which are induced by the operators $\widehat{\mathcal{T}}^{\pi}$ and $\mathcal{B}^{\pi}$ respectively, is further investigated in Theorem \ref{dn239r8324r73840jfd9df}.
\begin{theorem}\label{dn239r8324r73840jfd9df}
	The relationship between the fixed point $\widetilde{\bm{Q}}^{\pi}$ of the operator $\widehat{\mathcal{T}}^{\pi}$ and the fixed point $\bm{Q}^{\pi}$ of the operator $\mathcal{B}^{\pi}$ is characterized as follows:
	\begin{align}\label{dan392893hr83}
		\widetilde{\bm{Q}}^{\pi} = \bm{Q}^{\pi} & + \gamma (\bm{I} - \gamma \mathbf{P}^{\pi})^{-1} \big(\widehat{\mathbf{P}}^{\pi} - \mathbf{P}^{\pi}\big) \widetilde{\bm{Q}}^{\pi} \nonumber \\ & \ \ \ \ \ \ + \eta (\bm{I} - \gamma \mathbf{P}^{\pi})^{-1} \big(\widehat{\bm{Q}}^{\mu} - \widetilde{\bm{Q}}^{\pi}\big).
	\end{align}
\end{theorem}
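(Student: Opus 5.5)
The plan is to start from the fixed-point equation of Theorem \ref{194hb5476f4909j33} and subtract the Bellman equation for $\bm{Q}^{\pi}$, then invert $(\bm{I}-\gamma\mathbf{P}^{\pi})$. By Theorem \ref{194hb5476f4909j33}, $\widetilde{\bm{Q}}^{\pi} = \widehat{\mathcal{T}}^{\pi}\widetilde{\bm{Q}}^{\pi}$. Unfolding (\ref{dan73693hr39976g}) and (\ref{dn38hf7439b3uef}) in vectorized form gives
\begin{align*}
\widetilde{\bm{Q}}^{\pi} = \bm{r} + \gamma \widehat{\mathbf{P}}^{\pi}\widetilde{\bm{Q}}^{\pi} + \eta\big(\widehat{\bm{Q}}^{\mu} - \widetilde{\bm{Q}}^{\pi}\big).
\end{align*}
Here $\mathbf{P}^{\pi}$ and $\widehat{\mathbf{P}}^{\pi}$ are understood as the (row-stochastic) state-action transition matrices acting on $\bm{Q}$. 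The Bellman equation for the true $Q$-function reads $\bm{Q}^{\pi} = \bm{r} + \gamma\mathbf{P}^{\pi}\bm{Q}^{\pi}$.

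The key algebraic step is to add and subtract $\gamma\mathbf{P}^{\pi}\widetilde{\bm{Q}}^{\pi}$ in the first identity:
\begin{align*}
\widetilde{\bm{Q}}^{\pi} = \bm{r} + \gamma \mathbf{P}^{\pi}\widetilde{\bm{Q}}^{\pi} + \gamma\big(\widehat{\mathbf{P}}^{\pi}-\mathbf{P}^{\pi}\big)\widetilde{\bm{Q}}^{\pi} + \eta\big(\widehat{\bm{Q}}^{\mu} - \widetilde{\bm{Q}}^{\pi}\big).
\end{align*}
Moving the $\gamma\mathbf{P}^{\pi}\widetilde{\bm{Q}}^{\pi}$ term to the left yields $(\bm{I}-\gamma\mathbf{P}^{\pi})\widetilde{\bm{Q}}^{\pi} = \bm{r} + \gamma(\widehat{\mathbf{P}}^{\pi}-\mathbf{P}^{\pi})\widetilde{\bm{Q}}^{\pi} + \eta(\widehat{\bm{Q}}^{\mu}-\widetilde{\bm{Q}}^{\pi})$. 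Similarly, $(\bm{I}-\gamma\mathbf{P}^{\pi})\bm{Q}^{\pi} = \bm{r}$, so $\bm{Q}^{\pi} = (\bm{I}-\gamma\mathbf{P}^{\pi})^{-1}\bm{r}$.

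I would then justify invertibility of $\bm{I}-\gamma\mathbf{P}^{\pi}$. Since $\mathbf{P}^{\pi}$ is row-stochastic, $\|\gamma\mathbf{P}^{\pi}\|_\infty = \gamma < 1$, so the Neumann series $\sum_{i\ge 0}\gamma^i(\mathbf{P}^{\pi})^i$ converges and equals the inverse. Multiplying the rearranged identity on the left by $(\bm{I}-\gamma\mathbf{P}^{\pi})^{-1}$ and replacing $(\bm{I}-\gamma\mathbf{P}^{\pi})^{-1}\bm{r}$ by $\bm{Q}^{\pi}$ gives exactly (\ref{dan392893hr83}).

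The only step needing care is the well-posedness of the objects involved. One must ensure that $\widetilde{\bm{Q}}^{\pi}$ exists and is bounded, which Theorem \ref{194hb5476f4909j33} supplies, presumably under $0\le\eta<1$ or whatever condition the contraction proof used. One must also ensure the inverse exists, which follows from the Neumann-series argument. Everything else is linear rearrangement.
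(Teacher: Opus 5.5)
Your proposal is correct and follows essentially the same route as the paper: take the fixed-point equation $\widetilde{\bm{Q}}^{\pi} = \bm{r} + \gamma\widehat{\mathbf{P}}^{\pi}\widetilde{\bm{Q}}^{\pi} + \eta(\widehat{\bm{Q}}^{\mu}-\widetilde{\bm{Q}}^{\pi})$, add and subtract $\gamma\mathbf{P}^{\pi}\widetilde{\bm{Q}}^{\pi}$, and left-multiply by $(\bm{I}-\gamma\mathbf{P}^{\pi})^{-1}$. The paper substitutes $\bm{r} = (\bm{I}-\gamma\mathbf{P}^{\pi})\bm{Q}^{\pi}$ before rearranging rather than after inverting, which makes no difference, and your Neumann-series justification of invertibility is a welcome addition that the paper omits.

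One small correction to your side remark: the contraction constant in Theorem~\ref{194hb5476f4909j33} is $\widetilde{\gamma}=\gamma+\eta$, so the implicit condition there is $\gamma+\eta<1$, not $\eta<1$. For this identity, however, you only need the fixed point to exist, and the linear equation $((1+\eta)\bm{I}-\gamma\widehat{\mathbf{P}}^{\pi})\widetilde{\bm{Q}}^{\pi} = \bm{r}+\eta\widehat{\bm{Q}}^{\mu}$ guarantees that for any $\eta\ge 0$.
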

\begin{IEEEproof}
	See Appendix \ref{proof_of_theorem_2}.
\end{IEEEproof}

Assuming that the state transition probability of the environment can be accurately estimated, i.e., $\widehat{\mathbb{P}}(\cdot |\bm{s}, \bm{a}) = \mathbb{P}(\cdot |\bm{s}, \bm{a})$, the equivalence between $\widehat{\mathcal{T}}^{\pi}$ and $\mathcal{T}^{\pi}$ holds, thereby yielding the following corollary concerning the fixed point $\overline{\bm{Q}}^{\pi}$ of $\mathcal{T}^{\pi}$.
\begin{corollary}
	The relationship between $\overline{\bm{Q}}^{\pi}$ and $\bm{Q}^{\pi}$ is characterized by $\overline{\bm{Q}}^{\pi} = \bm{Q}^{\pi} + \eta (\bm{I} - \gamma \mathbf{P}^{\pi})^{-1} \big(\bm{Q}^{\mu} - \overline{\bm{Q}}^{\pi}\big)$.
\end{corollary}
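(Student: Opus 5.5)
The corollary follows by specializing Theorem \ref{dn239r8324r73840jfd9df} to $\widehat{\mathbb{P}}(\cdot |\bm{s}, \bm{a}) = \mathbb{P}(\cdot |\bm{s}, \bm{a})$. It can also be checked directly from the fixed-point equation of Corollary \ref{dn7539hr339h8}, which is the route we take so that the argument is self-contained.

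\textbf{Step 1: the estimated quantities reduce to the true ones.} Under exact transitions we have $\widehat{\mathbf{P}}^{\pi} = \mathbf{P}^{\pi}$. We also read the estimated behavior $Q$-function as the exact one, $\widehat{\bm{Q}}^{\mu} = \bm{Q}^{\mu}$. This is how we interpret the statement: $\widehat{\bm{Q}}^{\mu}$ is the fixed point of $\widehat{\mathcal{B}}^{\mu}$, so with exact transitions it coincides with the fixed point of $\mathcal{B}^{\mu}$, namely $\bm{Q}^{\mu}$. Consequently $\widehat{\mathcal{T}}^{\pi} = \mathcal{T}^{\pi}$, and by uniqueness of the fixed point (Theorem \ref{194hb5476f4909j33} and Corollary \ref{dn7539hr339h8}) we get $\widetilde{\bm{Q}}^{\pi} = \overline{\bm{Q}}^{\pi}$.

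\textbf{Step 2: expand the fixed-point equation of $\mathcal{T}^{\pi}$.} In vector form, $\overline{\bm{Q}}^{\pi} = \mathcal{T}^{\pi} \overline{\bm{Q}}^{\pi}$ reads
\begin{align*}
\overline{\bm{Q}}^{\pi} = \bm{r} + \gamma \mathbf{P}^{\pi} \overline{\bm{Q}}^{\pi} + \eta\big(\bm{Q}^{\mu} - \overline{\bm{Q}}^{\pi}\big),
\end{align*}
where $\bm{r}$ is the vectorized reward. Moving the transition term to the left gives
\begin{align*}
(\bm{I} - \gamma \mathbf{P}^{\pi}) \overline{\bm{Q}}^{\pi} = \bm{r} + \eta\big(\bm{Q}^{\mu} - \overline{\bm{Q}}^{\pi}\big).
\end{align*}

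\textbf{Step 3: invert and identify $\bm{Q}^{\pi}$.} The matrix $\bm{I} - \gamma \mathbf{P}^{\pi}$ is invertible because $\mathbf{P}^{\pi}$ is row-stochastic and $\gamma<1$, so its Neumann series converges. Multiplying both sides by $(\bm{I} - \gamma \mathbf{P}^{\pi})^{-1}$, and using that $\bm{Q}^{\pi} = (\bm{I} - \gamma \mathbf{P}^{\pi})^{-1}\bm{r}$ is the fixed point of $\mathcal{B}^{\pi}$, yields
\begin{align*}
\overline{\bm{Q}}^{\pi} = \bm{Q}^{\pi} + \eta (\bm{I} - \gamma \mathbf{P}^{\pi})^{-1}\big(\bm{Q}^{\mu} - \overline{\bm{Q}}^{\pi}\big),
\end{align*}
which is the claimed relation.

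Equivalently, substituting $\widehat{\mathbf{P}}^{\pi} = \mathbf{P}^{\pi}$ into (\ref{dan392893hr83}) makes the middle term vanish. Together with $\widehat{\bm{Q}}^{\mu} = \bm{Q}^{\mu}$ and $\widetilde{\bm{Q}}^{\pi} = \overline{\bm{Q}}^{\pi}$ from Step 1, this gives the same result.

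The only point requiring care is the identification $\widehat{\bm{Q}}^{\mu} = \bm{Q}^{\mu}$ in Step 1. It is the reading under which the statement holds, since the estimated behavior $Q$-function is itself defined through the estimated transitions. It should be stated explicitly so that the result is not conflated with an approximation error in $\widehat{\bm{Q}}^{\mu}$.
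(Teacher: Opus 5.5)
Your proof is correct and takes essentially the paper's approach: the paper obtains the corollary in one line by setting $\widehat{\mathbf{P}}^{\pi} = \mathbf{P}^{\pi}$ and replacing $\widetilde{\bm{Q}}^{\pi}$ with $\overline{\bm{Q}}^{\pi}$ in (\ref{dan392893hr83}), your Steps 2--3 simply rerun the Appendix~\ref{proof_of_theorem_2} computation with exact transitions, and your explicit substitution $\widehat{\bm{Q}}^{\mu} = \bm{Q}^{\mu}$ fills in a step the paper leaves tacit. For the direct route you do not need Step 1 at all, since $\mathcal{T}^{\pi}$ is already defined with the true $\bm{Q}^{\mu}$ and every fixed point of $\mathcal{T}^{\pi}$ satisfies the identity; Step 1 matters only when transferring Theorem~\ref{dn239r8324r73840jfd9df}.
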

\begin{IEEEproof}
	The proof can be established by setting $\widehat{\mathbf{P}}^{\pi} = \mathbf{P}^{\pi}$ and replacing $\widetilde{\bm{Q}}^{\pi}$ with $\overline{\bm{Q}}^{\pi}$ in (\ref{dan392893hr83}).
\end{IEEEproof}

To formally quantify the discrepancy between $\overline{\bm{Q}}^{\pi}$ and $\bm{Q}^{\pi}$, Theorem \ref{sadn239er823rh923} presents a rigorous analysis of the associated bound. Before establishing this result, Lemma \ref{en32976dg37239922} is introduced to provide the necessary preliminary results for the subsequent derivation.
\begin{lemma}\label{en32976dg37239922}
	\cite{HuangLTPAMI2024} Given a policy $\pi$, the difference between its \emph{Q}-function, $\bm{Q}^{\pi}$, and that of the behavior policy, $\bm{Q}^{\mu}$, is bounded as follows:
	\begin{align}\label{d29387reh2937444}
		\big\| \bm{Q}^{\pi} - \bm{Q}^{\mu} \big\|_{\infty} \leq \frac{2\gamma \overline{r}}{(1-\gamma)^2} \max_{\bm{s}} \varUpsilon_{TV}\big[\pi(\cdot |\bm{s}) \Vert \mu(\cdot |\bm{s}) \big],
	\end{align}
	where $\varUpsilon_{TV}(\cdot \|\cdot)$ refers to the total variation divergence.
\end{lemma}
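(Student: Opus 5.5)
The plan is the standard performance-difference (simulation-lemma) argument. It first expresses $\bm{Q}^{\pi}-\bm{Q}^{\mu}$ exactly through the resolvent $(\bm{I}-\gamma\mathbf{P}^{\pi})^{-1}$. It then bounds the one-step discrepancy between the two policies by the total variation distance and the sup-norm of $\bm{Q}^{\mu}$.

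Both functions are fixed points of their Bellman operators: $\bm{Q}^{\pi}=\bm{r}+\gamma\mathbf{P}^{\pi}\bm{Q}^{\pi}$ and $\bm{Q}^{\mu}=\bm{r}+\gamma\mathbf{P}^{\mu}\bm{Q}^{\mu}$. Subtracting and adding $\gamma\mathbf{P}^{\pi}\bm{Q}^{\mu}$ gives
\begin{align*}
\bm{Q}^{\pi}-\bm{Q}^{\mu}=\gamma(\bm{I}-\gamma\mathbf{P}^{\pi})^{-1}\big(\mathbf{P}^{\pi}-\mathbf{P}^{\mu}\big)\bm{Q}^{\mu}.
\end{align*}
The inverse exists because $\mathbf{P}^{\pi}$ is a row-stochastic operator. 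Its Neumann series $\sum_{t\ge0}\gamma^t(\mathbf{P}^{\pi})^t$ gives $\|(\bm{I}-\gamma\mathbf{P}^{\pi})^{-1}\|_{\infty}\le 1/(1-\gamma)$. This supplies one factor of $1/(1-\gamma)$ together with the leading $\gamma$.

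Next, fix $(\bm{s},\bm{a})$. Then
\begin{align*}
\big[(\mathbf{P}^{\pi}-\mathbf{P}^{\mu})\bm{Q}^{\mu}\big](\bm{s},\bm{a})=\mathbb{E}_{\bm{s}'\sim\mathbb{P}(\cdot|\bm{s},\bm{a})}\Big[\textstyle\sum_{\bm{a}'}\big(\pi(\bm{a}'|\bm{s}')-\mu(\bm{a}'|\bm{s}')\big)Q^{\mu}(\bm{s}',\bm{a}')\Big],
\end{align*}
with an integral in place of the sum for continuous actions. The inner term is at most $\|\pi(\cdot|\bm{s}')-\mu(\cdot|\bm{s}')\|_1\,\|\bm{Q}^{\mu}\|_{\infty}$. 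The $L_1$ norm equals $2\varUpsilon_{TV}[\pi(\cdot|\bm{s}')\Vert\mu(\cdot|\bm{s}')]$ under the convention $\varUpsilon_{TV}=\tfrac12\|\cdot\|_1$, which is where the factor $2$ comes from. Bounding by $\max_{\bm{s}}$ removes the expectation over $\bm{s}'$. Since $|r|<\overline{r}$, we have $\|\bm{Q}^{\mu}\|_{\infty}\le\overline{r}/(1-\gamma)$, which gives the second factor of $1/(1-\gamma)$. Multiplying the three factors yields exactly $\frac{2\gamma\overline{r}}{(1-\gamma)^2}\max_{\bm{s}}\varUpsilon_{TV}[\pi(\cdot|\bm{s})\Vert\mu(\cdot|\bm{s})]$.

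The main point needing care is the constant, not the structure. The argument uses the crude bound $|\sum(\pi-\mu)Q^{\mu}|\le\|\pi-\mu\|_1\|Q^{\mu}\|_\infty$ rather than the sharper centering trick. Centering would give $\|\pi-\mu\|_1\cdot\tfrac12\,\mathrm{span}(Q^{\mu})$ and could save a factor, but that is unnecessary here. The remaining obligation is to check that the TV convention matches the stated factor $2$. If instead $\varUpsilon_{TV}$ is taken as the full $L_1$ distance, the bound still holds and is simply loose by a factor of two. Either way the inequality is valid, and since the lemma is cited from \cite{HuangLTPAMI2024}, I would present this derivation briefly for completeness.
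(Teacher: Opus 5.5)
Your argument is correct: the resolvent identity $\bm{Q}^{\pi}-\bm{Q}^{\mu}=\gamma(\bm{I}-\gamma\mathbf{P}^{\pi})^{-1}(\mathbf{P}^{\pi}-\mathbf{P}^{\mu})\bm{Q}^{\mu}$, the Neumann-series bound $1/(1-\gamma)$ on the inverse, and the estimate $\|(\mathbf{P}^{\pi}-\mathbf{P}^{\mu})\bm{Q}^{\mu}\|_{\infty}\le 2\max_{\bm{s}}\varUpsilon_{TV}[\pi(\cdot|\bm{s})\Vert\mu(\cdot|\bm{s})]\,\overline{r}/(1-\gamma)$ multiply to exactly the stated constant. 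The paper does not prove this lemma itself; it is imported from \cite{HuangLTPAMI2024}. Your route is nonetheless the same technique the paper uses in its proof of Lemma~\ref{d2n32983898y74bbnf}, with a policy mismatch in place of a transition mismatch, and your convention $\|\cdot\|_1=2\varUpsilon_{TV}$ matches the one used there.
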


Building upon Lemma \ref{en32976dg37239922}, Theorem \ref{sadn239er823rh923} formally characterizes the difference between $\overline{\bm{Q}}^{\pi}$ and $\bm{Q}^{\pi}$ as follows.
\begin{theorem}\label{sadn239er823rh923}
	For an arbitrary policy $\pi$, the difference between $\overline{\bm{Q}}^{\pi}$ and $\bm{Q}^{\pi}$ is governed by the following bound:
	\begin{align}\label{en238437gbh48784}
		\Big\| \overline{\bm{Q}}^{\pi} - \bm{Q}^{\pi} \Big\|_{\infty} \leq 2 \eta \gamma \overline{r} \cdot \frac{\max_{\bm{s}} \varUpsilon_{TV}\big[\pi(\cdot |\bm{s}) \Vert \mu(\cdot |\bm{s}) \big]}{(1 - \widetilde{\gamma})(1 - \gamma)^2}. 
	\end{align}
\end{theorem}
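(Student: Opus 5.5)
The plan is to work directly with the two fixed-point equations rather than inverting $(\bm{I}-\gamma\mathbf{P}^{\pi})$. This avoids the $1/(1-\gamma)$ blow-up that comes from bounding the resolvent in the Corollary's representation.

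First, write both fixed points explicitly. By Corollary \ref{dn7539hr339h8}, $\overline{\bm{Q}}^{\pi} = \bm{r} + \gamma\mathbf{P}^{\pi}\overline{\bm{Q}}^{\pi} + \eta(\bm{Q}^{\mu}-\overline{\bm{Q}}^{\pi})$. The classical Bellman equation gives $\bm{Q}^{\pi} = \bm{r} + \gamma\mathbf{P}^{\pi}\bm{Q}^{\pi}$. Set $\bm{D} = \overline{\bm{Q}}^{\pi}-\bm{Q}^{\pi}$ and subtract the two equations. The reward cancels, and after rewriting $\bm{Q}^{\mu}-\overline{\bm{Q}}^{\pi} = (\bm{Q}^{\mu}-\bm{Q}^{\pi}) - \bm{D}$ we get
\begin{align*}
\bm{D} = (\gamma\mathbf{P}^{\pi} - \eta\bm{I})\bm{D} + \eta(\bm{Q}^{\mu}-\bm{Q}^{\pi}).
\end{align*}

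Second, take $\|\cdot\|_\infty$. Since $\mathbf{P}^{\pi}$ is a stochastic operator, $\|\mathbf{P}^{\pi}\bm{D}\|_\infty\le\|\bm{D}\|_\infty$. Hence $\|(\gamma\mathbf{P}^{\pi}-\eta\bm{I})\bm{D}\|_\infty \le \widetilde{\gamma}\|\bm{D}\|_\infty$, where $\widetilde{\gamma}$ is the contraction modulus established for the corrected operator in the proof of Theorem \ref{194hb5476f4909j33}. I expect this to be $\widetilde{\gamma}=\gamma+\eta<1$ for $\eta>0$. Alternatively, moving $\eta\bm{D}$ to the left gives the sharper $(1+\eta)\|\bm{D}\|_\infty \le \gamma\|\bm{D}\|_\infty + \eta\|\bm{Q}^{\mu}-\bm{Q}^{\pi}\|_\infty$, which corresponds to modulus $\gamma/(1+\eta)$. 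In either form we obtain
\begin{align*}
\|\bm{D}\|_\infty \le \frac{\eta}{1-\widetilde{\gamma}}\,\|\bm{Q}^{\pi}-\bm{Q}^{\mu}\|_\infty.
\end{align*}
For the sharper form, $\eta/(1+\eta-\gamma)\le \eta/(1-\widetilde{\gamma})$ holds for any admissible $\widetilde{\gamma}\ge\gamma-\eta$, so the stated constant is dominated in both cases.

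Third, invoke Lemma \ref{en32976dg37239922} to bound $\|\bm{Q}^{\pi}-\bm{Q}^{\mu}\|_\infty$ by $\frac{2\gamma\overline{r}}{(1-\gamma)^2}\max_{\bm{s}}\varUpsilon_{TV}[\pi(\cdot|\bm{s})\Vert\mu(\cdot|\bm{s})]$. Multiplying by $\eta/(1-\widetilde{\gamma})$ gives exactly (\ref{en238437gbh48784}).

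The main obstacle is bookkeeping rather than analysis. We must fix the precise definition of $\widetilde{\gamma}$ used in Theorem \ref{194hb5476f4909j33} and check that the operator-norm estimate $\|\gamma\mathbf{P}^{\pi}-\eta\bm{I}\|_\infty\le\widetilde{\gamma}<1$ holds under the paper's assumptions on $\eta$ (e.g.\ $0<\eta<1-\gamma$). If instead $\widetilde{\gamma}$ is defined as $|1-\eta|+\gamma$ or similar, the second step is the one to adjust. I would first try the rearranged $(1+\eta)$ form, which needs no smallness condition on $\eta$, and then relax it to whatever $\widetilde{\gamma}$ the paper uses.
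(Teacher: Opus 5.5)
Your proposal is correct and is essentially the paper's own proof. The paper likewise substitutes $r$ from the Bellman equation of $\bm{Q}^{\pi}$ into the fixed-point equation of $\mathcal{T}^{\pi}$, adds and subtracts $\eta Q^{\pi}$ to get (pointwise) $\bm{D}=(\gamma\mathbf{P}^{\pi}-\eta\bm{I})\bm{D}+\eta(\bm{Q}^{\mu}-\bm{Q}^{\pi})$, bounds the first term by $\widetilde{\gamma}\|\bm{D}\|_\infty$ with $\widetilde{\gamma}=\gamma+\eta$ exactly as you guessed, rearranges to the factor $\eta/(1-\widetilde{\gamma})$, and finishes with Lemma~\ref{en32976dg37239922}. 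Your $(1+\eta)$ rearrangement is a valid sharpening the paper does not use: it gives $\eta/(1+\eta-\gamma)$ and needs no condition $\gamma+\eta<1$, a condition the stated bound tacitly requires to be meaningful. Your motivating remark about a $1/(1-\gamma)$ blow-up is not accurate, though: bounding $\|(\bm{I}-\gamma\mathbf{P}^{\pi})^{-1}\|_\infty=1/(1-\gamma)$ in the resolvent representation of $\overline{\bm{Q}}^{\pi}$ and rearranging yields the same constant $\eta/(1-\gamma-\eta)$.
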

\begin{IEEEproof}
	See Appendix \ref{proof_of_theorem_3}.
\end{IEEEproof}

Furthermore, taking into account the estimation bias between $\widehat{\mathbb{P}}(\cdot |\bm{s}, \bm{a})$ and $\mathbb{P}(\cdot |\bm{s}, \bm{a})$, the difference between $\widetilde{\bm{Q}}^{\pi}$ and $\bm{Q}^{\pi}$ is analyzed in Theorem \ref{dmff2398472}. In preparation for this analysis, Lemma \ref{d2n32983898y74bbnf} provides the requisite preliminary results as follows.
\begin{lemma}\label{d2n32983898y74bbnf}
	Considering the estimation bias in $\widehat{\mathbb{P}}(\cdot |\bm{s}, \bm{a})$, the difference between $\bm{Q}^{\pi}$ and the estimated \emph{Q}-function of behavior policy, $\widehat{\bm{Q}}^{\mu}$, is characterized by:
	\begin{align}\label{dj2903r83b8h939hn34}
		& \Big\| \bm{Q}^{\pi} - \widehat{\bm{Q}}^{\mu} \Big\|_{\infty} \leq \frac{2\gamma \overline{r}}{(1-\gamma)^2} \max_{\bm{s}} \varUpsilon_{TV}\big[\pi(\cdot |\bm{s}) \Vert \mu(\cdot |\bm{s}) \big] \nonumber \\
		& \ \ + \frac{2\gamma \overline{r}}{1-\gamma} \Big\| \big(\bm{I} - \gamma \widehat{\mathbf{P}}^{\mu}\big)^{-1} \Big\|_{\infty} \max_{\bm{s},\bm{a}} \varUpsilon_{TV}\Big[\widehat{\mathbb{P}}(\cdot |\bm{s}, \bm{a}) \big\Vert \mathbb{P}(\cdot |\bm{s}, \bm{a}) \Big]. 
	\end{align}
\end{lemma}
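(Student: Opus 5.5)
The plan is to split the difference with the triangle inequality through the true behavior \emph{Q}-function:
\begin{align*}
\big\| \bm{Q}^{\pi} - \widehat{\bm{Q}}^{\mu} \big\|_{\infty} \leq \big\| \bm{Q}^{\pi} - \bm{Q}^{\mu} \big\|_{\infty} + \big\| \bm{Q}^{\mu} - \widehat{\bm{Q}}^{\mu} \big\|_{\infty}.
\end{align*}
The first term is exactly the quantity bounded in Lemma \ref{en32976dg37239922}. It therefore yields the first summand of (\ref{dj2903r83b8h939hn34}) directly. What remains is to show
\begin{align*}
\big\| \bm{Q}^{\mu} - \widehat{\bm{Q}}^{\mu} \big\|_{\infty} \leq \frac{2\gamma \overline{r}}{1-\gamma} \Big\| \big(\bm{I} - \gamma \widehat{\mathbf{P}}^{\mu}\big)^{-1} \Big\|_{\infty} \max_{\bm{s},\bm{a}} \varUpsilon_{TV}\big[\widehat{\mathbb{P}}(\cdot |\bm{s}, \bm{a}) \Vert \mathbb{P}(\cdot |\bm{s}, \bm{a})\big].
\end{align*}

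For the second term I use the two Bellman equations $\bm{Q}^{\mu} = \bm{r} + \gamma \mathbf{P}^{\mu}\bm{Q}^{\mu}$ and $\widehat{\bm{Q}}^{\mu} = \bm{r} + \gamma \widehat{\mathbf{P}}^{\mu}\widehat{\bm{Q}}^{\mu}$. Here $\widehat{\bm{Q}}^{\mu}$ is interpreted as the fixed point of $\widehat{\mathcal{B}}^{\mu}$, consistent with the notation $\widehat{Q}^{\pi}$ in the preliminaries. Subtracting the two equations and adding and subtracting $\gamma\widehat{\mathbf{P}}^{\mu}\bm{Q}^{\mu}$ gives
\begin{align*}
(\bm{I} - \gamma \widehat{\mathbf{P}}^{\mu})(\bm{Q}^{\mu} - \widehat{\bm{Q}}^{\mu}) = \gamma(\mathbf{P}^{\mu} - \widehat{\mathbf{P}}^{\mu})\bm{Q}^{\mu}.
\end{align*}
Since $\widehat{\mathbf{P}}^{\mu}$ is row-stochastic and $\gamma<1$, the matrix $\bm{I} - \gamma \widehat{\mathbf{P}}^{\mu}$ is invertible (Neumann series). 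I therefore obtain
\begin{align*}
\bm{Q}^{\mu} - \widehat{\bm{Q}}^{\mu} = \gamma(\bm{I} - \gamma \widehat{\mathbf{P}}^{\mu})^{-1}(\mathbf{P}^{\mu} - \widehat{\mathbf{P}}^{\mu})\bm{Q}^{\mu},
\end{align*}
and applying the submultiplicative $\infty$-norm bound leaves only $\|(\mathbf{P}^{\mu} - \widehat{\mathbf{P}}^{\mu})\bm{Q}^{\mu}\|_{\infty}$ to control.

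Controlling this term is the main step. For each $(\bm{s},\bm{a})$, the corresponding entry equals $\sum_{\bm{s}'}(\mathbb{P}-\widehat{\mathbb{P}})(\bm{s}'|\bm{s},\bm{a})\, V^{\mu}(\bm{s}')$, where $V^{\mu}(\bm{s}')=\mathbb{E}_{\bm{a}'\sim\mu}Q^{\mu}(\bm{s}',\bm{a}')$. The action average under $\mu$ is common to both operators, so it factors out. Since $|r|<\overline{r}$, we have $\|V^{\mu}\|_{\infty}\le \overline{r}/(1-\gamma)$. The entry is then bounded by $\sum_{\bm{s}'}|\mathbb{P}-\widehat{\mathbb{P}}|\cdot \overline{r}/(1-\gamma)$. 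Under the convention $\varUpsilon_{TV}=\frac12\sum|\cdot|$, this equals $\frac{2\overline{r}}{1-\gamma}\varUpsilon_{TV}$, which matches the factor 2 in the target. Taking the maximum over $(\bm{s},\bm{a})$ and multiplying by $\gamma\|(\bm{I}-\gamma\widehat{\mathbf{P}}^{\mu})^{-1}\|_{\infty}$ gives exactly the second summand. Combined with the first step, this proves (\ref{dj2903r83b8h939hn34}).

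The only delicate point is choosing the total variation convention so that the constant $2$ is consistent with Lemma \ref{en32976dg37239922}. A centering argument could improve the constant, but it is not needed.
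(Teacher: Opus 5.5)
Your proposal is correct and follows essentially the same route as the paper: you apply the triangle inequality through $\bm{Q}^{\mu}$, use Lemma~\ref{en32976dg37239922} for the first term, derive the identity $\bm{Q}^{\mu} - \widehat{\bm{Q}}^{\mu} = \gamma(\bm{I} - \gamma \widehat{\mathbf{P}}^{\mu})^{-1}(\mathbf{P}^{\mu} - \widehat{\mathbf{P}}^{\mu})\bm{Q}^{\mu}$, and bound the last factor by $2\overline{r}\,\varUpsilon_{TV}/(1-\gamma)$. Your version is slightly cleaner, since you justify invertibility and state the submultiplicative step as an inequality, whereas the paper writes it as an equality.
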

\begin{IEEEproof}
	See Appendix \ref{proof_of_lemma_2}.
\end{IEEEproof}

Drawing from Lemma \ref{d2n32983898y74bbnf}, Theorem \ref{dmff2398472} establishes the result concerning the difference between $\widetilde{\bm{Q}}^{\pi}$ and $\bm{Q}^{\pi}$.
\begin{theorem}\label{dmff2398472}
	Under an arbitrary policy $\pi$, the difference between $\widetilde{\bm{Q}}^{\pi}$ and $\bm{Q}^{\pi}$ is bounded as follows:
	\begin{align}\label{dnn38794br874r7484r49}
		& \Big\| \widetilde{\bm{Q}}^{\pi} - \bm{Q}^{\pi} \Big\|_{\infty} \leq 2 \eta \gamma \overline{r} \cdot \frac{\max_{\bm{s}} \varUpsilon_{TV}\big[\pi(\cdot |\bm{s}) \Vert \mu(\cdot |\bm{s}) \big]}{(1 - \widetilde{\gamma})(1 - \gamma)^2} \nonumber \\ 
		& + 2\gamma \overline{r} \Big(1 + \eta \Big\| \big(\bm{I} - \gamma \widehat{\mathbf{P}}^{\mu}\big)^{-1} \Big\|_{\infty}\Big) \frac{\max_{\bm{s},\bm{a}} \varUpsilon_{TV}\big[\widehat{\mathbb{P}} \big \Vert \mathbb{P} \big]}{(1-\gamma)(1 - \widetilde{\gamma})},
	\end{align}
	where $\varUpsilon_{TV}\big[\widehat{\mathbb{P}} \big \Vert \mathbb{P} \big]$ represents the abbreviated notation for $\varUpsilon_{TV}\big[\widehat{\mathbb{P}}(\cdot |\bm{s}, \bm{a}) \big \Vert \mathbb{P}(\cdot |\bm{s}, \bm{a}) \big]$.
\end{theorem}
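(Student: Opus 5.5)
The plan is to work directly with the two fixed-point equations and obtain a linear recursion for the error $\bm{D} \triangleq \widetilde{\bm{Q}}^{\pi} - \bm{Q}^{\pi}$, rather than expanding the resolvent form (\ref{dan392893hr83}) of Theorem \ref{dn239r8324r73840jfd9df}. The reason is that $(\bm{I}-\gamma\mathbf{P}^{\pi})^{-1}$ carries a factor $1/(1-\gamma)$, which would give a worse denominator than the one in (\ref{dnn38794br874r7484r49}).

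\textbf{Step 1 (error recursion).} Subtracting $\bm{Q}^{\pi} = \mathcal{B}^{\pi}\bm{Q}^{\pi}$ from $\widetilde{\bm{Q}}^{\pi} = \widehat{\mathcal{T}}^{\pi}\widetilde{\bm{Q}}^{\pi}$, and adding and subtracting $\gamma\mathbf{P}^{\pi}\widetilde{\bm{Q}}^{\pi}$ and $\eta\bm{Q}^{\pi}$, gives
\begin{align*}
(1+\eta)\bm{D} = \gamma\mathbf{P}^{\pi}\bm{D} + \gamma\big(\widehat{\mathbf{P}}^{\pi}-\mathbf{P}^{\pi}\big)\widetilde{\bm{Q}}^{\pi} + \eta\big(\widehat{\bm{Q}}^{\mu}-\bm{Q}^{\pi}\big).
\end{align*}
Since $\mathbf{P}^{\pi}$ is a stochastic matrix, $\|\mathbf{P}^{\pi}\bm{D}\|_\infty \le \|\bm{D}\|_\infty$. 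Taking sup-norms therefore yields
\begin{align*}
\|\bm{D}\|_\infty \le \frac{\gamma\|(\widehat{\mathbf{P}}^{\pi}-\mathbf{P}^{\pi})\widetilde{\bm{Q}}^{\pi}\|_\infty + \eta\|\widehat{\bm{Q}}^{\mu}-\bm{Q}^{\pi}\|_\infty}{1+\eta-\gamma}.
\end{align*}
I then relate $1+\eta-\gamma$ to the factor $1-\widetilde{\gamma}$. I take $\widetilde{\gamma}$ to be the contraction modulus from the proof of Theorem \ref{194hb5476f4909j33}, e.g.\ $\gamma/(1+\eta)$. With that choice, $1+\eta-\gamma \ge 1-\widetilde{\gamma}$, so the denominator can be replaced by $1-\widetilde{\gamma}$ while keeping a valid (slightly looser) bound. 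This matching of constants is the step I am least sure of. If $\widetilde{\gamma}$ is defined differently there, I would redo only this final comparison.

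\textbf{Step 2 (transition term).} For each $(\bm{s},\bm{a})$, the quantity $(\widehat{\mathbf{P}}^{\pi}-\mathbf{P}^{\pi})\widetilde{\bm{Q}}^{\pi}(\bm{s},\bm{a})$ is the difference of expectations of the bounded function $\bm{s}'\mapsto\mathbb{E}_{\bm{a}'\sim\pi}\widetilde{\bm{Q}}^{\pi}(\bm{s}',\bm{a}')$ under $\widehat{\mathbb{P}}$ and $\mathbb{P}$. It is therefore at most $2\max_{\bm{s},\bm{a}}\varUpsilon_{TV}[\widehat{\mathbb{P}}\Vert\mathbb{P}]\,\|\widetilde{\bm{Q}}^{\pi}\|_\infty$.

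The remaining task is to show $\|\widetilde{\bm{Q}}^{\pi}\|_\infty \le \overline{r}/(1-\gamma)$. Rewriting the fixed point as
\begin{align*}
\widetilde{\bm{Q}}^{\pi} = \frac{\bm{r}+\eta\widehat{\bm{Q}}^{\mu}}{1+\eta} + \frac{\gamma}{1+\eta}\widehat{\mathbf{P}}^{\pi}\widetilde{\bm{Q}}^{\pi},
\end{align*}
and using $\|\widehat{\bm{Q}}^{\mu}\|_\infty \le \overline{r}/(1-\gamma)$ (it is a value function under the model $\widehat{\mathbb{P}}$), a maximum-norm argument gives $\|\widetilde{\bm{Q}}^{\pi}\|_\infty \le \overline{r}/(1-\gamma)$. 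The transition contribution is thus at most $2\gamma\overline{r}\max\varUpsilon_{TV}[\widehat{\mathbb{P}}\Vert\mathbb{P}]/(1-\gamma)$.

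\textbf{Step 3 (behavior term and assembly).} I bound $\eta\|\widehat{\bm{Q}}^{\mu}-\bm{Q}^{\pi}\|_\infty$ directly by Lemma \ref{d2n32983898y74bbnf}. This produces a policy-shift piece, $2\eta\gamma\overline{r}\max_{\bm{s}}\varUpsilon_{TV}[\pi\Vert\mu]/(1-\gamma)^2$, and a model-error piece, $2\eta\gamma\overline{r}\|(\bm{I}-\gamma\widehat{\mathbf{P}}^{\mu})^{-1}\|_\infty\max\varUpsilon_{TV}[\widehat{\mathbb{P}}\Vert\mathbb{P}]/(1-\gamma)$.

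Dividing everything by $1-\widetilde{\gamma}$, the policy-shift piece reproduces the first term of (\ref{dnn38794br874r7484r49}). It also coincides with the bound of Theorem \ref{sadn239er823rh923}, which is a useful consistency check: setting $\widehat{\mathbb{P}}=\mathbb{P}$ recovers that theorem. The model-error piece and the Step 2 term combine into $2\gamma\overline{r}\big(1+\eta\|(\bm{I}-\gamma\widehat{\mathbf{P}}^{\mu})^{-1}\|_\infty\big)\max\varUpsilon_{TV}[\widehat{\mathbb{P}}\Vert\mathbb{P}]/\big((1-\gamma)(1-\widetilde{\gamma})\big)$, which is exactly the second term.
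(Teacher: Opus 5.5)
Your proof is correct, and it implies the stated bound. It differs from the paper's proof in two places.

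First, the paper splits $\gamma\widehat{\mathbf{P}}^{\pi}\widetilde{\bm{Q}}^{\pi}-\gamma\mathbf{P}^{\pi}\bm{Q}^{\pi}$ as $\gamma\widehat{\mathbf{P}}^{\pi}\bm{D}+\gamma(\widehat{\mathbf{P}}^{\pi}-\mathbf{P}^{\pi})\bm{Q}^{\pi}$. The model error therefore multiplies the true $\bm{Q}^{\pi}$, and only the standard bound $\|\bm{Q}^{\pi}\|_\infty\le\overline{r}/(1-\gamma)$ is needed. You pair the model error with $\widetilde{\bm{Q}}^{\pi}$ instead. That costs you the extra maximum-norm argument, which is valid, showing $\|\widetilde{\bm{Q}}^{\pi}\|_\infty\le\overline{r}/(1-\gamma)$ from $\|\widehat{\bm{Q}}^{\mu}\|_\infty\le\overline{r}/(1-\gamma)$.

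Second, the paper does not collect $-\eta\bm{D}$ on the left. It bounds that term by $\eta\|\bm{D}\|_\infty$, which gives $\|\bm{D}\|_\infty\le(\gamma+\eta)\|\bm{D}\|_\infty+\cdots$. This is where the paper's $\widetilde{\gamma}$ comes from: the paper sets $\widetilde{\gamma}=\gamma+\eta$, the same constant as in the proof of Theorem 1, not $\gamma/(1+\eta)$. Your comparison step still works with this definition. Since $1+\eta-\gamma\ge 1-\gamma-\eta$ for $\eta\ge 0$, your inequality implies the stated one whenever $\gamma+\eta<1$. The statement tacitly needs $\gamma+\eta<1$ anyway, since otherwise its right-hand side is not positive.

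Your route gives a sharper intermediate bound, with denominator $1+\eta-\gamma$. That bound holds for every $\eta\ge 0$, including cases like $\eta=0.35$, $\gamma=0.99$, where the paper's bound says nothing. The paper's route is slightly shorter, because it avoids bounding $\widetilde{\bm{Q}}^{\pi}$.

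A side remark on your guess for $\widetilde{\gamma}$: $\gamma/(1+\eta)$ is the sup-norm modulus of the damped operator $\bm{Q}\mapsto(\bm{r}+\eta\widehat{\bm{Q}}^{\mu}+\gamma\widehat{\mathbf{P}}^{\pi}\bm{Q})/(1+\eta)$. That operator has the same fixed point as $\widehat{\mathcal{T}}^{\pi}$. The worst-case sup-norm modulus of $\widehat{\mathcal{T}}^{\pi}$ itself is $\gamma+\eta$.
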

\begin{IEEEproof}
	See Appendix \ref{proof_of_theorem_4}.
\end{IEEEproof}

\begin{remark}
	As analyzed in Theorem \ref{dmff2398472}, the difference between $\widetilde{\bm{Q}}^{\pi}$ and $\bm{Q}^{\pi}$ is primarily governed by the sum of two terms. One term represents the discrepancy between the actual state transition probability $\mathbb{P}(\cdot |\bm{s}, \bm{a})$ and its estimation $\widehat{\mathbb{P}}(\cdot |\bm{s}, \bm{a})$, quantified as $\varUpsilon_{TV}\big[\widehat{\mathbb{P}}(\cdot |\bm{s}, \bm{a}) \big\Vert \mathbb{P}(\cdot |\bm{s}, \bm{a}) \big]$, which is inherently influenced by the statistical properties and distribution of the given dataset. The other term captures the discrepancy between the learned policy $\pi(\cdot |\bm{s})$ and the behavior policy $\mu(\cdot |\bm{s})$, expressed as $\varUpsilon_{TV}\big[\pi(\cdot |\bm{s}) \Vert \mu(\cdot |\bm{s})\big]$. To reduce this policy distributional discrepancy, it is essential for the learned policy $\pi(\cdot |\bm{s})$ to be capable of modeling complex, multimodal behavior distributions, thereby enabling better alignment with the behavior policy $\mu(\cdot |\bm{s})$. To this end, the following diffusion-based policy mechanism is developed and elaborated in Section \ref{d2138954766b233}.
\end{remark}

\subsection{Diffusion-Based Policy Scheme}\label{d2138954766b233}
To capture the complex and multimodal behavior distribution inherent in the given static dataset $\mathcal{D} = \{(\bm{s},\bm{a},r,\bm{s}')\}$, a diffusion model is adopted. By learning the intricate distributional properties, the diffusion model facilitates the generation of actions that align with the dataset's statistical characteristics. The modeling process of the diffusion-based policy, which comprises the subsequent two stages, is detailed below.

\subsubsection{Forward Action Diffusion}
For every state-action pair $(\bm{s}, \bm{a}) \in \mathcal{D}$, the diffusion model progressively injects noise into the initial action $\bm{a}^0 = \bm{a} \sim \mu(\cdot |\bm{s})$, transforming it step-by-step into Gaussian noise $\bm{a}^{I_d} \sim \mathcal{N}(\bm{0}, \bm{I})$ over $I_d$ diffusion steps. At each step, noise is added to satisfy the transition:
\begin{align}\label{dn23883429r2gg}
	\hslash \big(\bm{a}^i \big|\bm{a}^{i-1}\big) = \mathcal{N}\big(\bm{a}^i ; \sqrt{\alpha_i} \bm{a}^{i-1}, (1 - \alpha_i) \bm{I}\big),
\end{align}
where $\alpha_i$ represents the noise scheduling parameter. From (\ref{dn23883429r2gg}), the marginal distribution associated with the joint distribution $\hslash \big(\bm{a}^{1:I_d}|\bm{a}^0\big) = \prod_{i=1}^{I_d} \hslash \big(\bm{a}^i|\bm{a}^{i-1}\big)$ can be formulated analytically as:
\begin{align}\label{dm3884n98fg4nowmg}
	\hslash \big(\bm{a}^i \big|\bm{a}^0 \big) = \mathcal{N}\big(\bm{a}^i ; \sqrt{\bar{\alpha}_i} \bm{a}^0, (1 - \bar{\alpha}_i) \bm{I}\big),
\end{align}
where $\bar{\alpha}_i = \prod_{m=1}^{i} \alpha_m$. By utilizing the reparameterization trick on (\ref{dm3884n98fg4nowmg}), the following expression is obtained:
\begin{align}\label{dm23rh9233bfdee}
	\bm{a}^i = \bm{\xi} \sqrt{1 - \bar{\alpha}_i} + \bm{a}^0 \sqrt{\bar{\alpha}}_i, i \in \{ 1, 2, \cdots, I_d \},
\end{align}
where $\bm{\xi} \sim \mathcal{N}(\bm{0}, \bm{I})$ represents a noise sample.

\subsubsection{Reverse Denoising for Action Generation}
The reverse denoising process facilitates the recovery of actions by progressively reducing noise and reconstructing the target distribution from an initial Gaussian sample $\bm{a}^{I_d} \sim \mathcal{N}(\bm{0}, \bm{I})$. To this end, the policy distribution is defined as:
\begin{align}
	\pi(\bm{a} | \bm{s}) & \triangleq \hbar \big(\bm{a}^{0:I_d}\big| \bm{s}\big) \nonumber \\ & = \mathcal{N} \big(\bm{a}^{I_d}; \bm{0}, \bm{I}\big) \prod_{i=1}^{I_d} \hbar \big(\bm{a}^{i-1}\big|\bm{a}^i, \bm{s} \big),
\end{align}
where $\hbar \big(\bm{a}^{i-1}\big|\bm{a}^i, \bm{s} \big) = \mathcal{N}\big(\bm{a}^{i-1}; \bm{\chi}(\bm{a}^i, \bm{s}, i), \bm{\varphi}^2(\bm{a}^i, \bm{s}, i)\big)$ characterizes the distribution of a single denoising step. Here, $\bm{\chi}(\bm{a}^i, \bm{s}, i)$ and $\bm{\varphi}^2(\bm{a}^i, \bm{s}, i)$ denote the corresponding denoising mean and covariance, respectively.

\section{Algorithm Implementation}\label{d83nekfnekei}
Building upon the BAC-PE developed in Section \ref{da89234h29353590u8} and the diffusion policy scheme introduced in Section \ref{d2138954766b233}, this section formally presents the DPBAC algorithm, with its structural diagram shown in Fig. \ref{54nono2895r}. The detailed algorithm workflow is illustrated in Algorithm \ref{DPBAC_algorithm}, and the optimization objectives for all employed networks are elaborated as follows.

\begin{figure}[!ht]
	\centering
	\includegraphics[width=8.87cm]{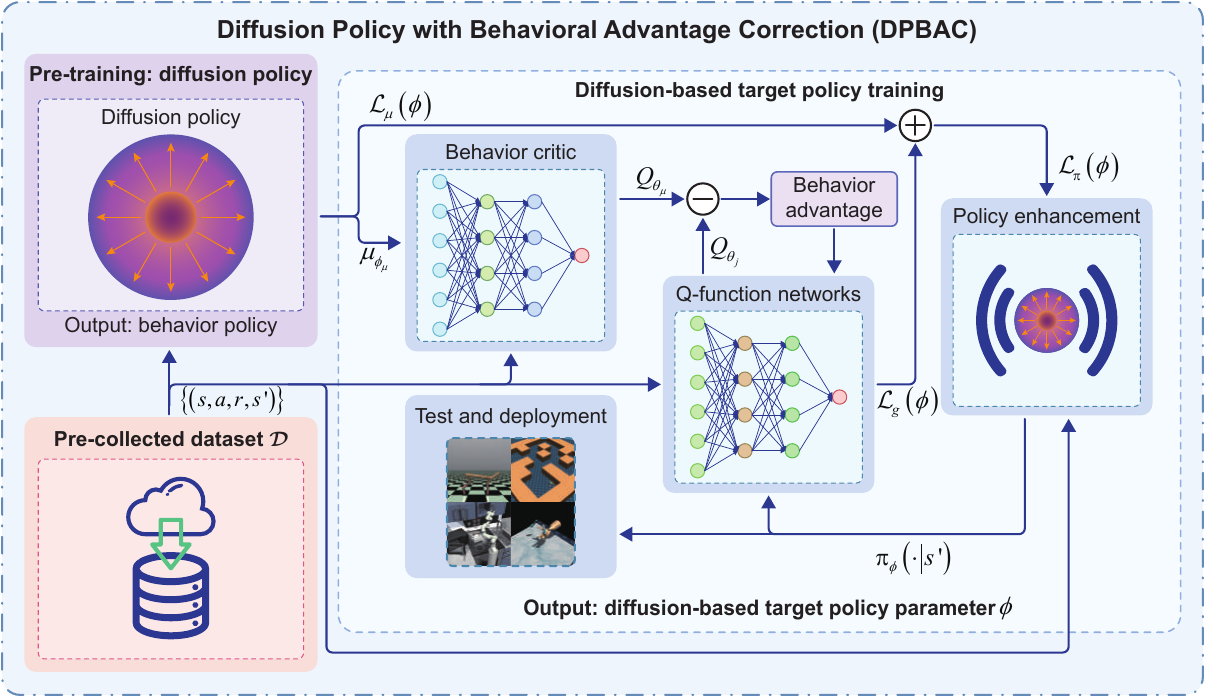}
	\caption{Schematic representation of the developed DPBAC algorithm.}
	\label{54nono2895r}
\end{figure}

\subsection{Diffusion-Based Behavior Policy Generation}\label{dn29384bbnkett}
In the DPBAC algorithm, a diffusion model parameterized by $\phi$ is employed to model the behavior policy $\mu(\cdot |\bm{s})$, capturing its complex and multimodal data distribution characteristics. Following \cite{Ho2020}, $\hbar_{\phi}\big(\bm{a}^{i-1}\big|\bm{a}^i, \bm{s}\big)$ can be parameterized as a noise prediction module, with the covariance $\bm{\varphi}^2 \big(\bm{a}^i, \bm{s}, i \big) = (1-\alpha_i) \bm{I}$ and the mean formulated as:
\begin{align}\label{adm32rt209458t340jf}
	\bm{\chi}_{\phi}(\bm{a}^i, \bm{s}, i) = \frac{1}{\sqrt{\alpha_i}} \bigg(\bm{a}^i - \frac{1-\alpha_i}{\sqrt{1-\bar{\alpha}_i}} \bm{\xi}_{\phi}(\bm{a}^i, \bm{s}, i)\bigg).
\end{align}
The practical implementation involves predicting the Gaussian noise $\bm{\xi}$ in (\ref{dm23rh9233bfdee}) using a neural network $\bm{\xi}_{\phi}$, which is trained with the following loss function:
\begin{align}\label{adno3nfib3fi2}
	\mathcal{L}_{\mu}(\phi) = \mathbb{E}_{\bm{\xi} \sim \mathcal{N}(\bm{0}, \bm{I}), (\bm{s}, \bm{a}) \sim \mathcal{D}} \Big[\big\Vert \bm{\xi} - \bm{\xi}_{\phi}(\bm{a}^i, \bm{s}, i) \big\Vert^2 \Big], 
\end{align}
where $\bm{a}^i = \bm{\xi} \sqrt{1 - \bar{\alpha}_i} + \bm{a}^0 \sqrt{\bar{\alpha}}_i$ and $i \sim \mathcal{U}\{1, I_d\}$. As outlined in Algorithm \ref{DPBAC_algorithm}, once the behavior policy has been trained using the diffusion policy loss in (\ref{adno3nfib3fi2}), its parameters are set as $\phi_{\mu} = \phi$, yielding the parameterized behavior policy $\mu_{\phi_{\mu}}(\cdot |\bm{s}')$. When generating an action $\bm{a}$ for the current state $\bm{s}$, the process begins by sampling Gaussian noise $\bm{a}^{I_d} \sim \mathcal{N}(\bm{0}, \bm{I})$, followed by a stepwise reverse denoising procedure based on the following formulation:
\begin{align}\label{dab8394beiie}
	\bm{a}^{i-1} | \bm{a}^{i} = \bm{\chi}_{\phi}\big(\bm{a}^i, \bm{s}, i \big) + \bm{\varphi}\big(\bm{a}^i, \bm{s}, i \big) \bm{\xi}, i \in \{1, \cdots, I_d \},
\end{align}
ultimately yielding the generated action $\bm{a} = \bm{a}^0$. To expedite the action sampling process in (\ref{dab8394beiie}), the DPM-Solver \cite{CLu2022} is utilized to accelerate the reverse denoising process.

\begin{algorithm}[htpb]\footnotesize
	\caption{\small DPBAC}
	\renewcommand{\algorithmicrequire}{\textbf{Initialize:}}
	\renewcommand{\algorithmicensure}{\textbf{Output:}}
	\label{DPBAC_algorithm}
	\begin{algorithmic}[1]
		\Require \emph{Q}-function networks parameters $\theta_1$, $\theta_2$, diffusion policy network parameter $\phi$, behavior critic network parameter $\theta_{\mu}$, pre-collected dataset $\mathcal{D}$, target networks parameters for \emph{Q}-function $\bar{\theta}_1 \leftarrow \theta_1$, $\bar{\theta}_2 \leftarrow \theta_2$, target network parameters for diffusion policy $\bar{\phi} \leftarrow \phi$, target policy update periodicity $\delta$, target network smoothing factor $\sigma$, behavior policy training steps $n_{\max}$, policy training steps $N_{\max}$.
		\State \textbf{\# Pre-training: diffusion-based behavior policy training}
		\For{behavior policy training step $n=1, 2,\cdots,n_{\max}$}
			\State Randomly extract a mini-batch from the pre-collected dataset $\mathcal{D}$.
			\State Optimize the diffusion policy parameter $\phi$ via minimizing (\ref{adno3nfib3fi2}).
		\EndFor
		\State Record the current diffusion policy parameter $\phi$ as $\phi_{\mu}$, and set $\bar{\phi} = \phi$.
		\State \textbf{\# Diffusion-based learned policy training}
		\For{policy training step $n = 1, 2, \cdots,N_{\max}$}
			\State Randomly extract a mini-batch from the pre-collected dataset $\mathcal{D}$.
			\State Optimize the behavior critic parameter $\theta_{\mu}$ via minimizing (\ref{adnon7342934ff}).
			\State Optimize \emph{Q}-function networks parameters $\theta_j$ via minimizing (\ref{dab712ne920345}).
			\If {$\delta | n$}
				\State Optimize the diffusion policy parameter $\phi$ via minimizing (\ref{adnb329045426}).
				\State Update the corresponding target network parameters with
				\begin{align}
					& \bar{\theta}_j \leftarrow (1-\sigma) \bar{\theta}_j + \sigma \theta_j, \nonumber \\ 
					& \ \bar{\phi} \leftarrow (1-\sigma) \bar{\phi}+\sigma \phi. \nonumber
				\end{align}
			\EndIf
		\EndFor
		\Ensure The diffusion-based learned policy network parameters $\phi$.
	\end{algorithmic}
\end{algorithm}

\subsection{Behavior Critic Training}
A behavior critic network parameterized by $\theta_{\mu}$ is employed to approximate the \emph{Q}-function $Q^{\mu}(\bm{s}, \bm{a})$ of the behavior policy. The training objective is to minimize the difference between its estimated value and the TD target $z_{\mu}$, with the loss function formulated as follows:
\begin{align}\label{adnon7342934ff}
	\mathcal{L}^{\mu}_{Q}(\theta_{\mu}) = \mathbb{E}_{(\bm{s}, \bm{a}, r, \bm{s}') \sim \mathcal{D}} \Big[\big(z_{\mu} - Q_{\theta_{\mu}}(\bm{s}, \bm{a}) \big)^2 \Big],
\end{align}
where $z_{\mu} = r + \gamma Q_{\theta_{\mu}}(\bm{s}', \tilde{\bm{a}}')$, and $\tilde{\bm{a}}' \sim \mu_{\phi_{\mu}}(\cdot |\bm{s}')$.

\subsection{Behavioral Advantage Corrected Critic Learning}
To implement the BAC-PE rule (\ref{dn3974t34th39}), the following loss function is derived from (\ref{dan73693hr39976g}) for training the \emph{Q}-function networks corresponding to the learned policy, each parameterized by $\theta_j$:
\begin{align}\label{dab712ne920345}
	\mathcal{L}^{\pi}_{Q}(\theta_j) = \mathbb{E}_{(\bm{s}, \bm{a}, r, \bm{s}') \sim \mathcal{D}} \Big[\big(z_{\pi} - Q_{\theta_j}(\bm{s}, \bm{a}) \big)^2 \Big],
\end{align}
where the TD target $z_{\pi}$ incorporates the reward signal, the discounted estimate of future returns, and the behavioral advantage correction term: $z_{\pi} = r + \gamma \min_{j = 1, 2} \{ Q_{\bar{\theta}_j}(\bm{s}', \hat{\bm{a}}') \} + \eta (Q_{\theta_{\mu}}(\bm{s}, \bm{a}) - Q_{\bar{\theta}_j}(\bm{s}, \bm{a}))$, with $\hat{\bm{a}}' \sim \pi_{\bar{\phi}}(\cdot |\bm{s}')$.

\begin{figure}[!ht]
	\centering
	\subfigure[]{
	\includegraphics[width=1.881cm]{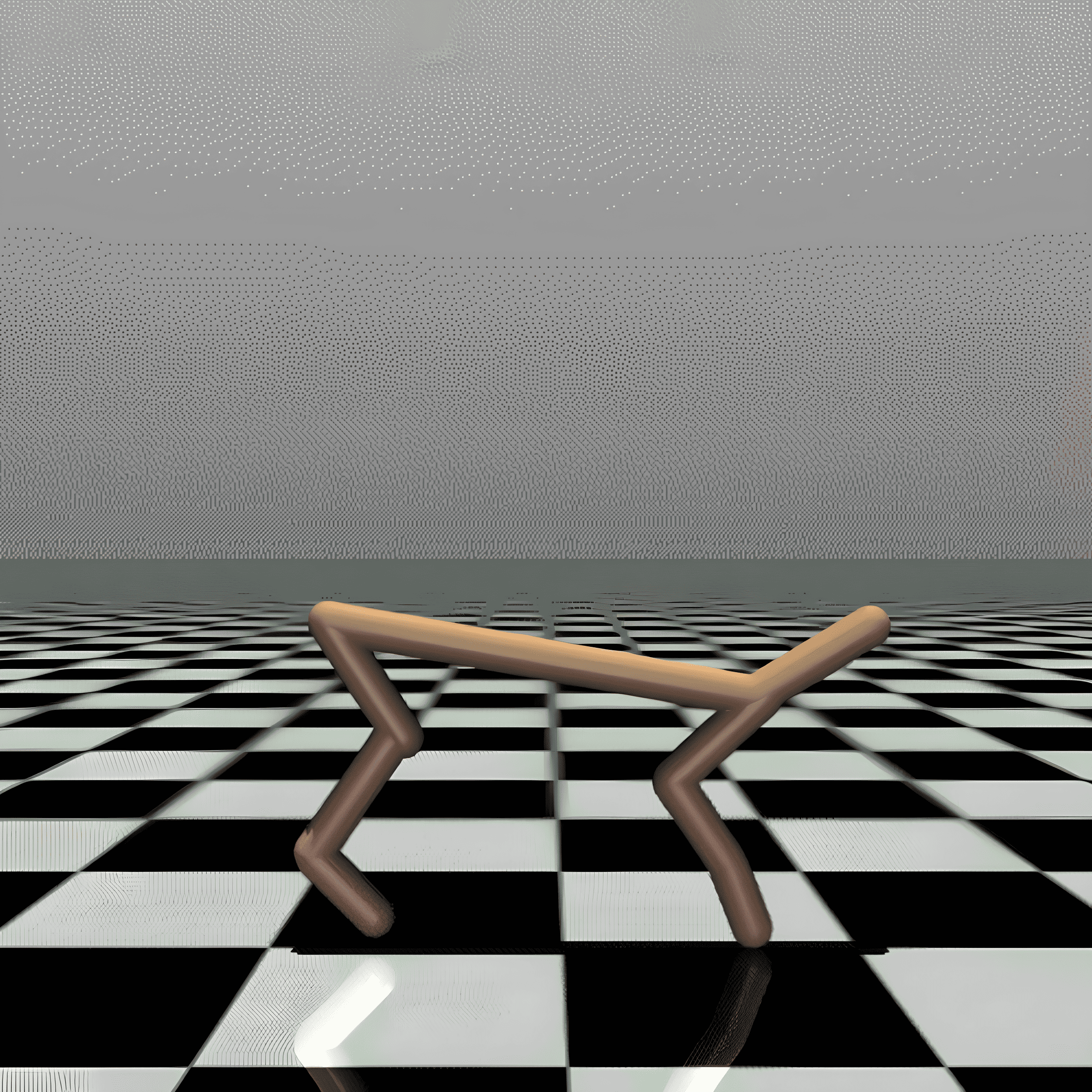}
	}
	\subfigure[]{
	\includegraphics[width=1.881cm]{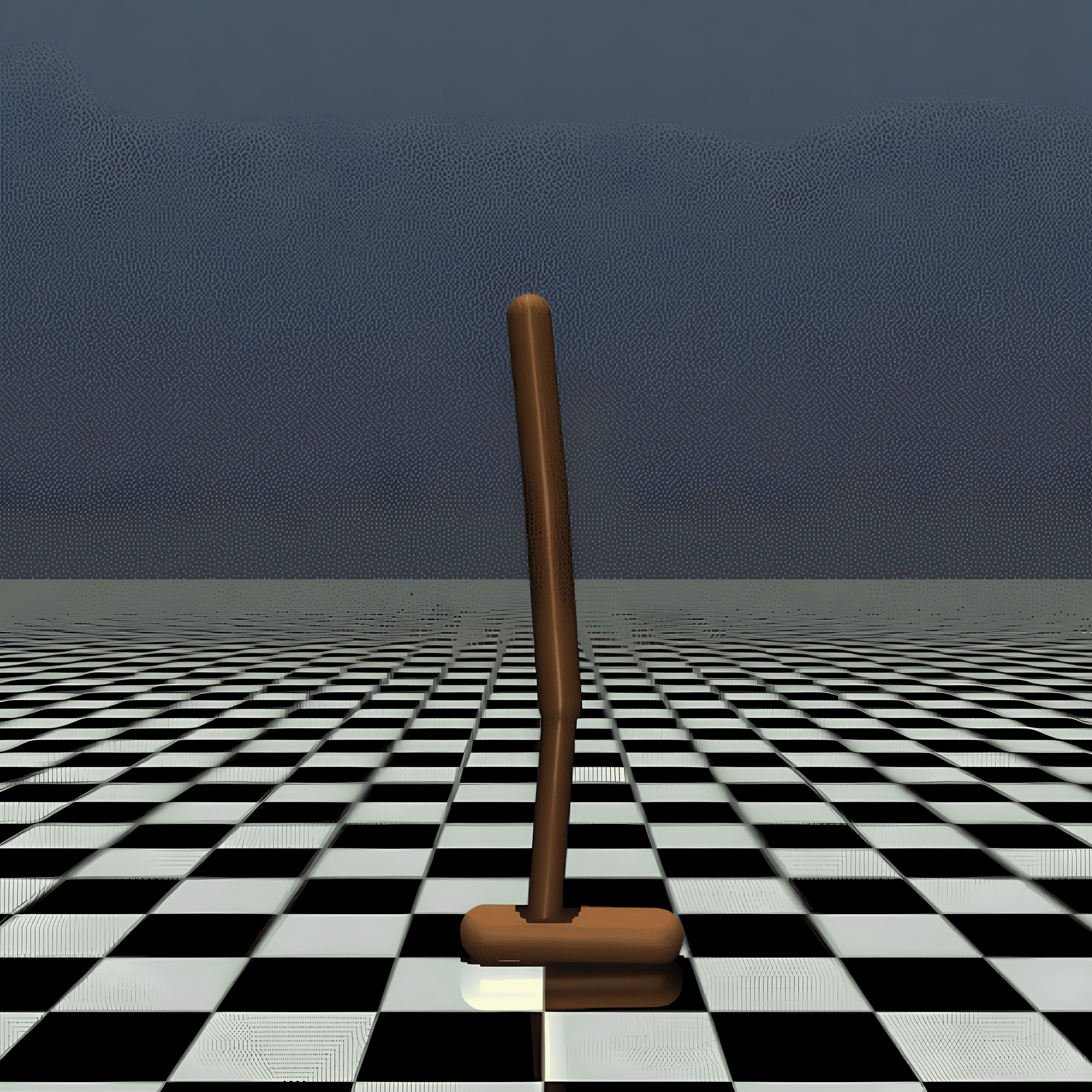}
	}
	\subfigure[]{
	\includegraphics[width=1.881cm]{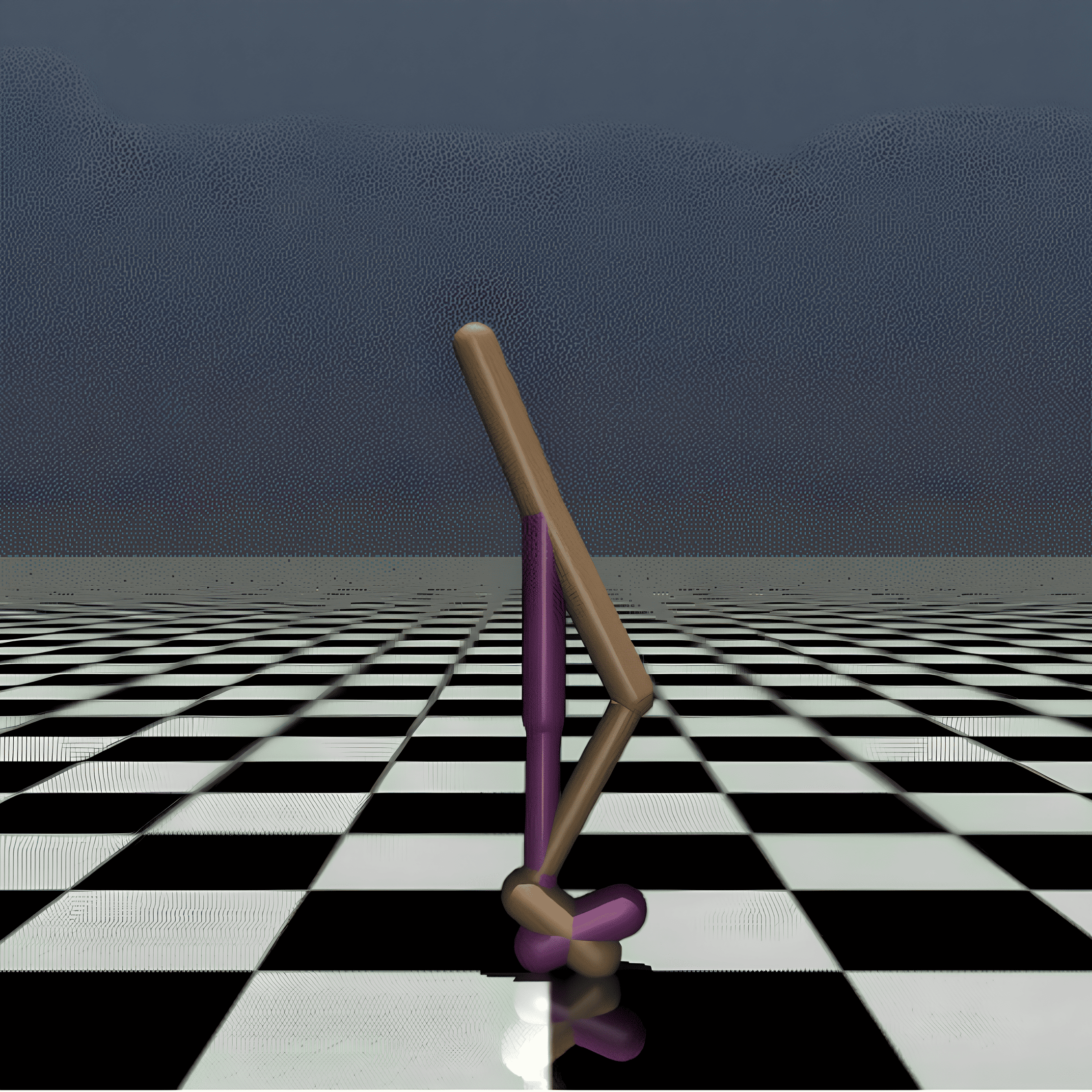}
	}
	\subfigure[]{
	\includegraphics[width=1.881cm]{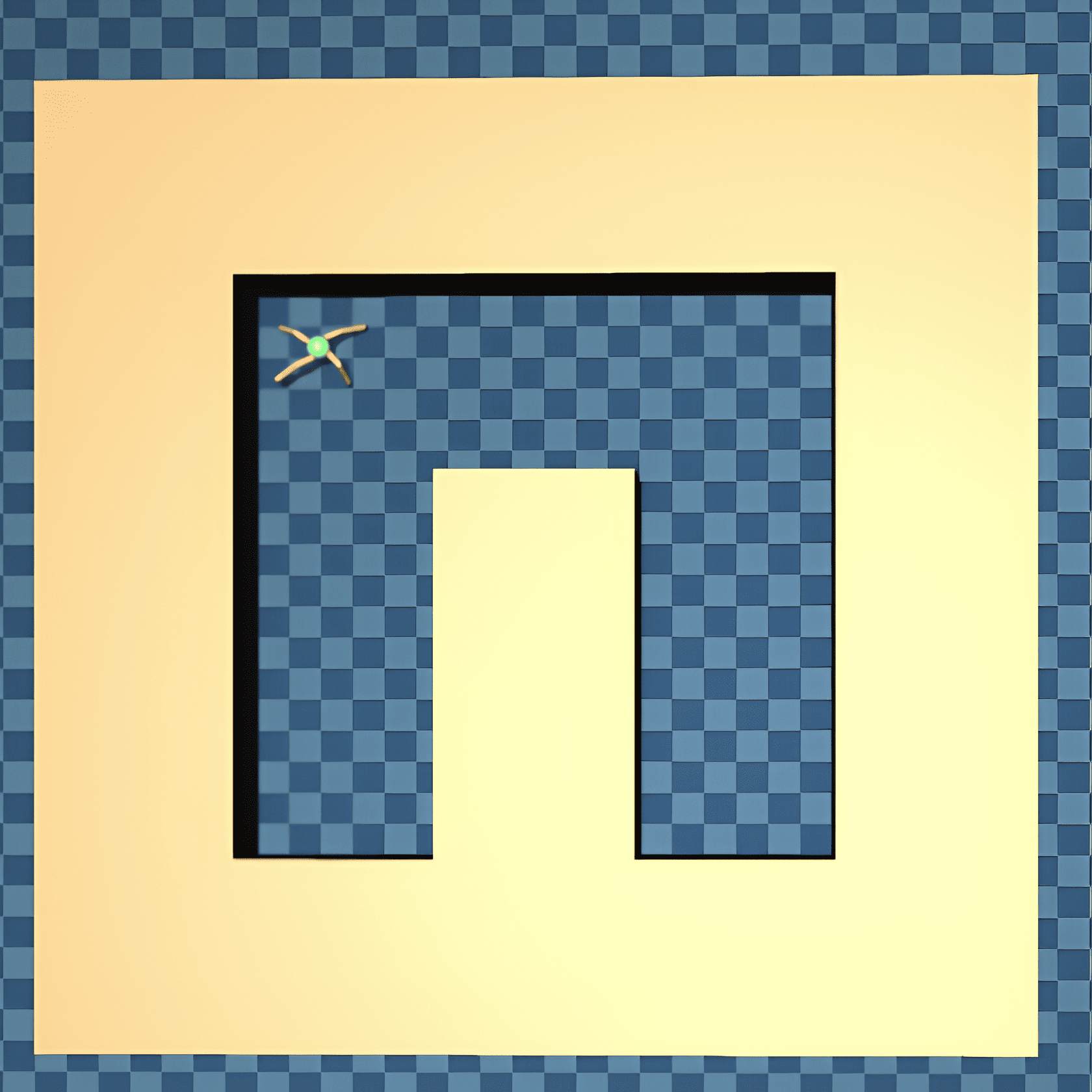}
	}
	\subfigure[]{
	\includegraphics[width=1.881cm]{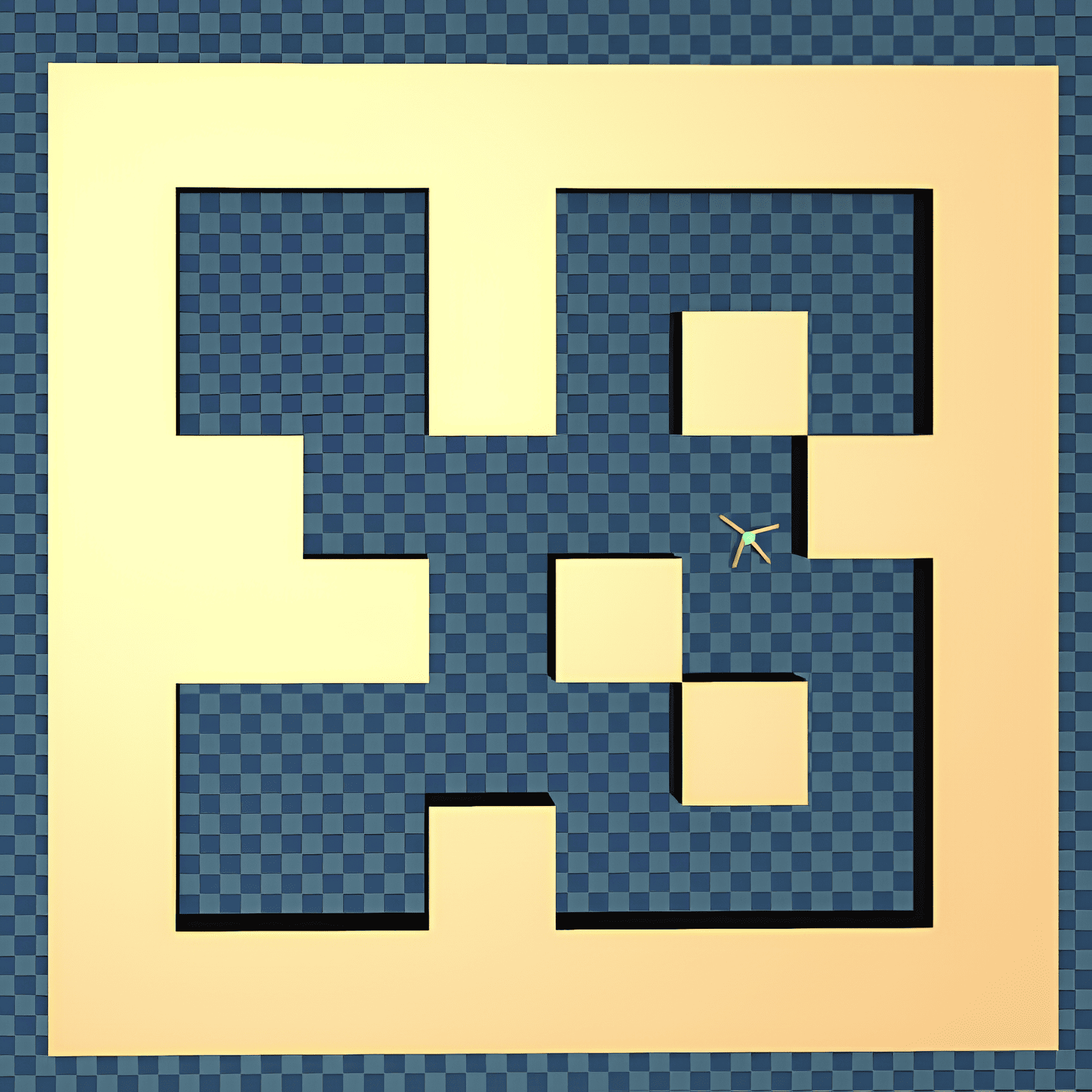}
	}
	\subfigure[]{
	\includegraphics[width=1.881cm]{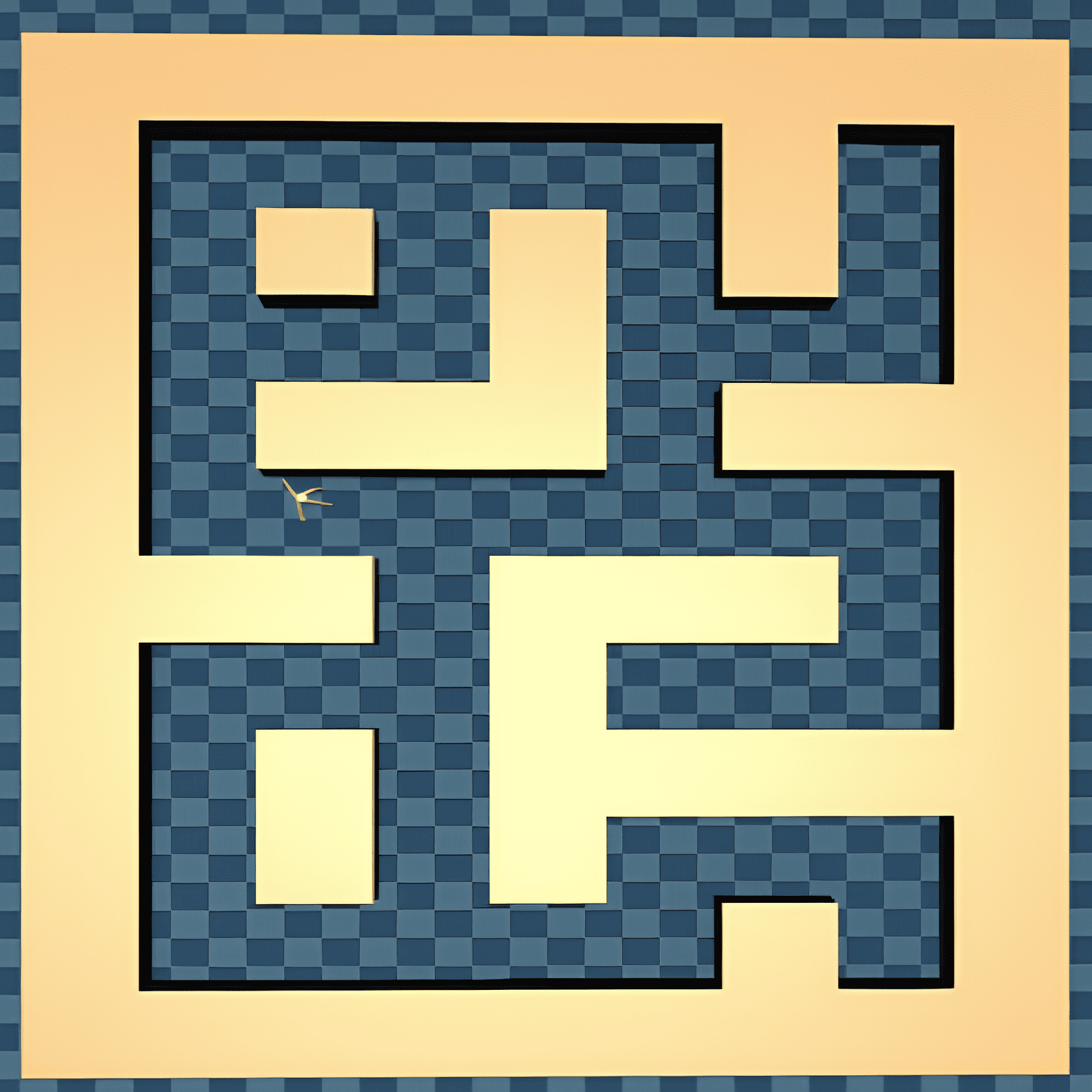}
	}
	\subfigure[]{
	\includegraphics[width=1.881cm]{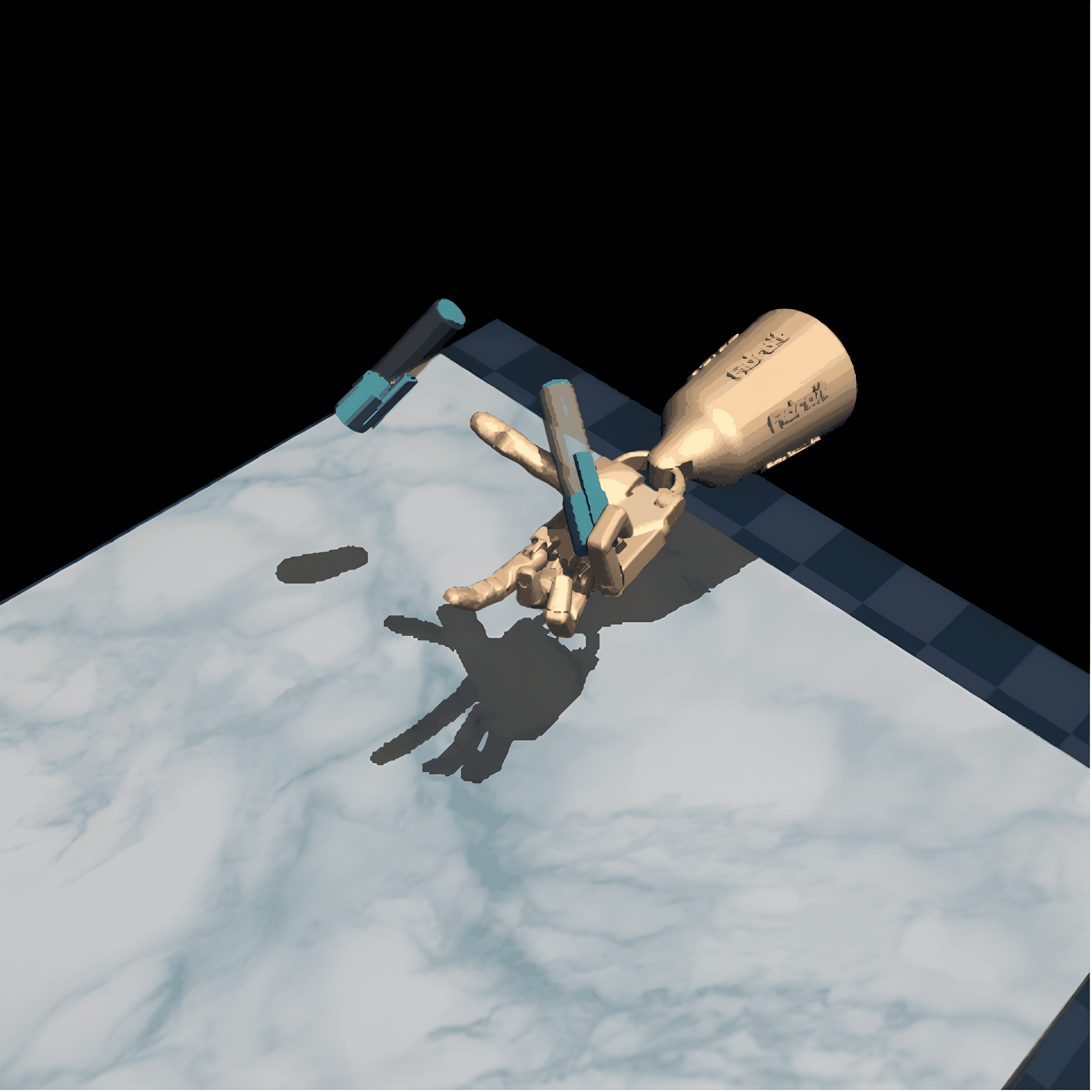}
	}
	\subfigure[]{
	\includegraphics[width=1.881cm]{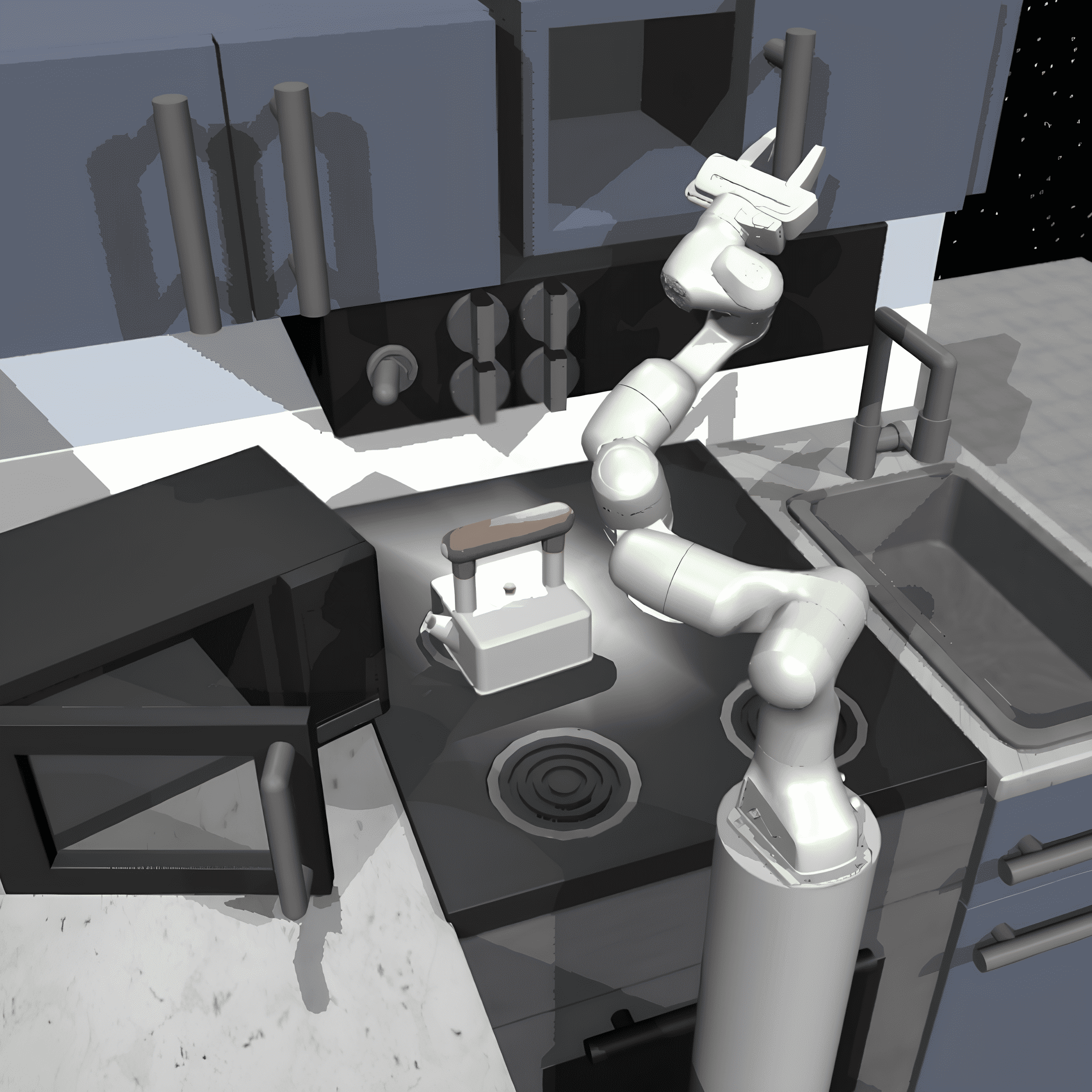}
	}
	\caption{Representative task environments from the D4RL benchmark used for evaluation. (a) Gym-MuJoCo: halfcheetah. (b) Gym-MuJoCo: hopper. (c) Gym-MuJoCo: walker2d. (d) AntMaze: umaze. (e) AntMaze: medium. (f) AntMaze: large. (g) Adroit: pen. (h) FrankaKitchen: kitchen.}
	\label{performance_testing}
\end{figure}

\subsection{Diffusion Actor Improvement}
To achieve the optimal policy described in (\ref{dn2o3lr2390rh205}), the diffusion-based policy network presented in Section \ref{dn29384bbnkett} is further enhanced using \emph{Q}-function guidance. This guidance optimizes the pre-trained diffusion-based policy network parameters by directing the learned policy to prioritize actions with higher \emph{Q}-values, thereby enabling the generation of a learned policy capable of surpassing the performance of the behavior policy. To this end, the loss function for the learned policy is formulated as a weighted combination of the diffusion policy loss, $\mathcal{L}_{\mu}(\phi)$, and the performance improvement loss, $\mathcal{L}_{g}(\phi)$, defined as follows:
\begin{align}\label{adnb329045426}
	\mathcal{L}_{\pi}(\phi) = \mathcal{L}_{\mu}(\phi) + \bar{\varsigma} \mathcal{L}_{g}(\phi),
\end{align}
where $\bar{\varsigma} = \varsigma / \mathbb{E}_{(\bm{s}, \bm{a}) \sim \mathcal{D}} \big| Q_{\theta_1}(\bm{s}, \bm{a}) \big|$ serves as a normalized scaling coefficient, and $\mathcal{L}_{g}(\phi) = - \mathbb{E}_{\bm{s} \sim \mathcal{D}, \bm{a} \sim \pi_{\phi}(\cdot |\bm{s})} \big[ Q_{\theta_1}(\bm{s},\bm{a}) \big]$. 
\begin{remark}
	In order to improve performance on the benchmark task and fully leverage the knowledge embedded in the pretrained behavior policy, the optimization objective (\ref{adnb329045426}) is designed to enable further optimization of the diffusion policy network parameter $\phi$. As presented in Section \ref{dn29384bbnkett}, the pre-trained model provides a strong initialization, facilitating faster convergence and lowering both training time and computational requirements for the target task.
\end{remark}

\section{Experiments}\label{dn32823742rere}
In this section, we present a comprehensive evaluation of the proposed DPBAC algorithm on the D4RL benchmark \cite{FuD4RL}. To demonstrate its effectiveness across a diverse range of tasks, we compare the performance of DPBAC with the SOTA offline RL algorithms. In addition, we conduct sensitivity analysis to investigate the individual contributions of each component within the DPBAC framework. Beyond standard benchmark evaluations, we further examine the ability of DPBAC in solving sparse-reward tasks. Finally, we analyze the expressiveness and intrinsic regularization properties of the diffusion-based policy.

\subsection{Experimental Configurations}
\subsubsection{Dataset for Benchmarking and Evaluation}
We evaluate the proposed DPBAC algorithm on four domains from the D4RL benchmark: Gym-MuJoCo, AntMaze, FrankaKitchen, and Adroit. These environments are depicted in Fig. \ref{performance_testing}, with detailed descriptions provided in Table \ref{Description_of_testing_task_environments}. Gym-MuJoCo consists of continuous control tasks with smooth and dense reward functions, making it a widely adopted benchmark for performance evaluation. AntMaze features sparse rewards and requires the agent to navigate an ant robot through a maze by stitching together suboptimal trajectories. FrankaKitchen involves four sequential sub-tasks, testing the algorithm's ability for long-horizon value optimization and generalization. Adroit, collected from human demonstrations, exhibits narrow state-action distributions and demands strong regularization to prevent policy drift caused by extrapolation errors.

\begin{figure*}[!ht]
	\centering
	\subfigure[halfcheetah-m-v2]{
	\includegraphics[width=3.288cm]{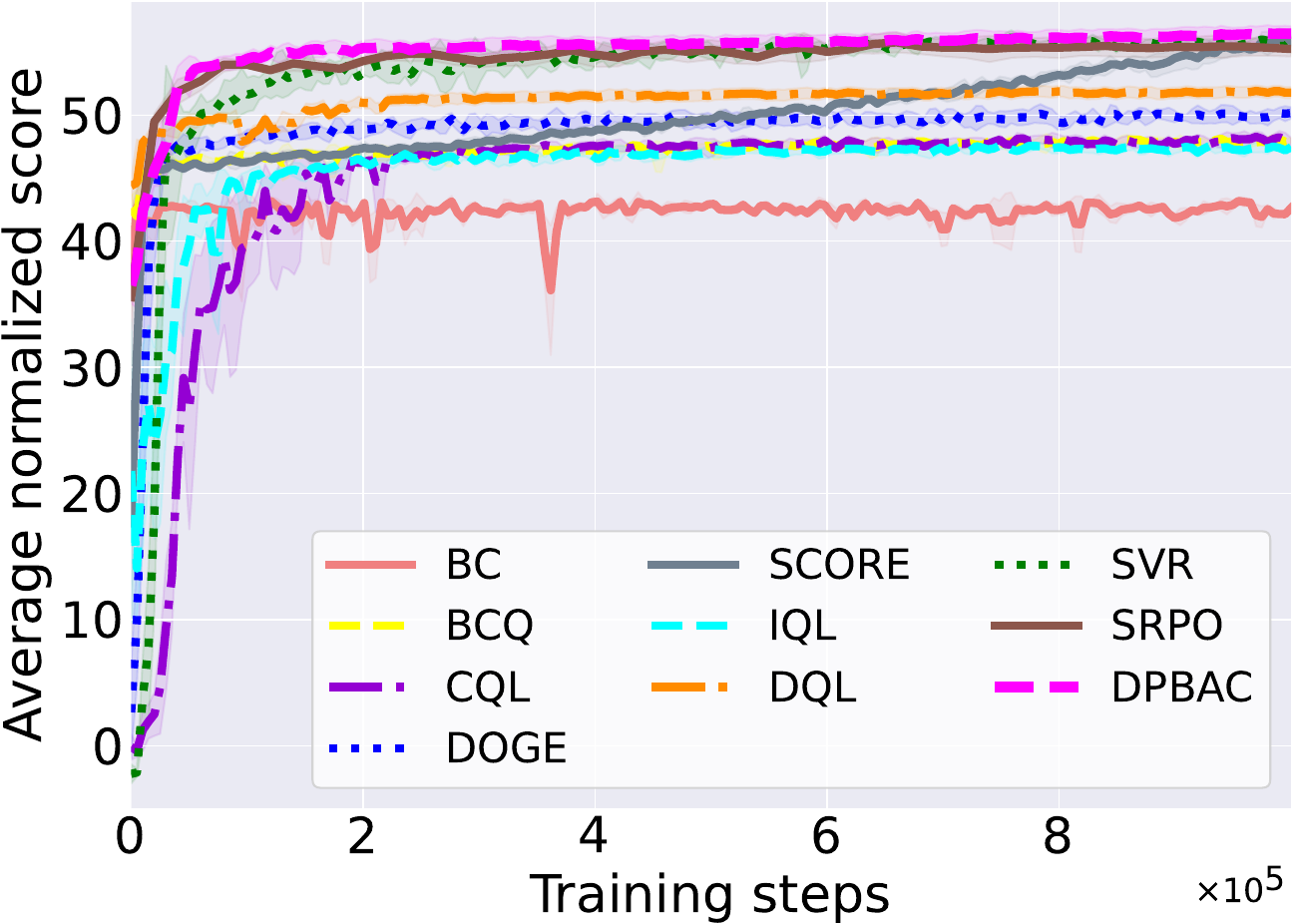}
	}
	\subfigure[hopper-m-v2]{
	\includegraphics[width=3.288cm]{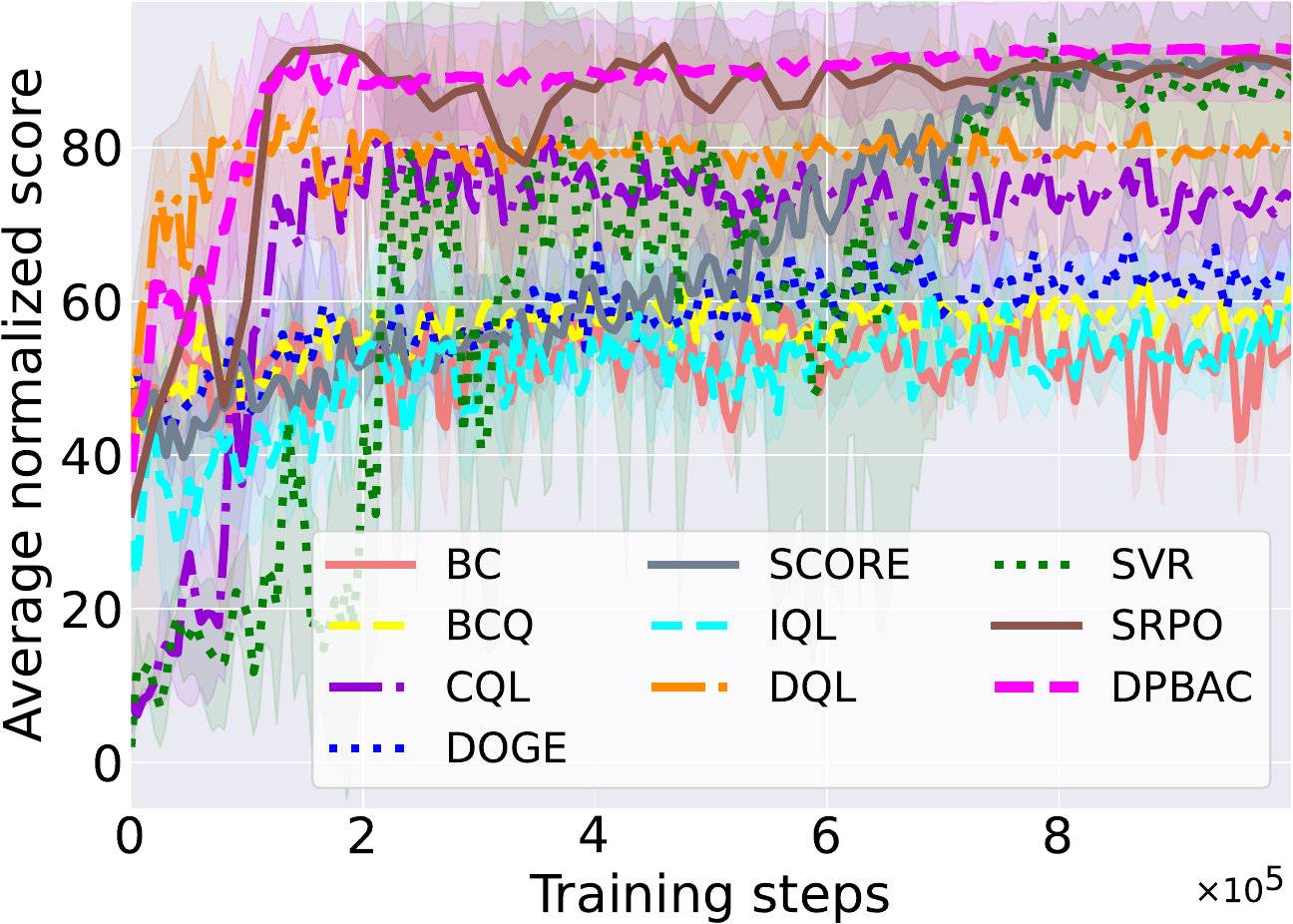}
	}
	\subfigure[walker2d-m-v2]{
	\includegraphics[width=3.288cm]{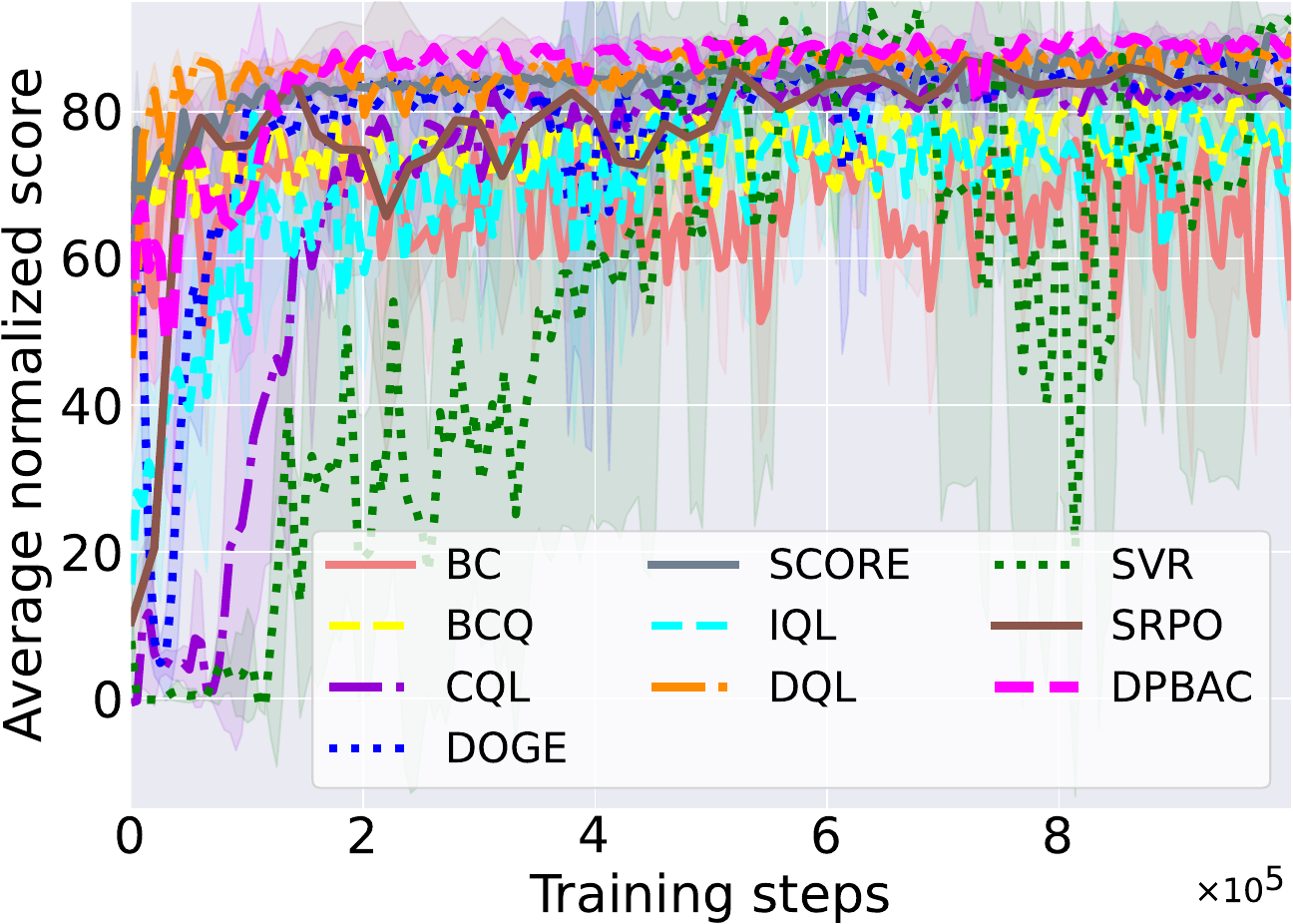}
	}
	\subfigure[halfcheetah-m-r-v2]{
	\includegraphics[width=3.288cm]{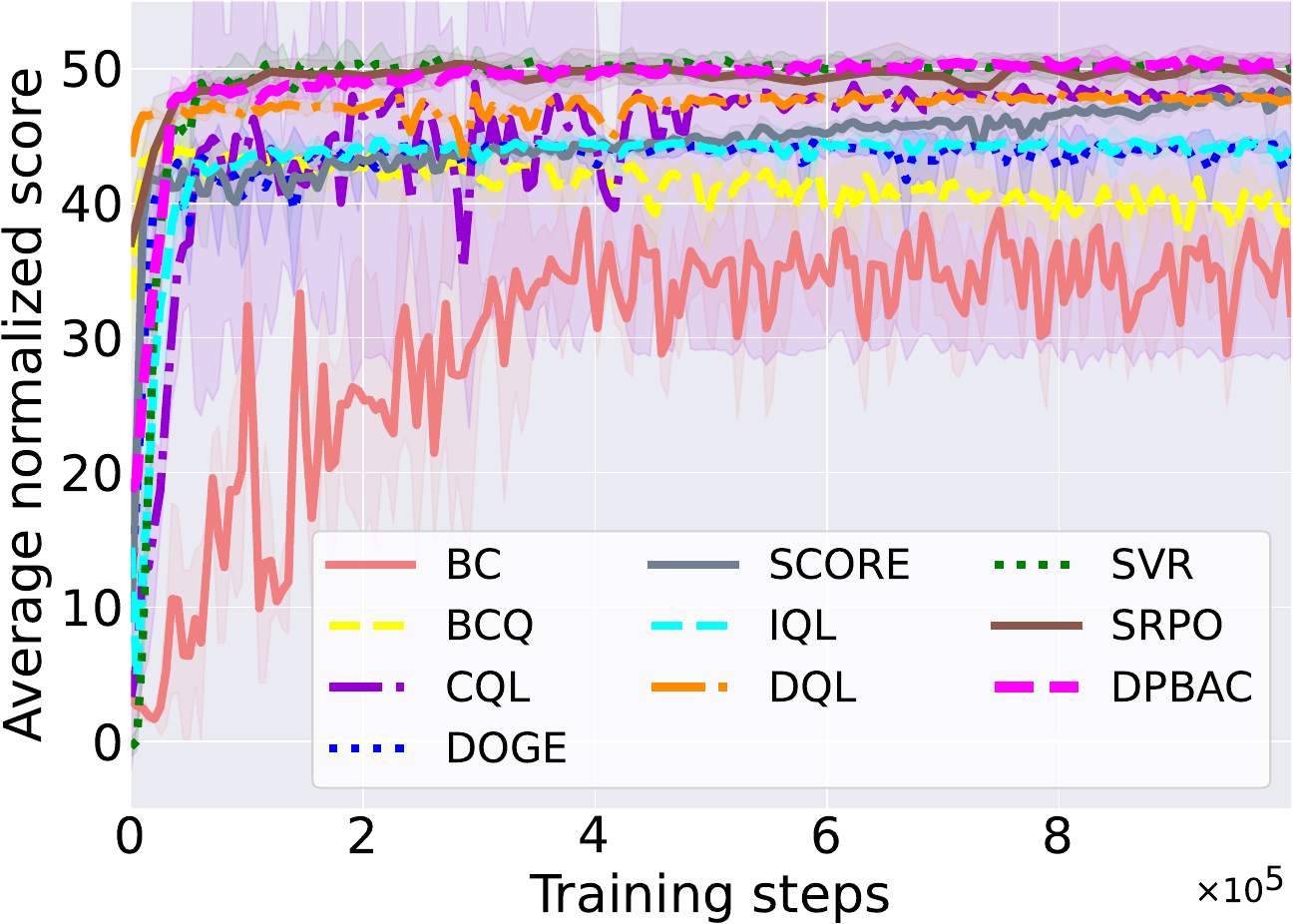}
	}
	\subfigure[hopper-m-r-v2]{
	\includegraphics[width=3.288cm]{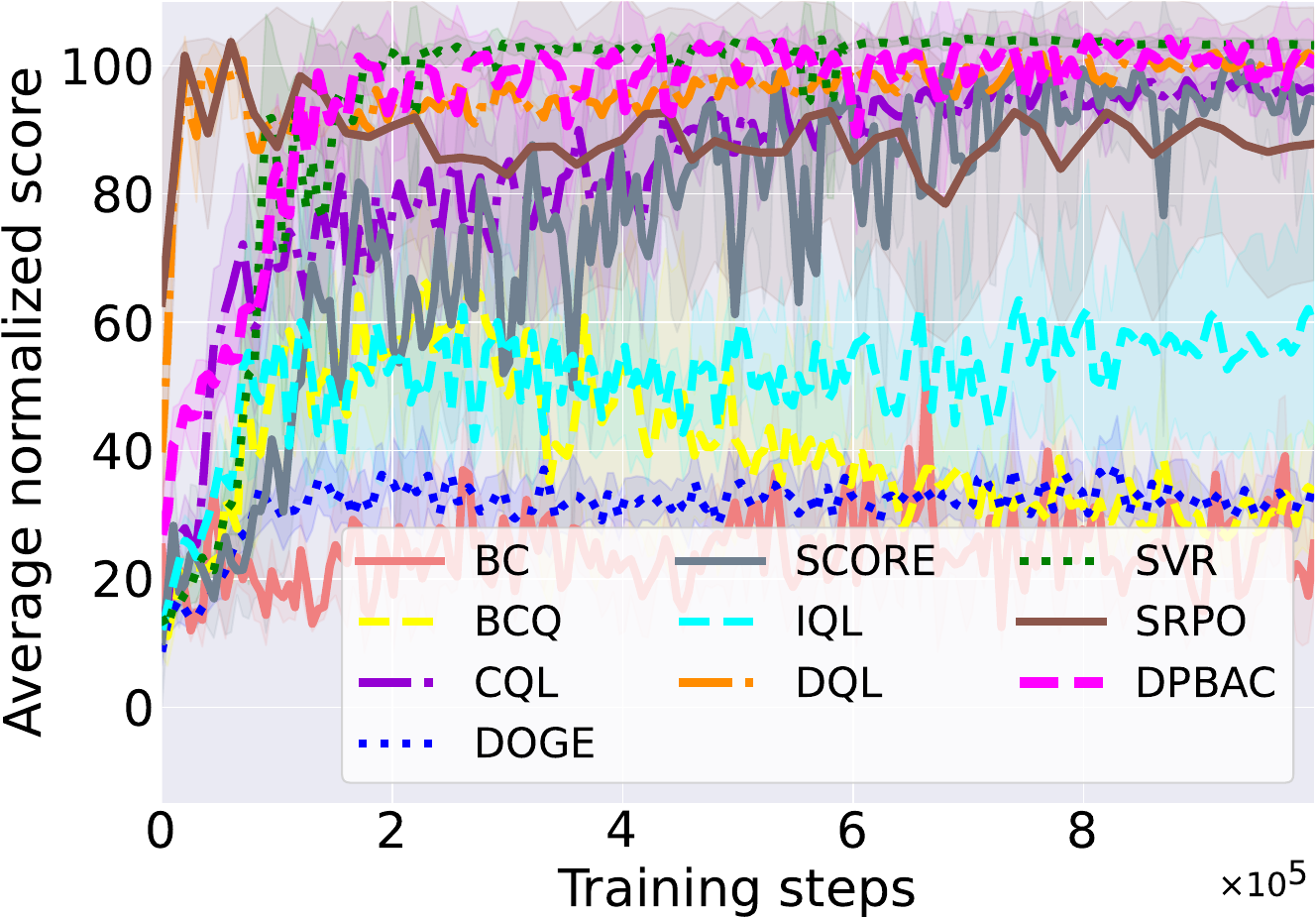}
	}
	\subfigure[walker2d-m-r-v2]{
	\includegraphics[width=3.288cm]{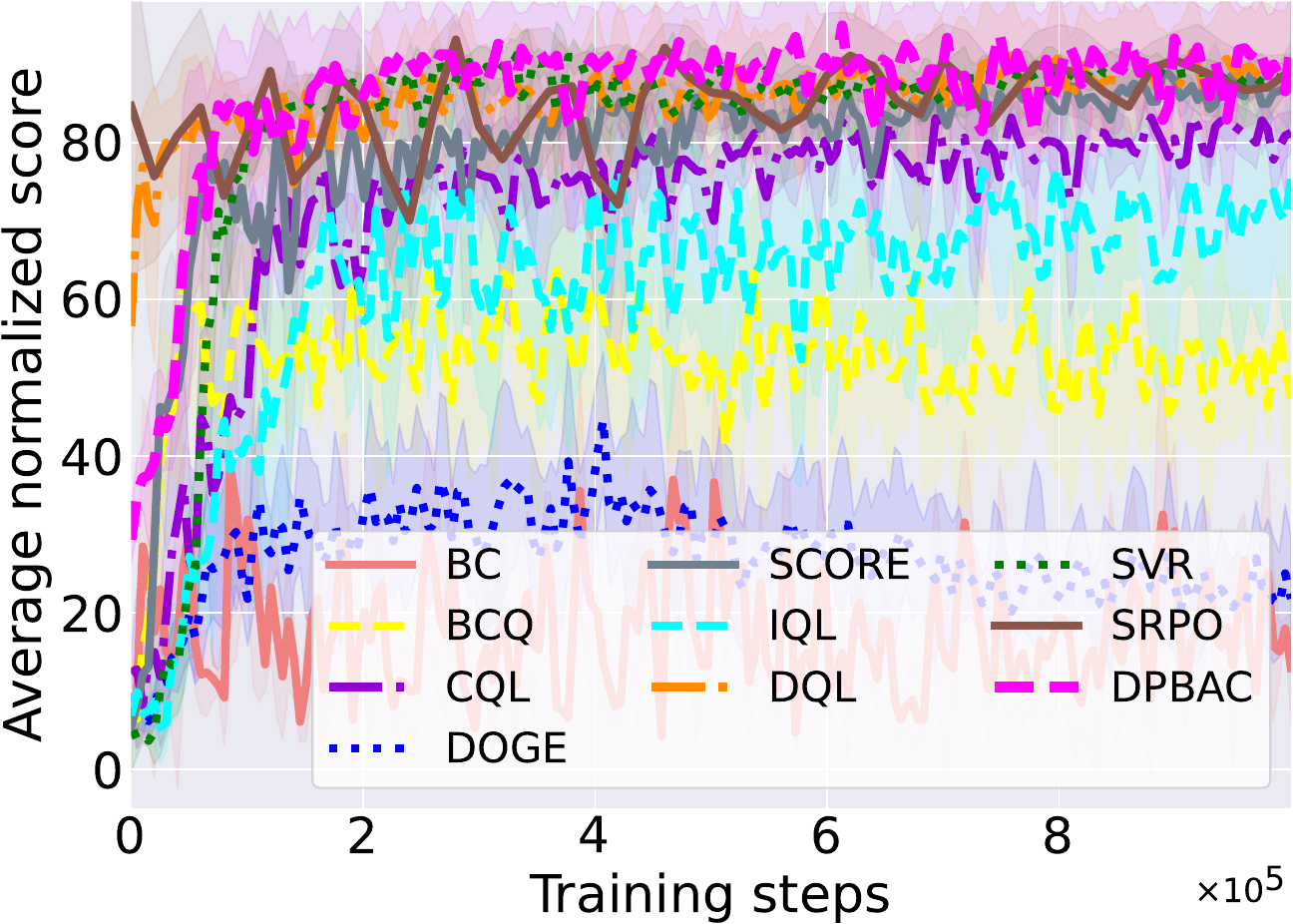}
	}
	\subfigure[halfcheetah-m-e-v2]{
	\includegraphics[width=3.288cm]{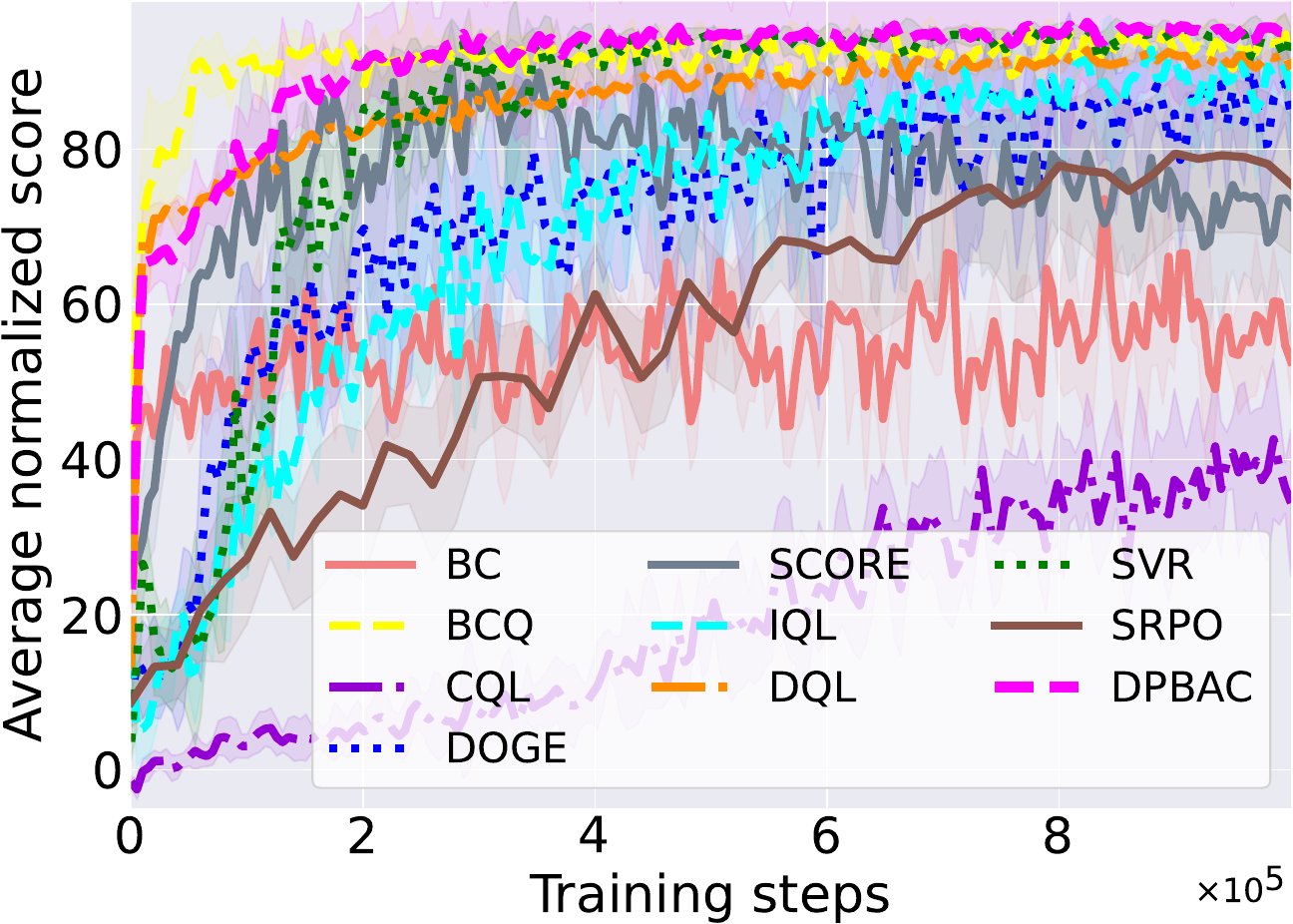}
	}
	\subfigure[hopper-m-e-v2]{
	\includegraphics[width=3.288cm]{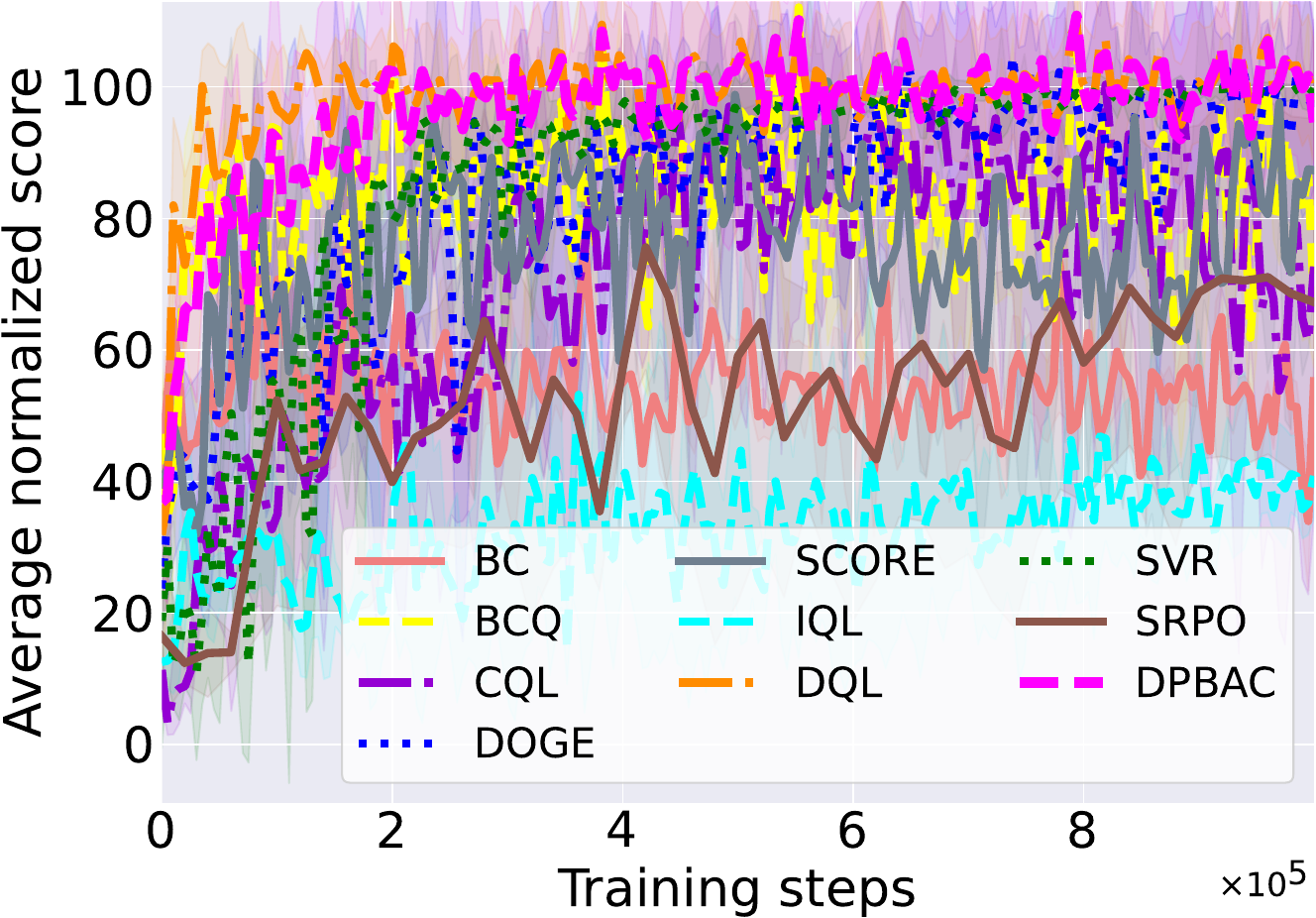}
	}
	\subfigure[walker2d-m-e-v2]{
	\includegraphics[width=3.288cm]{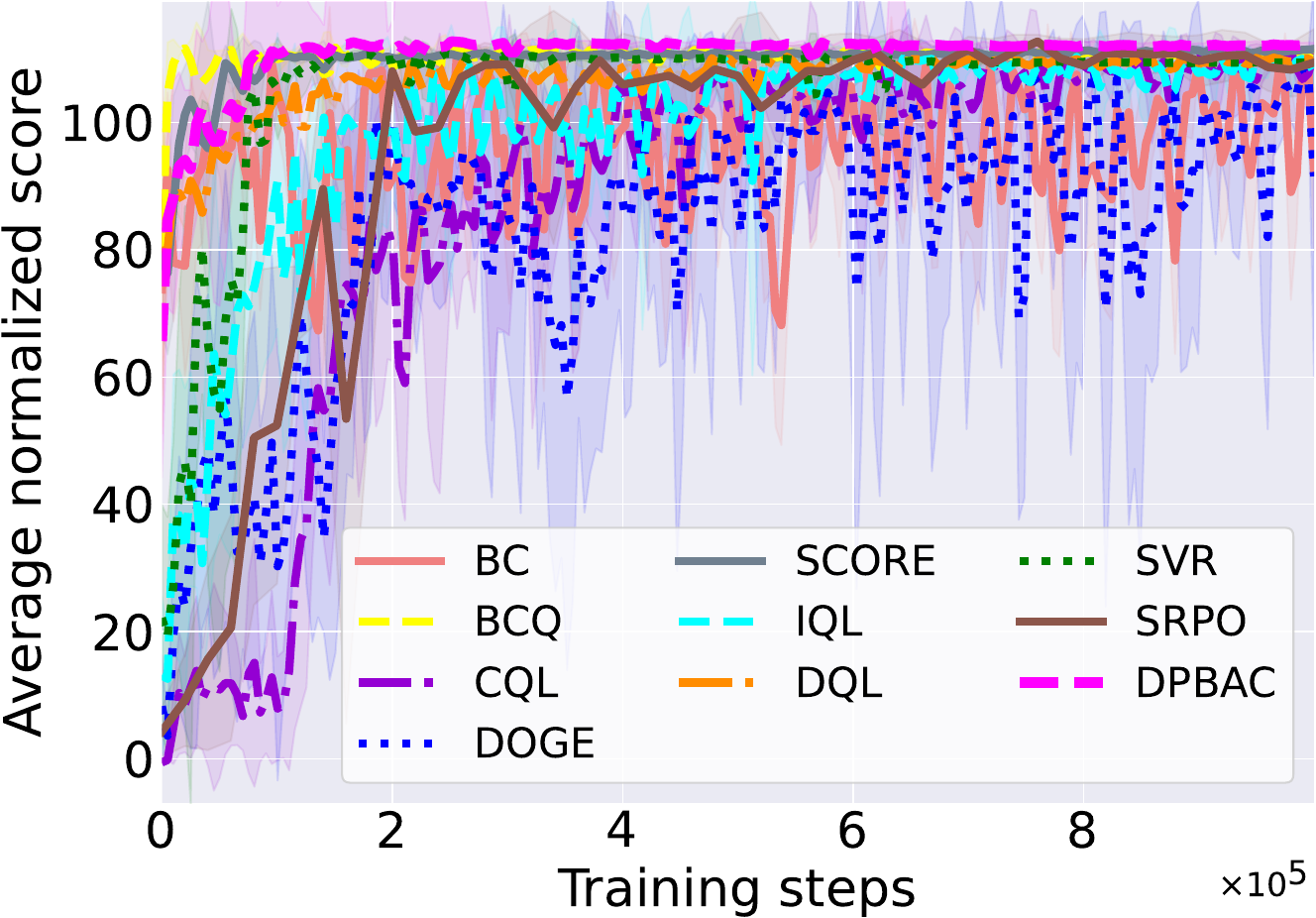}
	}
	\subfigure[kitchen-complete-v0]{
	\includegraphics[width=3.288cm]{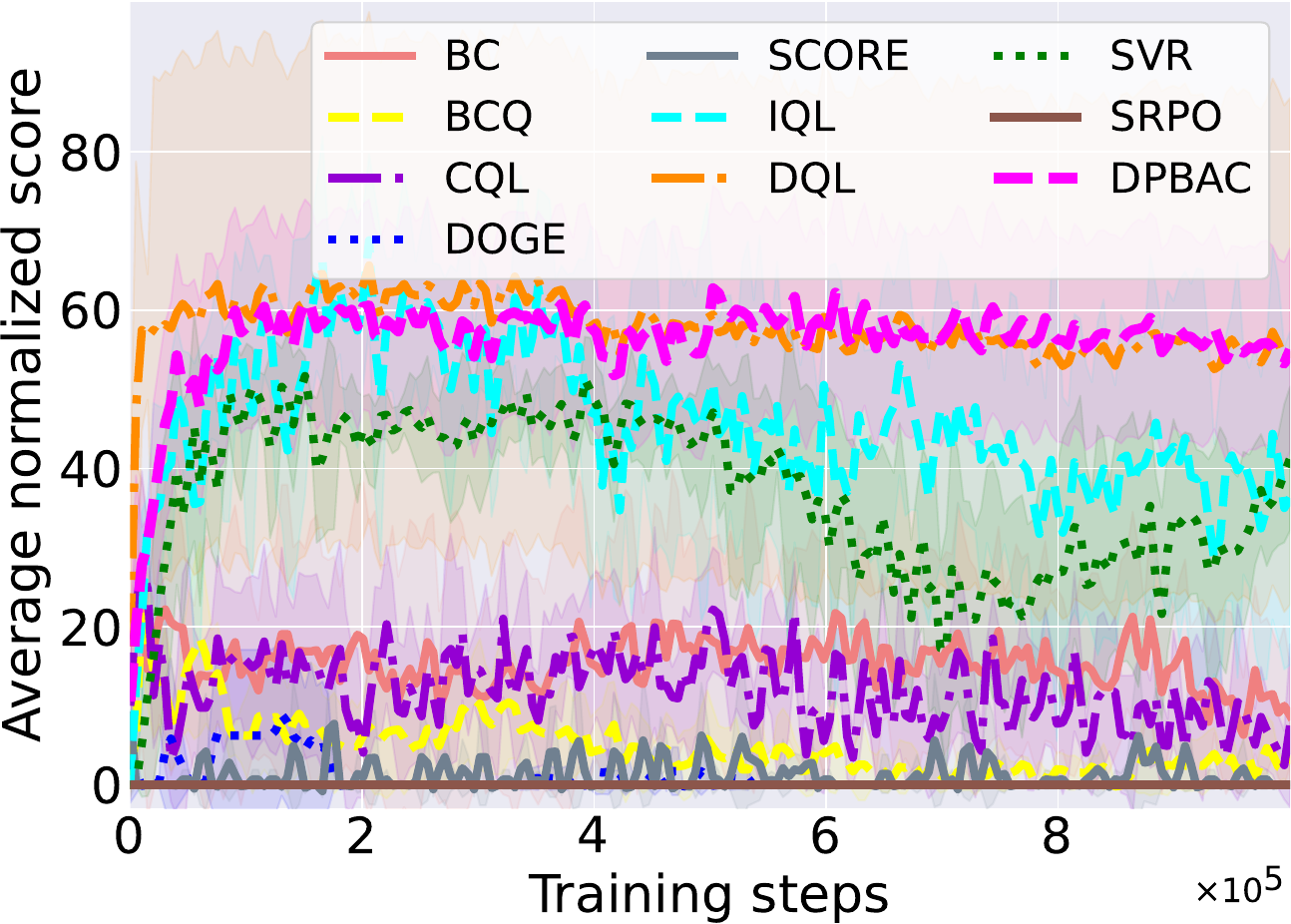}
	}
	\subfigure[kitchen-partial-v0]{
	\includegraphics[width=3.288cm]{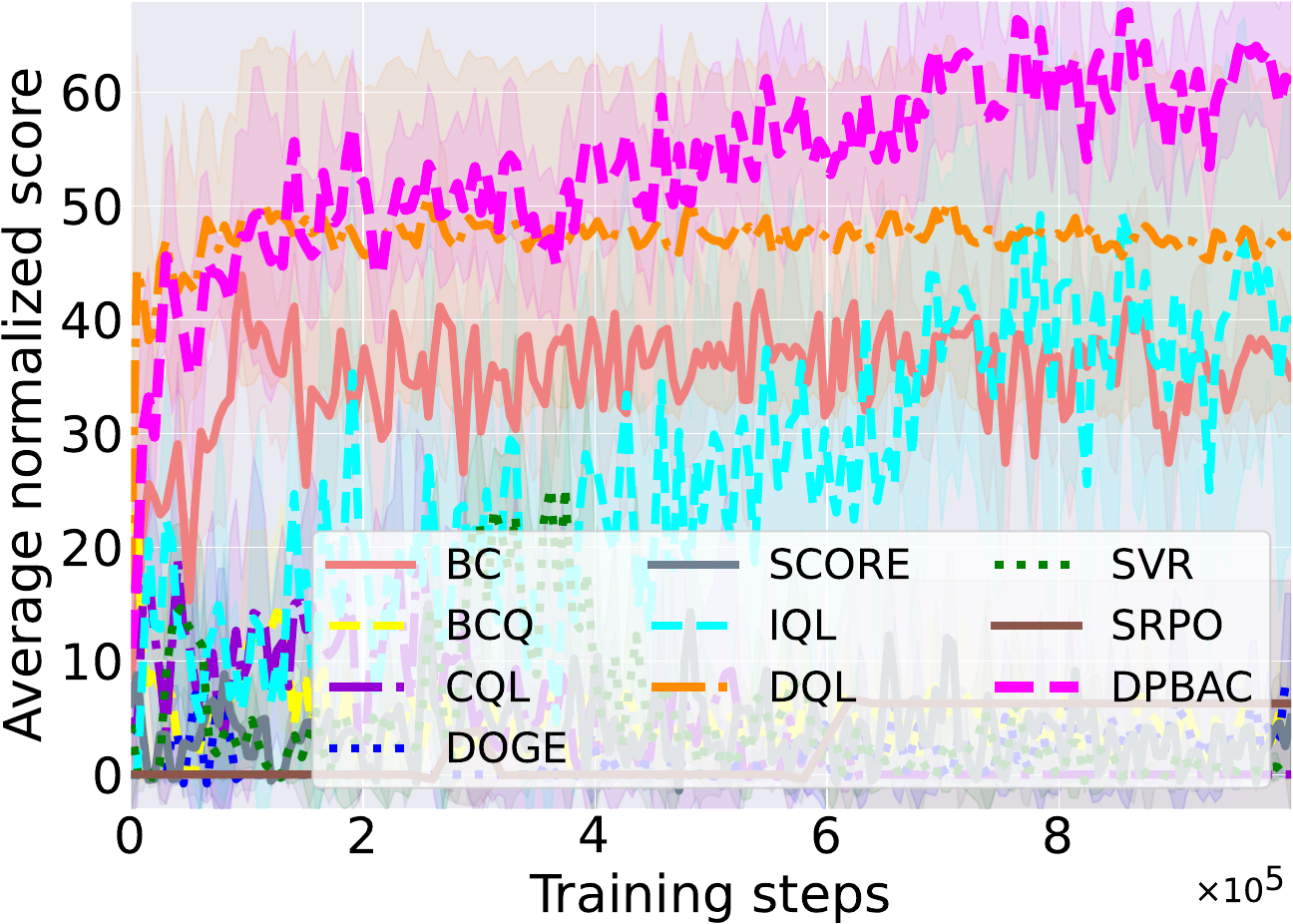}
	}
	\subfigure[kitchen-mixed-v0]{
	\includegraphics[width=3.288cm]{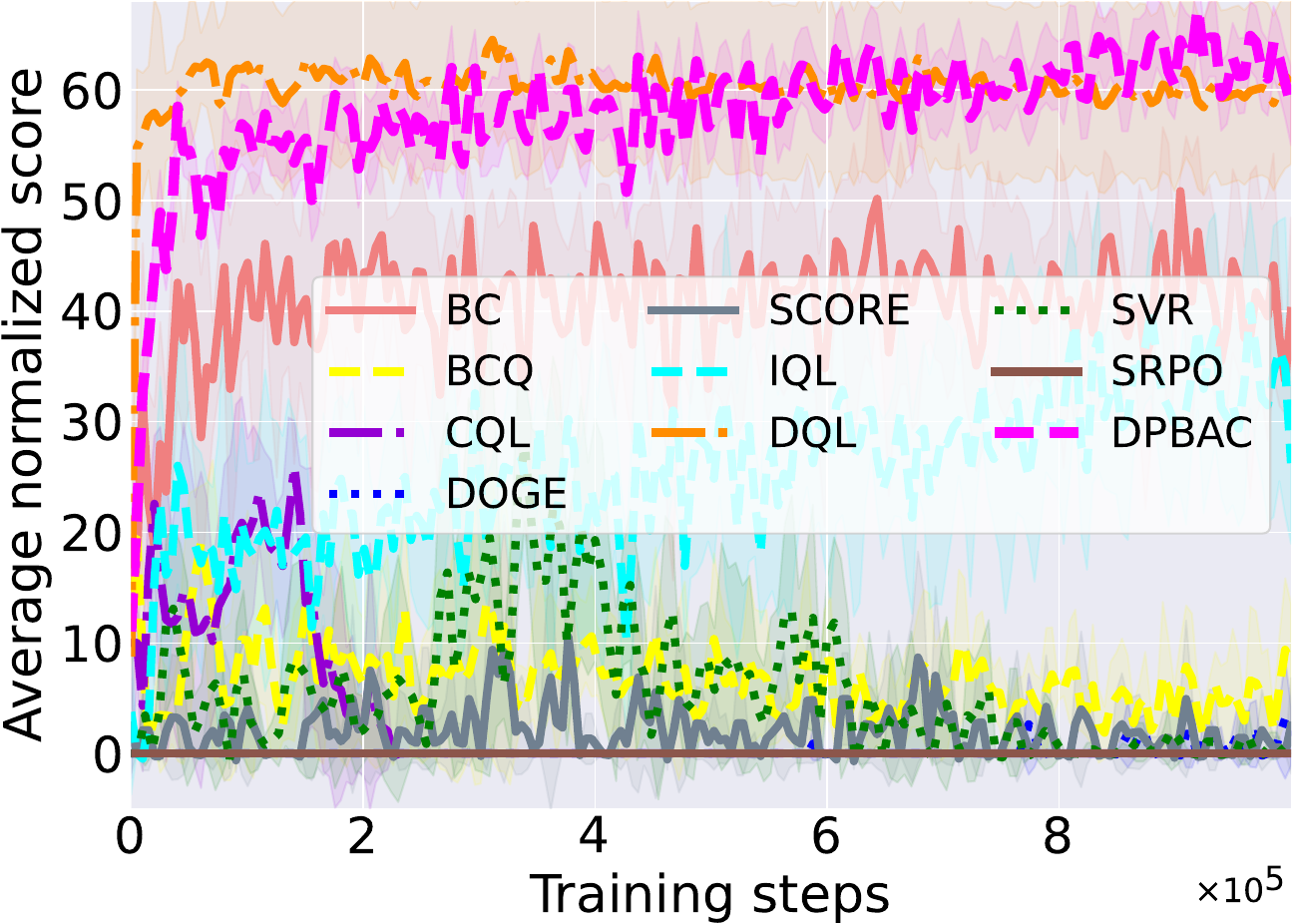}
	}
	\subfigure[antmaze-u-v2]{
	\includegraphics[width=3.288cm]{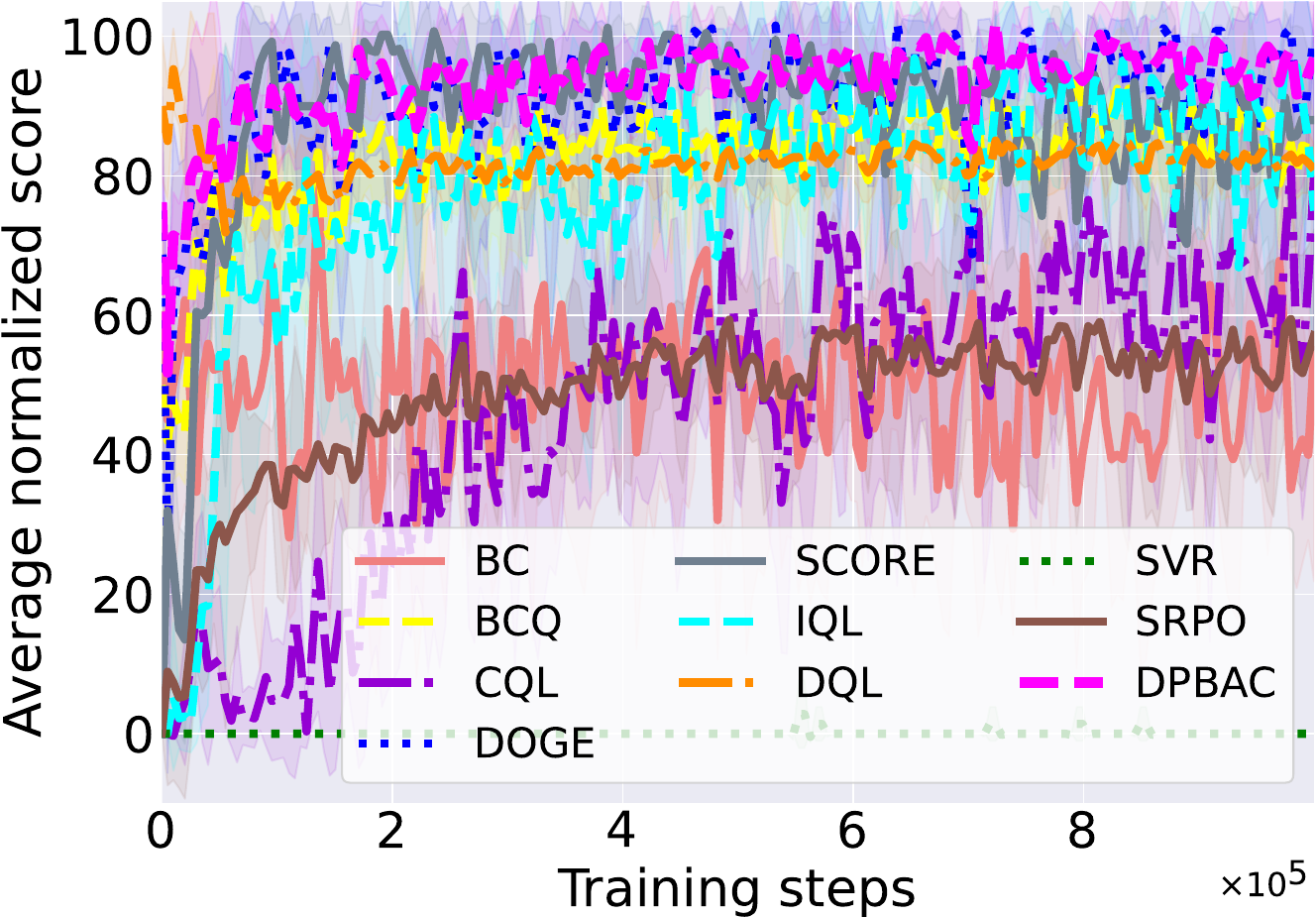}
	}
	\subfigure[antmaze-u-d-v2]{
	\includegraphics[width=3.288cm]{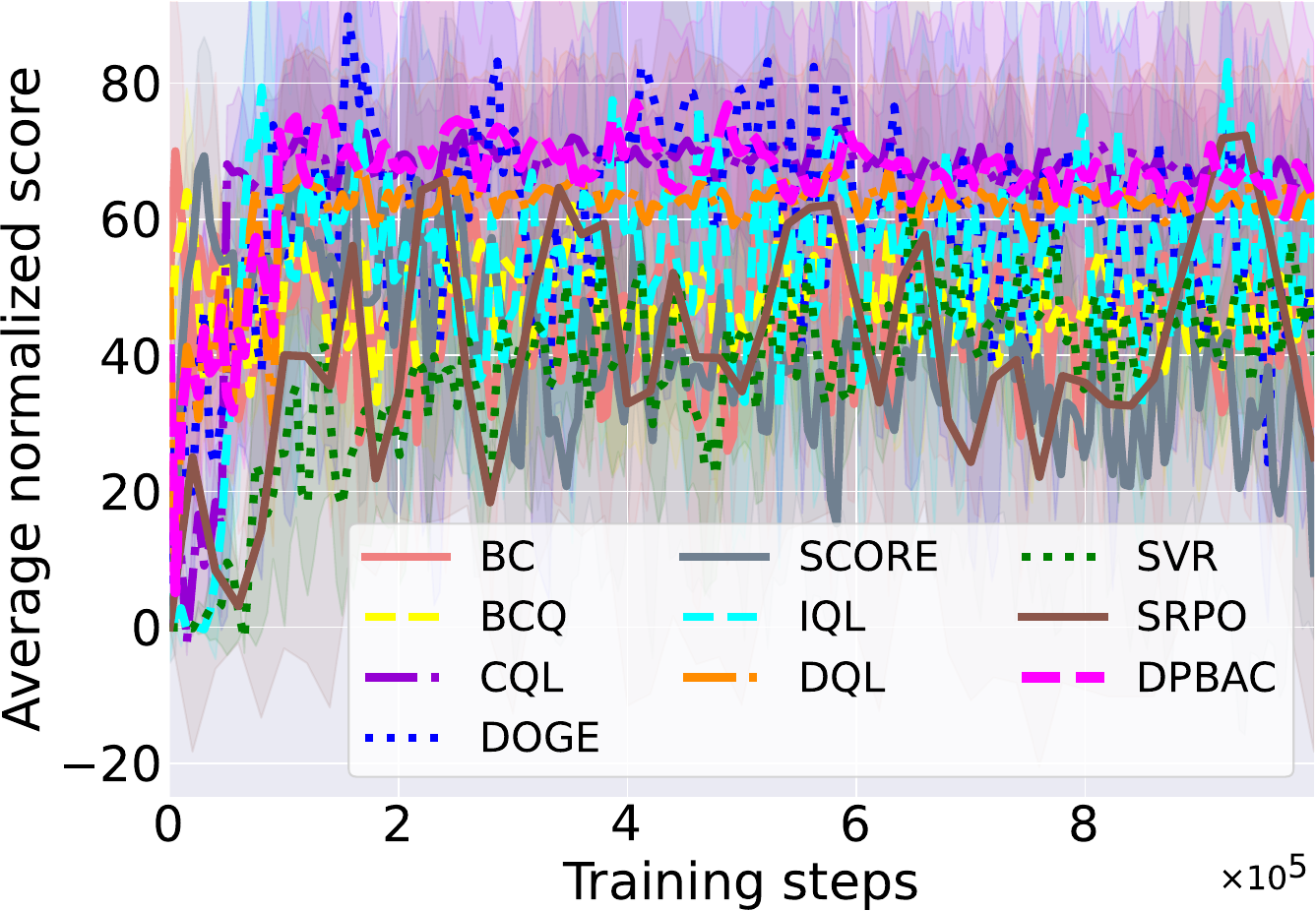}
	}
	\subfigure[antmaze-m-p-v2]{
	\includegraphics[width=3.288cm]{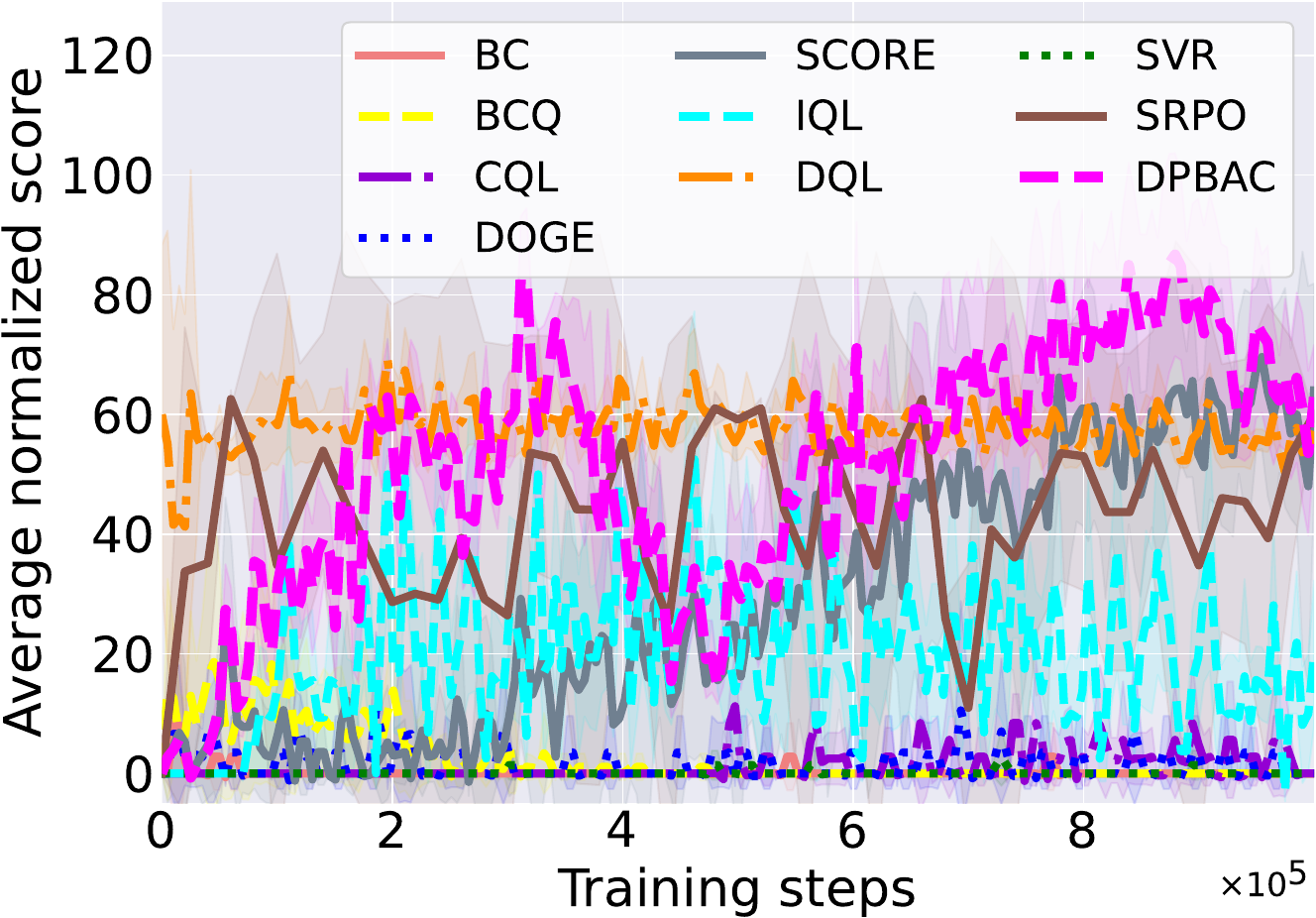}
	}
	\subfigure[antmaze-m-d-v2]{
	\includegraphics[width=3.288cm]{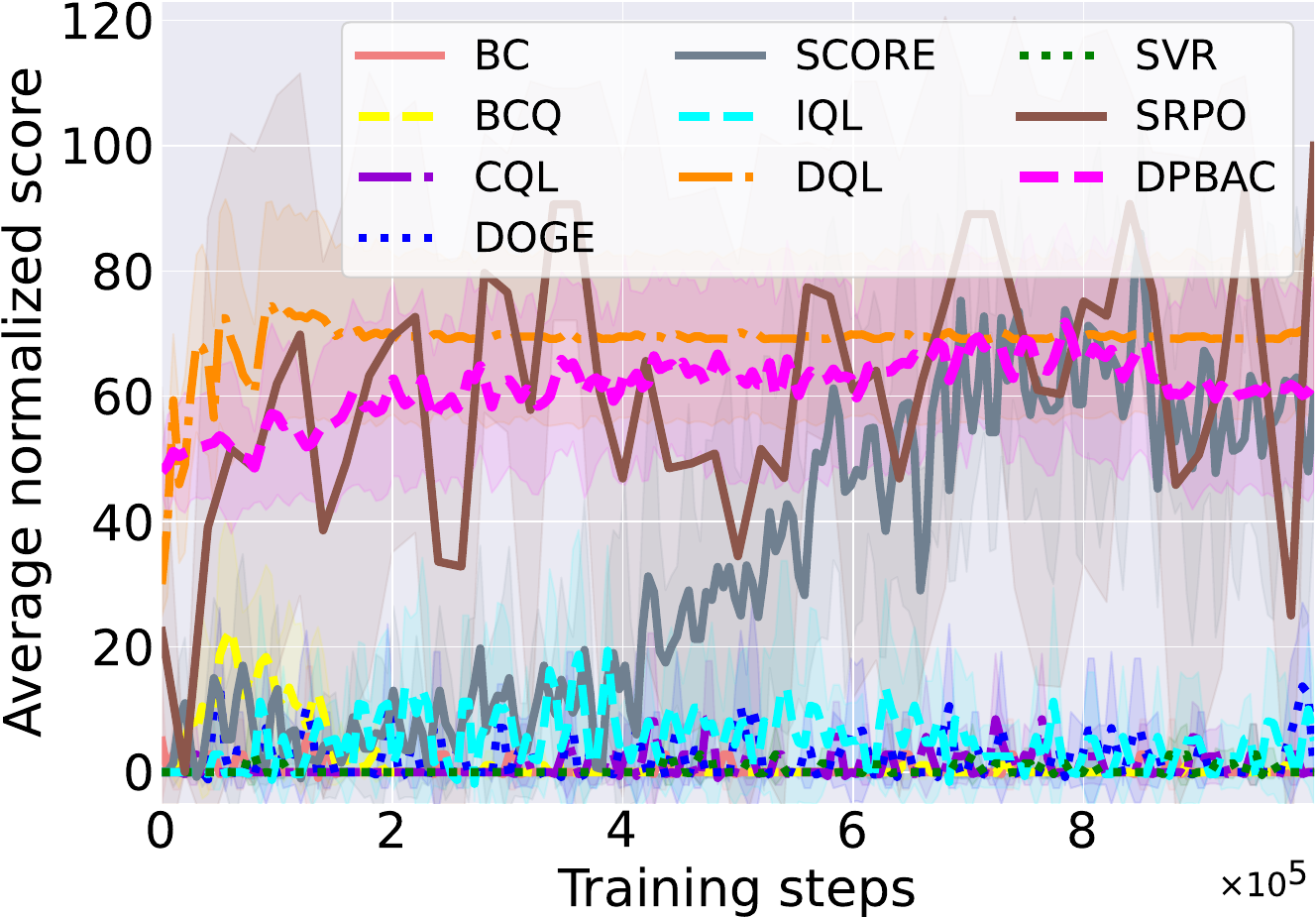}
	}
	\subfigure[antmaze-l-p-v2]{
	\includegraphics[width=3.288cm]{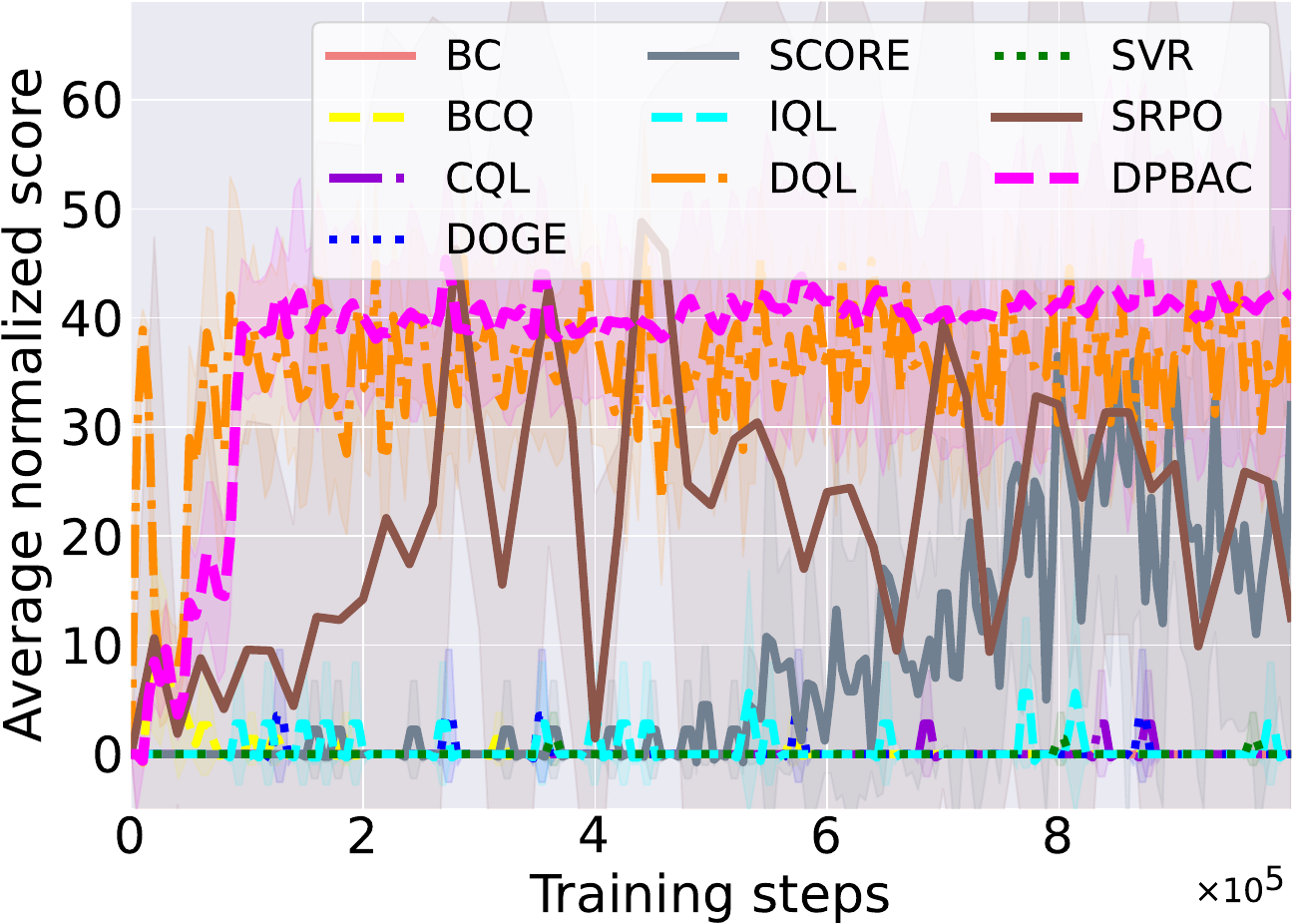}
	}
	\subfigure[antmaze-l-d-v2]{
	\includegraphics[width=3.288cm]{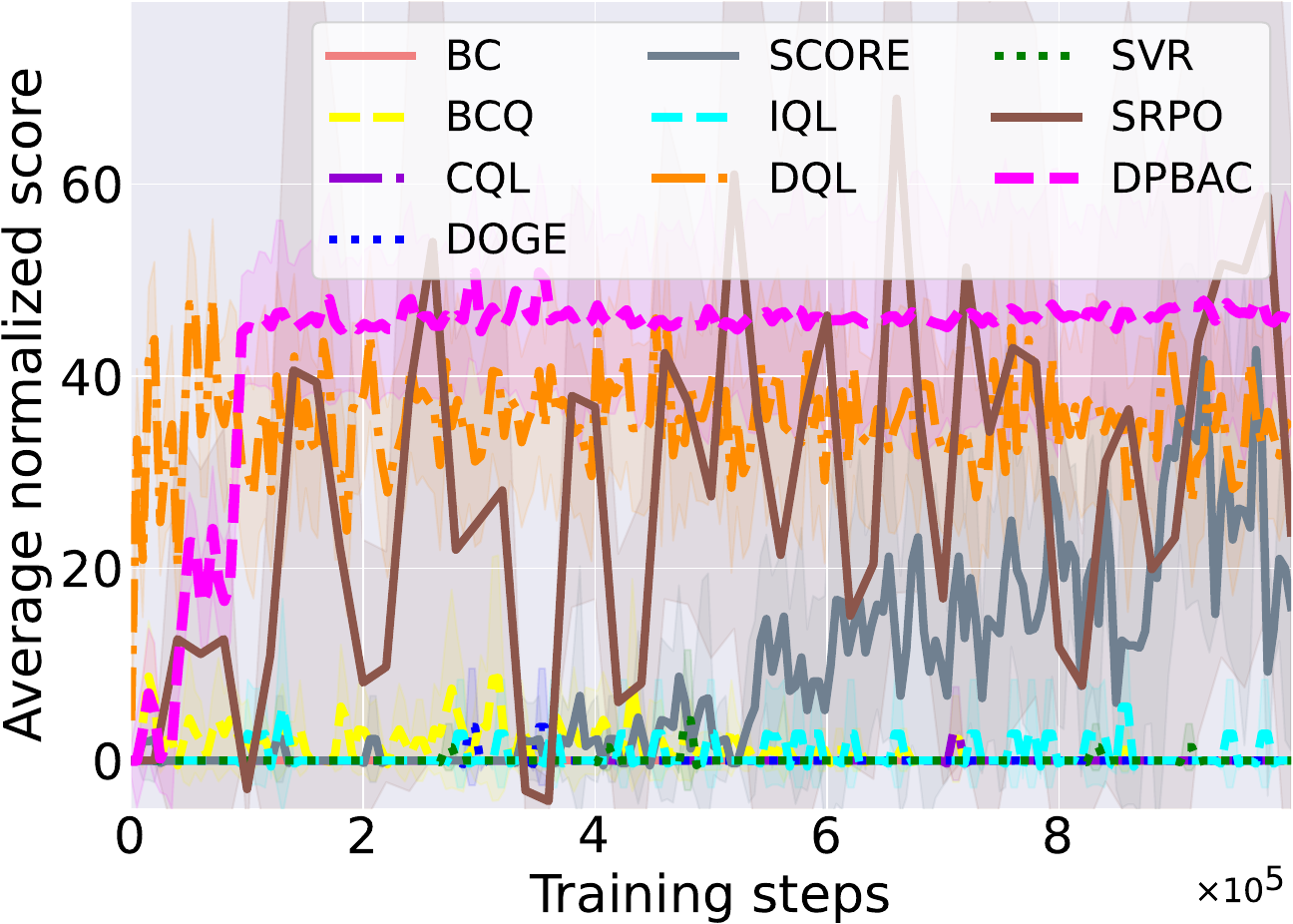}
	}
	\subfigure[pen-human-v1]{
	\includegraphics[width=3.288cm]{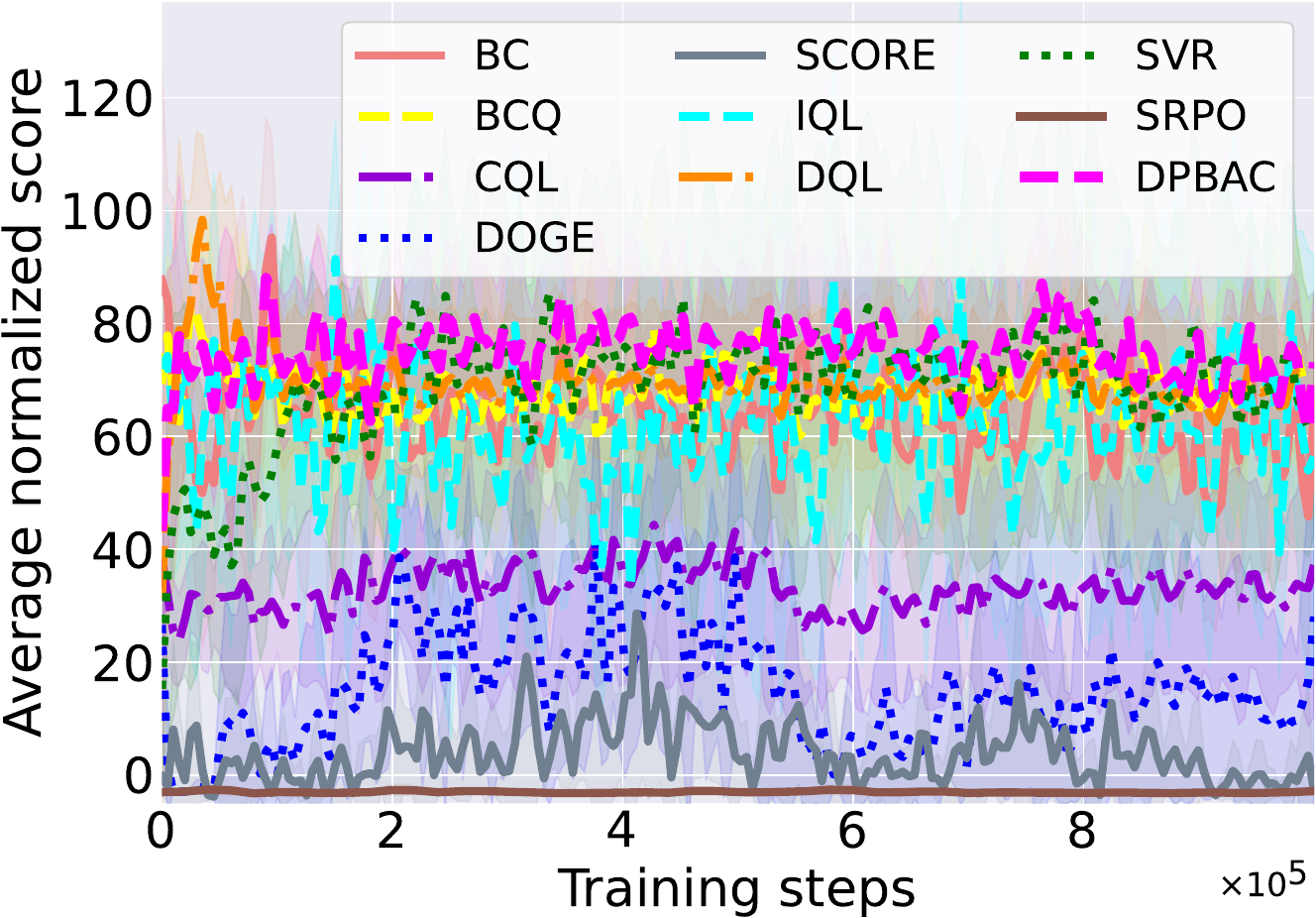}
	}
	\subfigure[pen-cloned-v1]{
	\includegraphics[width=3.288cm]{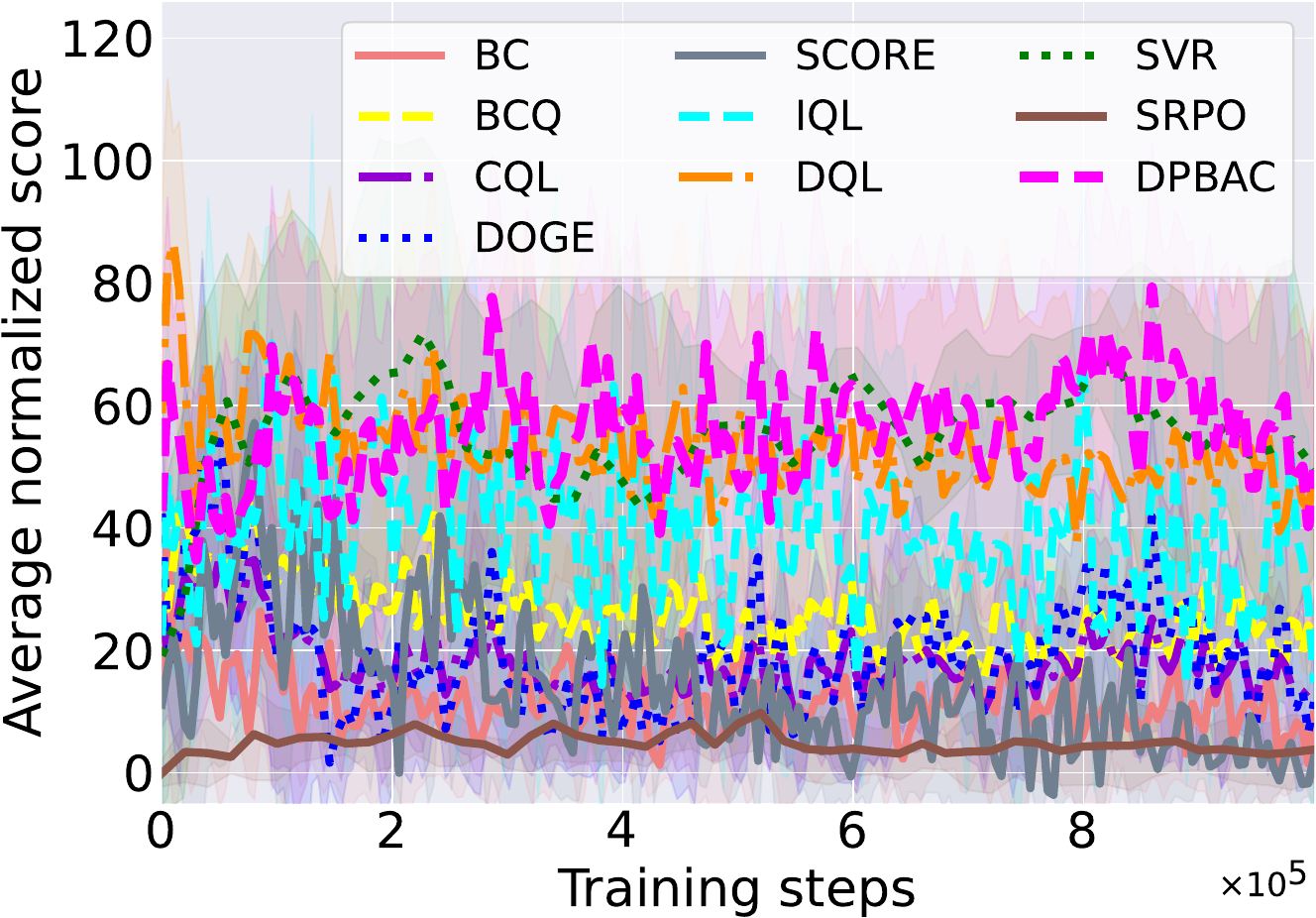}
	}
	\caption{Average normalized scores of DPBAC and SOTA baselines on Gym-MuJoCo, AntMaze, FrankaKitchen, and Adroit tasks. The central line and shaded region for each color represent the mean score curve and the standard deviation, respectively, of the corresponding algorithm across eight random seeds. m=medium, r=replay, e=expert, u=umaze, d=diverse, p=play, l=large.}
	\label{fig_normalized_scores_curves}
\end{figure*}

\begin{table}\footnotesize 
	\renewcommand\arraystretch{1.3}
	\renewcommand\tabcolsep{4.95pt}
	\centering 
	\caption{\centering Overview of testing task environment configurations} 
	\label{Description_of_testing_task_environments} 
	\begin{tabular}{c | c c} 
		\toprule[1pt] 
		\textbf{Task environments} & \textbf{State/Action dimensions} & \textbf{Action range} \\ 
		\midrule 
		Gym-MuJoCo: halfcheetah & $17 \ / \ 6$ & $\big[-1.0, 1.0\big]$ \\ 
		Gym-MuJoCo: hopper & $11 \ / \ 3$ & $\big[-1.0, 1.0\big]$ \\	
		Gym-MuJoCo: walker2d & $17 \ / \ 6$ & $\big[-1.0, 1.0\big]$ \\ 
		AntMaze: umaze & $27 \ / \ 8$ & $\big[-1.0, 1.0\big]$ \\ 
		AntMaze: medium & $27 \ / \ 8$ & $\big[-1.0, 1.0\big]$ \\ 
		AntMaze: large & $27 \ / \ 8$ & $\big[-1.0, 1.0\big]$ \\ 
		Adroit: pen & $45 \ / \ 24$ & $\big[-1.0, 1.0\big]$ \\ 
		FrankaKitchen: kitchen & $59 \ / \ 9$ & $\big[-1.0, 1.0\big]$ \\ 
		\Xhline{1pt} 
	\end{tabular}
\end{table}

\subsubsection{Baseline Offline RL Methods for Comparison}
To ensure a comprehensive and fair evaluation, we compare the proposed method against a broad set of representative baseline algorithms. Specifically, for value regularization approaches, we include CQL\footnote{CQL \cite{KumarA2020}: \url{https://github.com/aviralkumar2907/CQL}.}, SVR\footnote{SVR \cite{MaoY2024}: \url{https://github.com/MAOYIXIU/SVR}.}, IQL\footnote{IQL \cite{KostrikovI2021}: \url{https://github.com/ikostrikov/implicit_q_learning}.}, and SCORE\footnote{SCORE \cite{DengZ2024}: \url{https://github.com/familyld/SCORE}.}. For policy regularization methods, we consider BC\footnote{BC \cite{TorabiF2018}: \url{https://github.com/CherryPieSexy/imitation_learning}.}, BCQ\footnote{BCQ \cite{FujimotoS2019}: \url{https://github.com/sfujim/BCQ}.}, and DOGE\footnote{DOGE \cite{LiJXzhan2023}: \url{https://github.com/Facebear-ljx/DOGE}.}. In addition, we include diffusion-based policy approaches, namely SRPO\footnote{SRPO \cite{ChenH2024}: \url{https://github.com/thu-ml/SRPO}.} and DQL\footnote{DQL \cite{Wang172023}: \url{https://github.com/Zhendong-Wang/Diffusion-Policies-for-Offline-RL}.}, to highlight comparisons with recent diffusion-driven offline RL algorithms. All baseline results are obtained using the official implementations provided by the original authors to ensure reproducibility and consistency.

\subsubsection{Algorithm Implementation Details}
During training, we evaluate the performance of DPBAC every $5000$ steps. To mitigate the impact of randomness, all methods are executed eight times with different random seeds, and the average return is reported as the final result. All experiments are conducted on NVIDIA RTX 4090 GPUs with 24 GB of VRAM. The hyperparameter settings for DPBAC are summarized in Table~\ref{Hyperparameter_settings}. Based on the results of the sensitivity analysis presented in Section~\ref{sdnl23004hugf}, we set the diffusion steps $I_d$ to $6$ for Gym-MuJoCo, $8$ for AntMaze, and $10$ for both FrankaKitchen and Adroit. The regularization weight $\varsigma$ is configured as follows: $5$ for medium and medium-replay datasets, $1$ for medium-expert datasets, $6.5$ for AntMaze, $0.01$ for FrankaKitchen, and $0.1$ for the Adroit domain.

\begin{table}[ht]\footnotesize 
	\renewcommand\arraystretch{1.2}
	\renewcommand\tabcolsep{6pt}
	\centering 
	\caption{\centering Hyperparameters of DPBAC in our experiments} 
	\label{Hyperparameter_settings} 
	\begin{tabular}{c c} 
		\toprule[1pt] 
		\textbf{Hyperparameter} & \textbf{Value} \\ 
		\midrule 
		Target network smoothing parameter $\sigma$ & $5\times 10^{-3}$ \\ 
		\emph{Q}-function network learning rate & $3\times 10^{-4}$ \\
		Policy network learning rate & $3\times 10^{-4}$ \\ 
		Behavior policy training steps $n_{\max}$ & $2 \times 10^{5}$ \\
		Policy training steps $N_{\max}$ & $1 \times 10^{6}$ \\ 
		Diffusion noise schedule & cosine \\ 
		Parameters optimizer & Adam \\	
		Activation functions in all networks & Mish \\
		Correction coefficient $\eta$ & $0.35$ \\
		Mini-batch size & $256$ \\ 
		Hidden layer neuron count & 256 \\
		Target policy update periodicity $\delta$ & 5 \\
		Hidden layer count across all networks & 2 \\ 
		\bottomrule[1pt] 
	\end{tabular}
\end{table}

\subsection{Score Comparison with SOTA RL Algorithms}
We evaluate the performance of DPBAC across four task domains and compare it against a set of baseline algorithms. To ensure a fair comparison, we normalize the scores using the protocol provided in \cite{FuD4RL}, where a score of 100 corresponds to expert-level performance. The performance comparisons are presented in Fig. \ref{fig_normalized_scores_curves} and Table \ref{normalized_scores_curves}. Specifically, Fig. \ref{fig_normalized_scores_curves} illustrates the learning curves of all evaluated algorithms, where solid lines represent the average normalized scores over eight random seeds, and shaded areas indicate the standard deviation. Table \ref{normalized_scores_curves} reports the mean and standard deviation of the normalized scores over the last ten evaluation steps. Bolded values denote the highest scores for each task, while underlined values indicate the second-best results. Fig. \ref{RadarChart} presents a comprehensive comparison of the performance of all baseline methods over the entire suite of tasks. A detailed analysis of the results for each task domain is provided below.

\begin{figure}[!ht]
	\centering
	\includegraphics[width=4.37cm]{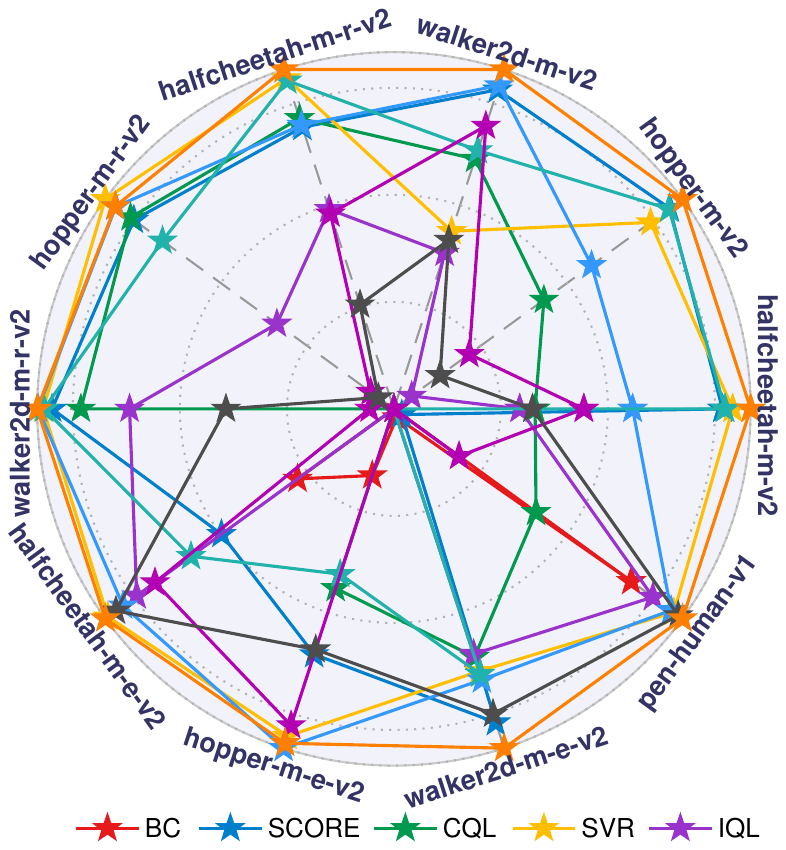}
	\includegraphics[width=4.37cm]{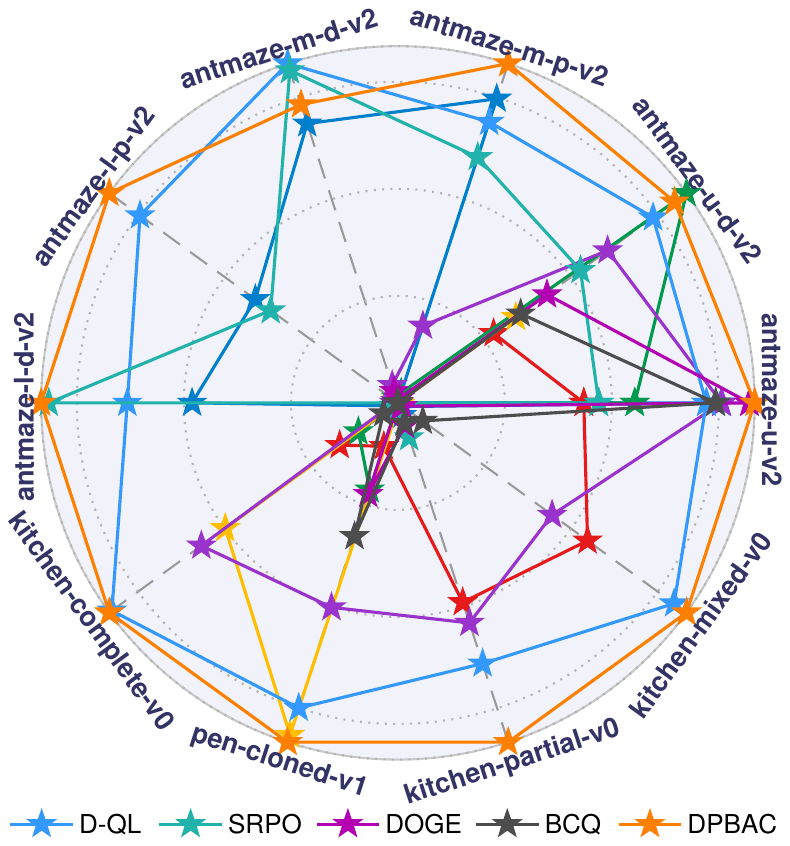}
	\caption{Radar chart visualization of the averaged normalized scores of various comparative algorithms across all evaluation tasks.}
	\label{RadarChart}
\end{figure}

\begin{table*}\footnotesize 
	\renewcommand\arraystretch{1.3}
	\renewcommand\tabcolsep{1.47pt}
	\centering 
	\caption{\centering The average normalized scores and corresponding standard deviations of DPBAC and SOTA baselines on Gym-MuJoCo, AntMaze, FrankaKitchen, and Adroit during the final 10 evaluations across 8 random seeds} 
	\label{normalized_scores_curves} 
	\begin{threeparttable}
	\begin{tabular}{c | | c c c c c c c c c | c} 
		\toprule[1pt] 
		\textbf{Gym-MuJoCo} & \textbf{BC} & \textbf{SCORE} & \textbf{CQL} & \textbf{SVR} & \textbf{IQL} & \textbf{DQL} & \textbf{SRPO} & \textbf{DOGE} & \textbf{BCQ} & \textbf{DPBAC} \\ 
		\midrule 
		halfcheetah-m-v2 & $42.5 \pm 0.4$ & $55.3 \pm 0.7$ & $48.0 \pm 0.4$ & $\underline{55.7} \pm 0.3$ & $47.4 \pm 0.3$ & $51.8 \pm 0.4$ & $55.4 \pm 0.7$ & $49.9 \pm 0.5$ & $47.9 \pm 0.3$ & $\textbf{56.4} \pm 0.6$ \\ 
		hopper-m-v2 & $52.1 \pm 4.1$ & $\underline{90.9} \pm 0.7$ & $73.2 \pm 6.4$ & $88.2 \pm 10.0$ & $54.6 \pm 5.7$ & $79.9 \pm 13.0$ & $90.8 \pm 2.1$ & $62.7 \pm 6.1$ & $58.6 \pm 4.3$ & $\textbf{92.7} \pm 6.7$ \\ 
		walker2d-m-v2 & $65.3 \pm 9.8$ & $87.4 \pm 3.6$ & $82.6 \pm 1.6$ & $77.6 \pm 24.8$ & $76.1 \pm 8.5$ & $\underline{87.7} \pm 0.6$ & $83.2 \pm 5.5$ & $84.9 \pm 3.2$ & $77.0 \pm 4.1$ & $\textbf{88.8} \pm 2.0$ \\ 
		halfcheetah-m-r-v2 & $35.0 \pm 2.9$ & $47.7 \pm 0.8$ & $48.1 \pm 19.4$ & $\underline{49.9} \pm 0.7$ & $44.0 \pm 0.9$ & $47.8 \pm 0.3$ & $49.8 \pm 1.4$ & $43.8 \pm 1.0$ & $39.7 \pm 2.1$ & $\textbf{50.3} \pm 0.8$ \\ 
		hopper-m-r-v2 & $25.6 \pm 8.1$ & $95.6 \pm 7.0$ & $96.5 \pm 2.3$ & $\textbf{103.3} \pm 0.9$ & $57.1 \pm 16.7$ & $100.5 \pm 0.1$ & $87.9 \pm 20.3$ & $31.9 \pm 3.4$ & $29.8 \pm 6.1$ & $\underline{100.6} \pm 5.6$ \\ 
		walker2d-m-r-v2& $17.7 \pm 7.6$ & $86.5 \pm 3.5$ & $80.7 \pm 3.5$ & $88.3 \pm 2.2$ & $70.9 \pm 13.5$ & $\underline{89.3} \pm 7.8$ & $88.1 \pm 4.3$ & $22.6 \pm 7.4$ & $51.5 \pm 9.5$ & $\textbf{89.5} \pm 8.4$ \\ 
		halfcheetah-m-e-v2  & $57.1 \pm 6.1$ & $72.1 \pm 11.8$ & $38.1 \pm 8.2$ & $\underline{94.5} \pm 1.7$ & $88.8 \pm 5.2$ & $91.5 \pm 0.7$ & $78.1 \pm 9.4$ & $85.2 \pm 7.0$ & $92.9 \pm 2.7$ & $\textbf{95.0} \pm 8.8$ \\ 
		hopper-m-e-v2 & $51.5 \pm 4.0$ & $84.5 \pm 23.4$ & $72.3 \pm 35.5$ & $99.6 \pm 1.1$ & $39.2 \pm 7.8$ & $\textbf{101.8} \pm 9.2$ & $69.7 \pm 27.4$ & $97.6 \pm 12.7$ & $83.6 \pm 14.1$ & $\underline{100.9} \pm 15.6$ \\ 
		walker2d-m-e-v2 & $100.5 \pm 7.7$ & $\underline{111.2} \pm 0.7$ & $108.9 \pm 3.2$ & $109.4 \pm 0.1$ & $108.8 \pm 2.5$ & $109.7 \pm 0.1$ & $109.5 \pm 4.2$ & $100.2 \pm 11.2$ & $110.9 \pm 0.4$ & $\textbf{112.1} \pm 0.5$ \\ 
		\hline
		\textbf{Average} & $49.7 \pm 5.6$ & $81.2 \pm 5.8$ & $72.0 \pm 8.9$ & $\underline{85.2} \pm 4.6$ & $65.2 \pm 6.8$ & $84.4 \pm 3.6$ & $79.2 \pm 8.4$ & $64.3 \pm 5.8$ & $65.8 \pm 4.8$ & $\textbf{87.4} \pm 5.4$ \\ 
		\Xhline{1pt} 
		\addlinespace[1.5pt]
		\toprule[1pt] 
		\textbf{AntMaze} & \textbf{BC} & \textbf{SCORE} & \textbf{CQL} & \textbf{SVR} & \textbf{IQL} & \textbf{DQL} & \textbf{SRPO} & \textbf{DOGE} & \textbf{BCQ} & \textbf{DPBAC} \\ 
		\midrule 
		antmaze-u-v2 & $49.6 \pm 18.2$ & $86.3 \pm 13.6$ & $63.2 \pm 18.9$ & $0.0 \pm 0.0$ & $86.4 \pm 14.5$ & $82.3 \pm 4.1$ & $53.6 \pm 16.6$ & $\underline{93.8} \pm 8.5$ & $84.8 \pm 8.7$ & $\textbf{95.2} \pm 10.6$ \\ 
		antmaze-u-d-v2 & $42.8 \pm 22.0$ & $30.8 \pm 32.5$ & $\textbf{67.0} \pm 10.5$ & $45.6 \pm 13.1$ & $57.1 \pm 19.9$ & $62.8 \pm 18.9$ & $53.7 \pm 41.8$ & $49.5 \pm 25.5$ & $46.2 \pm 15.2$ & $\underline{65.5} \pm 23.6$ \\ 
		antmaze-m-p-v2 & $0.0 \pm 0.0$ & $\underline{60.4} \pm 18.3$ & $2.0 \pm 3.5$ & $0.0 \pm 0.0$ & $15.3 \pm 10.1$ & $55.8 \pm 3.0$ & $48.8 \pm 20.9$ & $0.9 \pm 1.6$ & $0.0 \pm 0.0$ & $\textbf{67.4} \pm 12.8$ \\ 
		antmaze-m-d-v2 & $0.0 \pm 0.0$ & $57.2 \pm 22.6$ & $1.0 \pm 1.7$ & $0.6 \pm 1.1$ & $3.6 \pm 5.8$ & $\textbf{69.5} \pm 13.0$ & $\underline{68.1} \pm 26.0$ & $2.4 \pm 3.0$ & $0.1 \pm 0.2$ & $61.1 \pm 16.1$ \\ 
		antmaze-l-p-v2 & $0.0 \pm 0.0$ & $20.4 \pm 18.1$ & $0.0 \pm 0.0$ & $0.1 \pm 0.2$ & $0.3 \pm 0.5$ & $\underline{36.9} \pm 8.0$ & $18.2 \pm 45.8$ & $0.0 \pm 0.0$ & $0.0 \pm 0.0$ & $\textbf{41.3} \pm 14.8$ \\ 
		antmaze-l-d-v2 & $0.0 \pm 0.0$ & $27.0 \pm 15.2$ & $0.0 \pm 0.0$ & $0.1 \pm 0.2$ & $0.5 \pm 1.0$ & $35.5 \pm 9.3$ & $\underline{45.8} \pm 41.9$ & $0.0 \pm 0.0$ & $0.0 \pm 0.0$ & $\textbf{46.8} \pm 12.0$ \\ 
		\hline
		\textbf{Average} & $15.4 \pm 6.7$ & $47.0 \pm 20.1$ & $22.2 \pm 5.8$ & $7.7 \pm 2.4$ & $27.2 \pm 8.6$ & $\underline{57.1} \pm 9.4$ & $48.0 \pm 32.2$ & $24.4 \pm 6.4$ & $21.9 \pm 4.0$ & $\textbf{62.9} \pm 15.0$ \\ 
		\Xhline{1pt} 
		\addlinespace[1.5pt]
		\toprule[1pt] 
		\textbf{FrankaKitchen} & \textbf{BC} & \textbf{SCORE} & \textbf{CQL} & \textbf{SVR} & \textbf{IQL} & \textbf{DQL} & \textbf{SRPO} & \textbf{DOGE} & \textbf{BCQ} & \textbf{DPBAC} \\ 
		\midrule 
		kitchen-complete-v0 & $11.2 \pm 9.2$ & $0.8 \pm 1.1$ & $7.6 \pm 6.9$ & $33.2 \pm 10.2$ & $37.8 \pm 20.7$ & $\underline{55.0} \pm 32.0$ & $0.0 \pm 0.0$ & $0.0 \pm 0.0$ & $2.8 \pm 2.9$ & $\textbf{55.5} \pm 12.9$ \\ 
		kitchen-partial-v0 & $35.8 \pm 6.0$ & $2.2 \pm 2.9$ & $0.0 \pm 0.0$ & $0.5 \pm 0.8$ & $39.5 \pm 17.3$ & $\underline{46.8} \pm 14.7$ & $6.2 \pm 10.8$ & $4.0 \pm 5.1$ & $3.6 \pm 2.7$ & $\textbf{60.9} \pm 6.6$ \\ 
		kitchen-mixed-v0 & $41.2 \pm 8.5$ & $1.1 \pm 1.8$ & $0.0 \pm 0.0$ & $0.4 \pm 0.6$ & $33.5 \pm 9.6$ & $\underline{60.0} \pm 7.9$ & $0.0 \pm 0.0$ & $1.0 \pm 1.6$ & $5.4 \pm 4.2$ & $\textbf{62.7} \pm 3.3$ \\ 
		\hline
		\textbf{Average} & $29.4 \pm 7.9$ & $1.4 \pm 1.9$ & $2.5 \pm 2.3$ & $11.4 \pm 3.9$ & $36.9 \pm 15.9$ & $\underline{53.9} \pm 18.2$ & $2.1 \pm 3.6$ & $1.7 \pm 2.2$ & $3.9 \pm 3.3$ & $\textbf{59.7} \pm 7.6$ \\ 
		\Xhline{1pt} 
		\addlinespace[1.5pt]
		\toprule[1pt] 
		\textbf{Adroit} & \textbf{BC} & \textbf{SCORE} & \textbf{CQL} & \textbf{SVR} & \textbf{IQL} & \textbf{DQL} & \textbf{SRPO} & \textbf{DOGE} & \textbf{BCQ} & \textbf{DPBAC} \\ 
		\midrule 
		pen-human-v1 & $57.1 \pm 20.3$ & $-1.0 \pm 4.1$ & $33.0 \pm 18.9$ & $67.9 \pm 30.1$ & $62.6 \pm 28.3$ & $67.2 \pm 12.8$ & $-3.0 \pm 0.2$ & $13.5 \pm 20.2$ & $\underline{69.0} \pm 17.5$ & $\textbf{70.1} \pm 12.9$ \\ 
		pen-cloned-v1 & $9.3 \pm 12.9$ & $2.7 \pm 7.5$ & $15.9 \pm 12.1$ & $\underline{53.2} \pm 22.1$ & $33.8 \pm 24.3$ & $49.1 \pm 27.2$ & $3.5 \pm 5.3$ & $16.6 \pm 11.7$ & $23.0 \pm 12.6$ & $\textbf{54.3} \pm 23.1$ \\ 
		\hline
		\textbf{Average} & $33.2 \pm 16.6$ & $0.9 \pm 5.8$ & $24.5 \pm 15.5$ & $\underline{60.6} \pm 26.1$ & $48.2 \pm 26.3$ & $58.2 \pm 20.0$ & $0.3 \pm 2.8$ & $15.1 \pm 16.0$ & $46.0 \pm 15.1$ & $\textbf{62.2} \pm 18.0$ \\ 
		\Xhline{1pt} 
	\end{tabular}
	\begin{tablenotes}
        \small
        \item[1] In the Gym-MuJoCo suite of tasks, m denotes medium, r stands for replay, and e refers to expert.
        \item[2] In the AntMaze suite of tasks, u represents umaze, d denotes diverse, m stands for medium, p refers to play, and l indicates large.
    \end{tablenotes}
	\end{threeparttable}
\end{table*}

\subsubsection{Results in Gym-MuJoCo Domain}
As shown in Fig. \ref{fig_normalized_scores_curves} and Table \ref{normalized_scores_curves}, DPBAC achieves the best performance on the majority of locomotion control tasks. Most baseline methods perform well on the medium-expert datasets, which contain a subset of high-quality trajectories. DPBAC not only matches but further improves upon these results. However, in the medium and medium-replay datasets, where high-quality data is scarce, baseline methods often experience significant performance degradation. In contrast, DPBAC consistently delivers superior results, owing to the high expressiveness of its diffusion-based policy and the effectiveness of the behavioral advantage correction mechanism. These findings demonstrate the robustness and efficacy of the proposed approach.

\subsubsection{Results in AntMaze Domain}
The AntMaze domain presents a significant challenge due to its sparse reward signals and the absence of optimal trajectories. Consequently, accurate \emph{Q}-value estimation is crucial for achieving satisfactory performance. In addition, algorithms must be capable of composing near-optimal trajectories by stitching together suboptimal segments to accomplish goal-directed navigation. As shown in Table~\ref{normalized_scores_curves}, traditional approaches such as BC and DOGE yield near-zero scores in medium and large maze variants. In contrast, DPBAC obtains the best average AntMaze score and achieves the best or second-best performance on most variants, even in the most complex large maze scenarios. This advantage stems from its reliable \emph{Q}-value estimation mechanism, which remains effective under sparse-reward conditions. By providing value-guided feedback to the diffusion policy, DPBAC leverages the generative capacity of the diffusion model to generate effective trajectories that approximate optimal behavior.

\begin{figure}[!ht]
	\centering
	\subfigure[medium dataset]{
	\includegraphics[width=4.25cm]{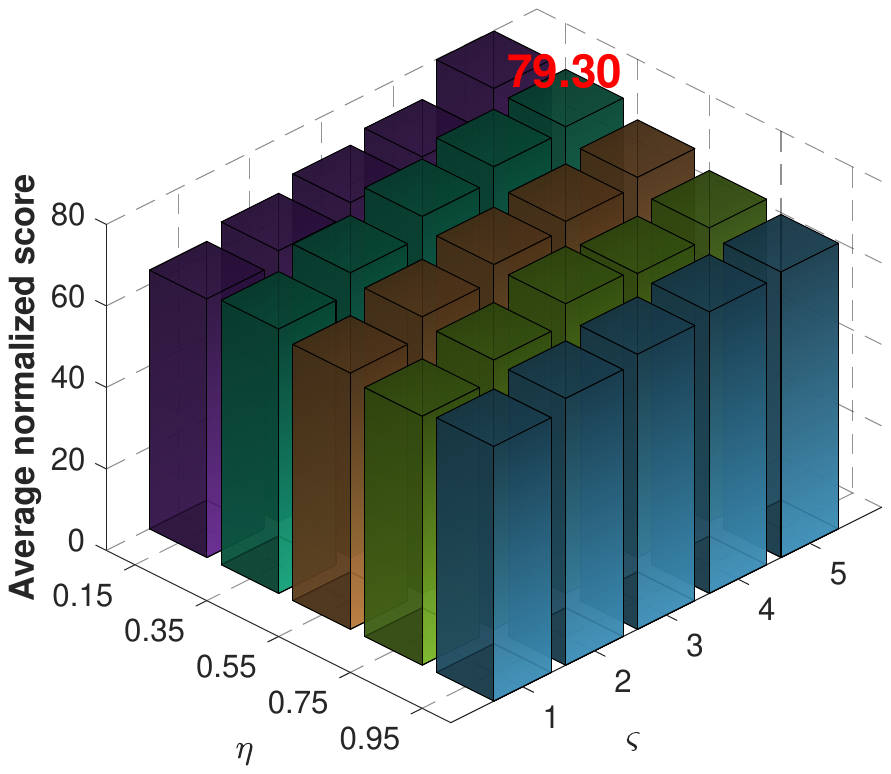}
	\includegraphics[width=4.25cm]{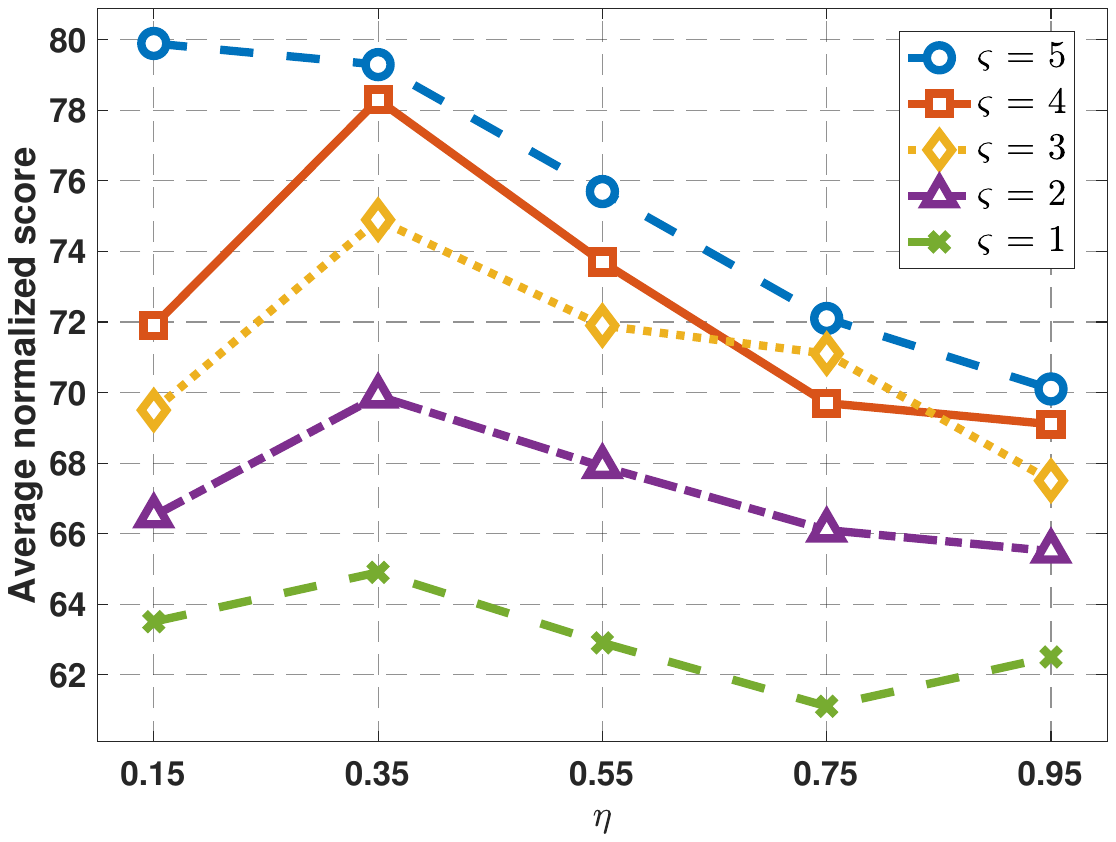}
	}
	\subfigure[medium-replay dataset]{
	\includegraphics[width=4.25cm]{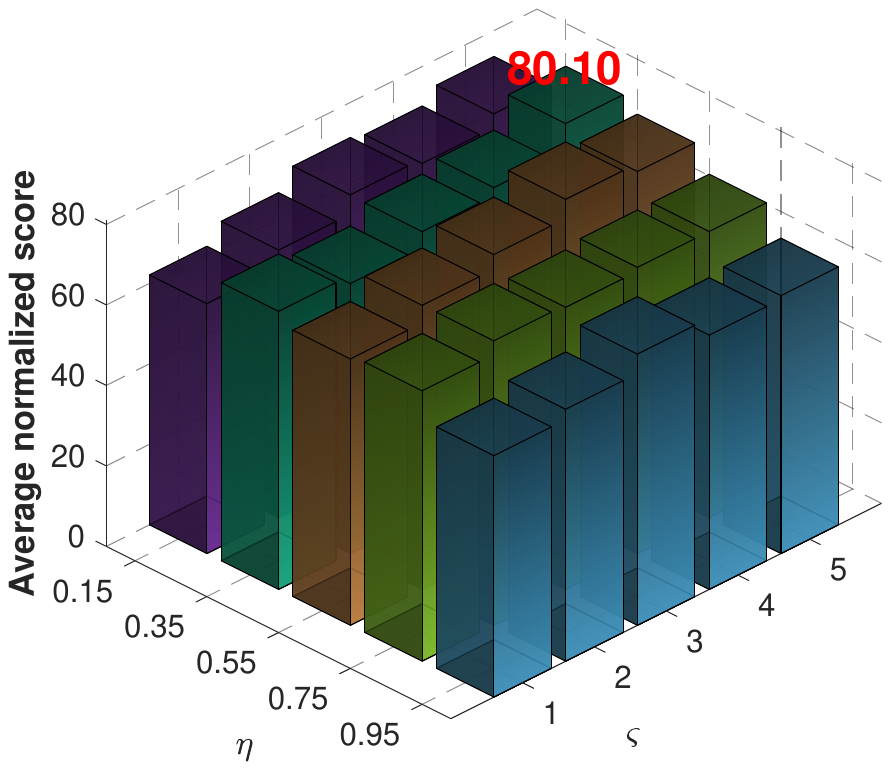}
	\includegraphics[width=4.25cm]{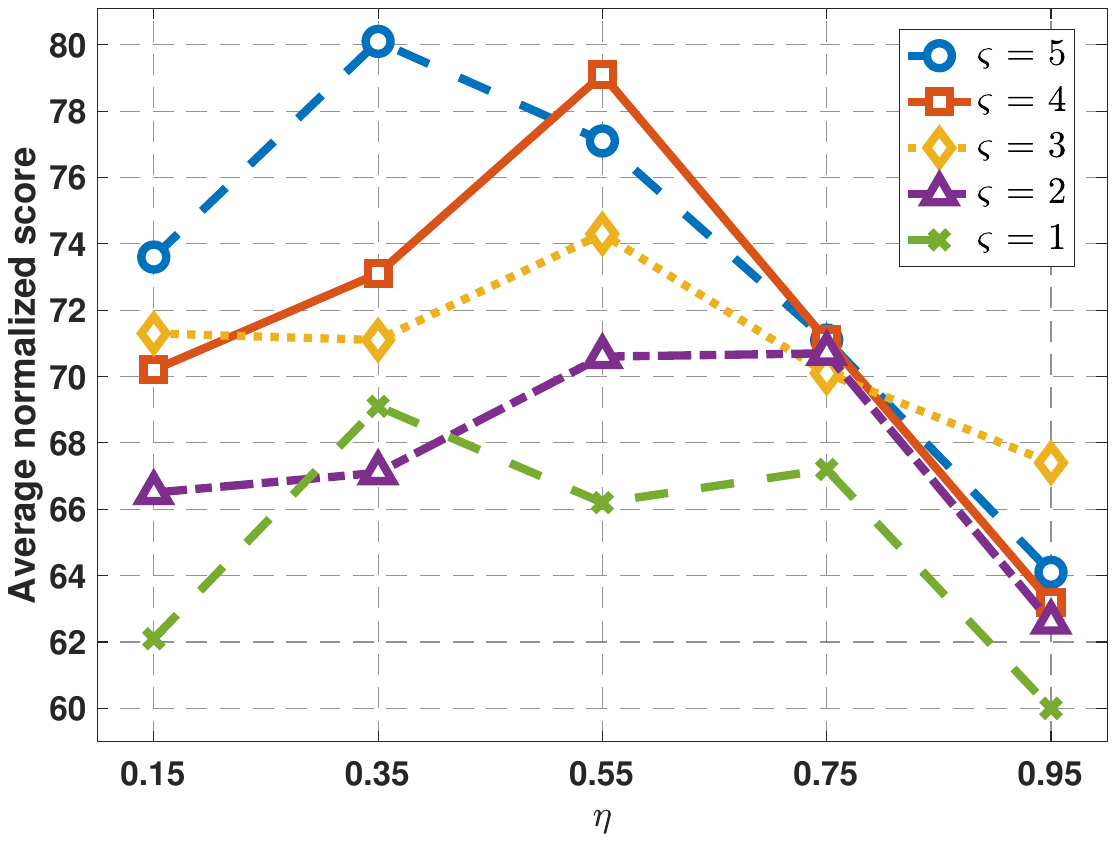}
	}
	\subfigure[medium-expert dataset]{
	\includegraphics[width=4.25cm]{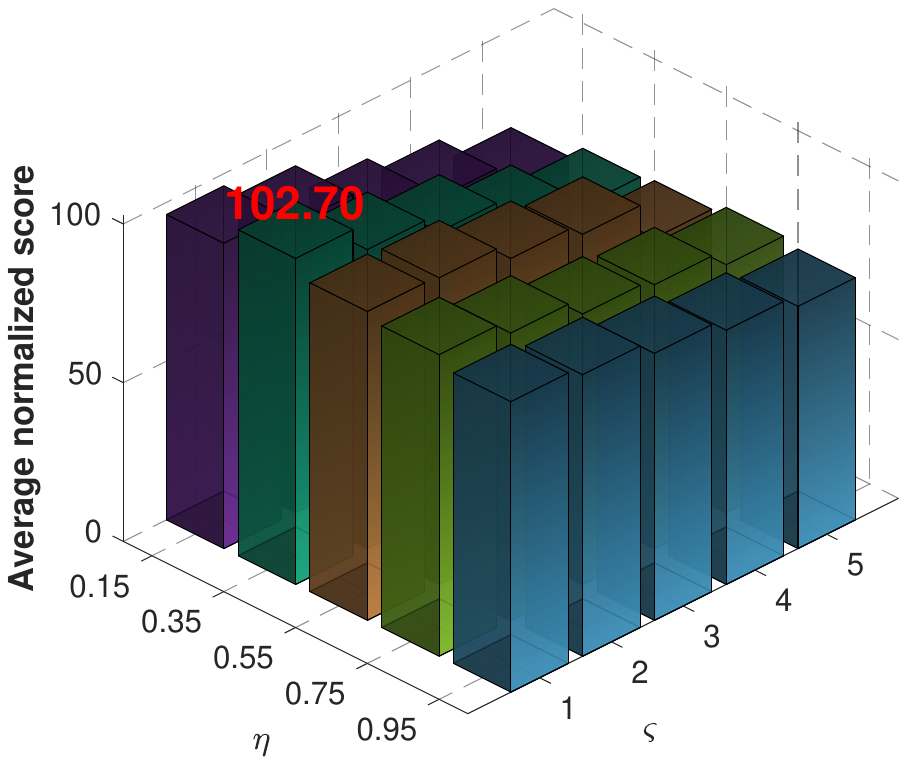}
	\includegraphics[width=4.25cm]{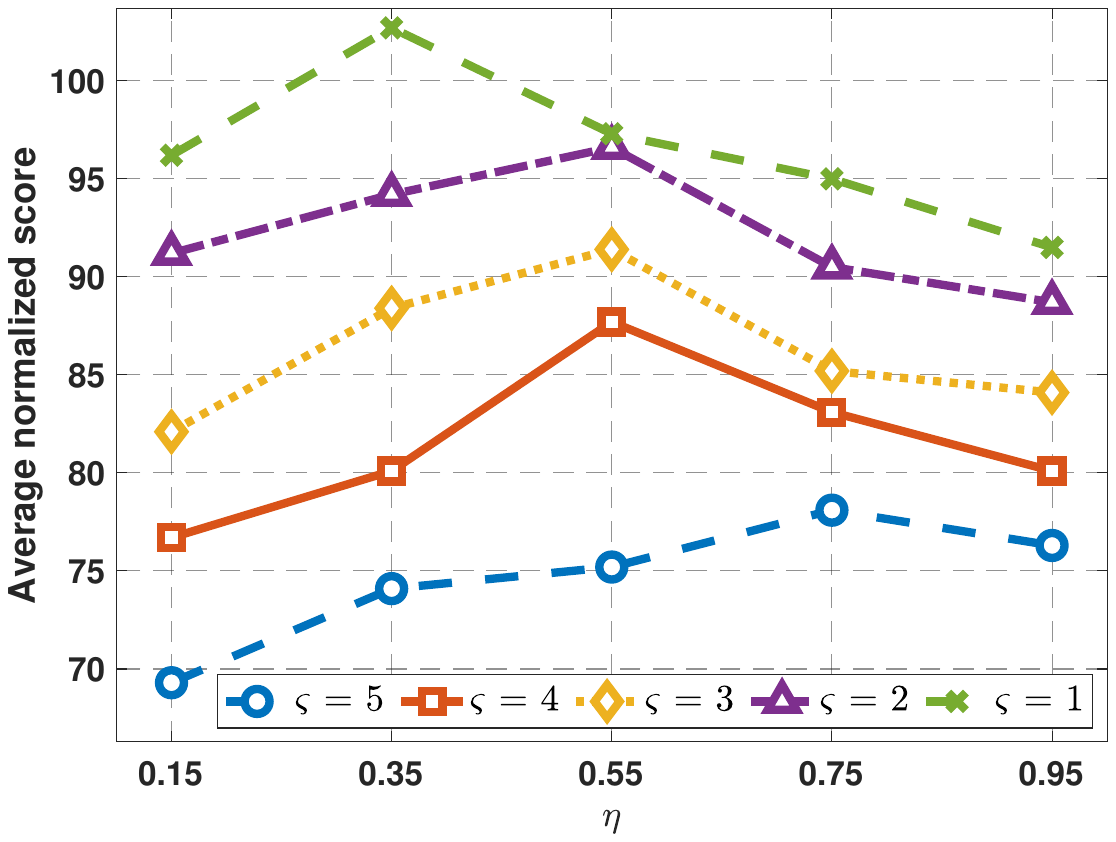}
	}
	\caption{Sensitivity test on the correction coefficient $\eta$ and regularization weight $\varsigma$ in Gym-MuJoCo tasks. Results are reported across eight random seeds for $\eta \in \{0.15, 0.35, 0.55, 0.75, 0.95\}$ and $\varsigma \in \{1, 2, 3, 4, 5\}$ on medium, medium-replay, and medium-expert datasets.}
	\label{3Dbar_test}
\end{figure}

\subsubsection{Results in FrankaKitchen and Adroit Domains}
The Adroit domain is characterized by narrow data distribution and high-dimensional state-action space, making it particularly susceptible to extrapolation errors. Consequently, the policy regularization ability becomes critical for ensuring reliable performance. The strong results achieved by DPBAC on Adroit domain can be attributed to the expressive ability of its diffusion policy, as enforced by the loss term $\mathcal{L}_{\mu}(\phi)$ in (\ref{adnb329045426}), which imposes strong regularization and effectively mitigates performance degradation caused by erroneous policy shift. The FrankaKitchen domain poses a long-horizon, multi-stage manipulation challenge that requires algorithms to model multi-goal behaviors and address the issue of long-horizon credit assignment. Most existing offline RL algorithms struggle to achieve satisfactory performance in this domain. However, DPBAC demonstrates consistently strong results, underscoring both the applicability and the effectiveness of diffusion-based policies in addressing such complex and compositional tasks.

\begin{figure}[!ht]
	\centering
	\subfigure[Gym-MuJoCo]{
	\includegraphics[width=4.120cm]{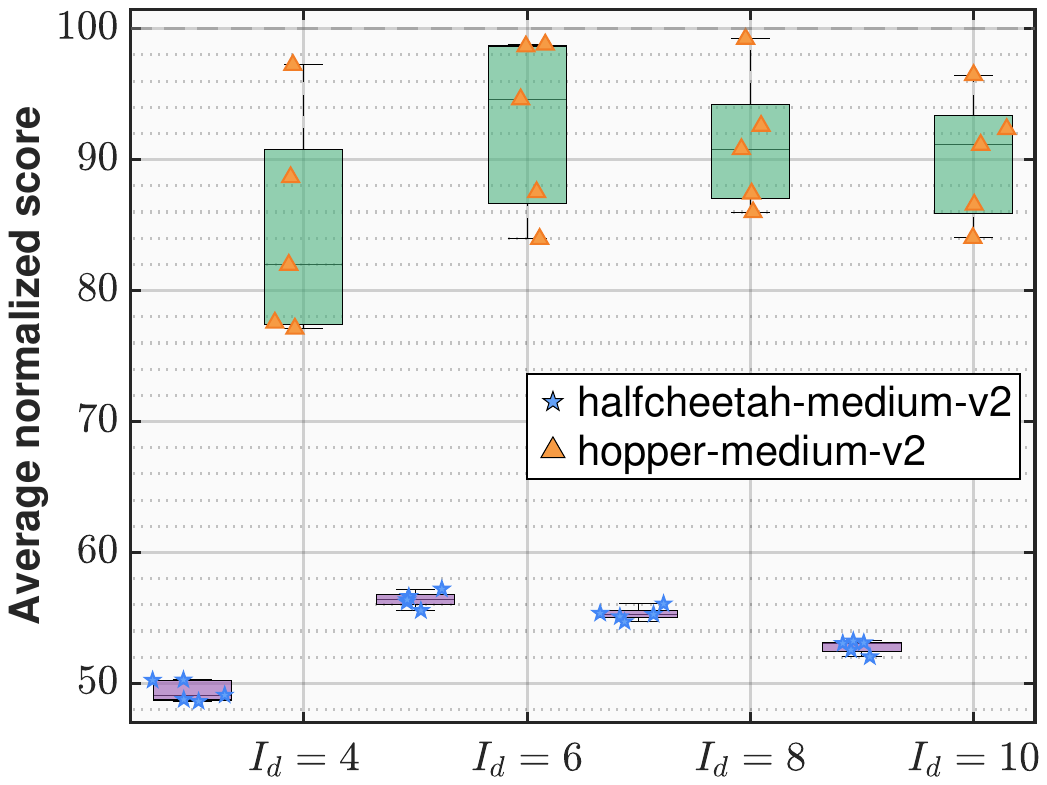}
	}
	\subfigure[AntMaze]{
	\includegraphics[width=4.120cm]{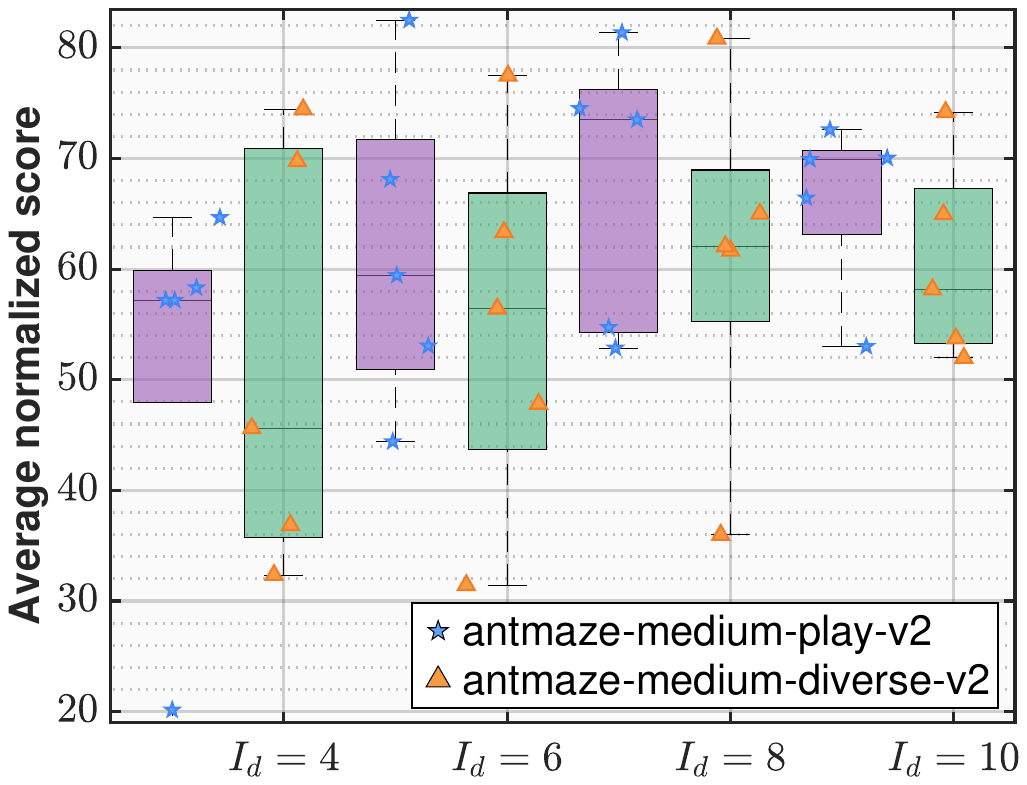}
	}
	\subfigure[FrankaKitchen]{
	\includegraphics[width=4.120cm]{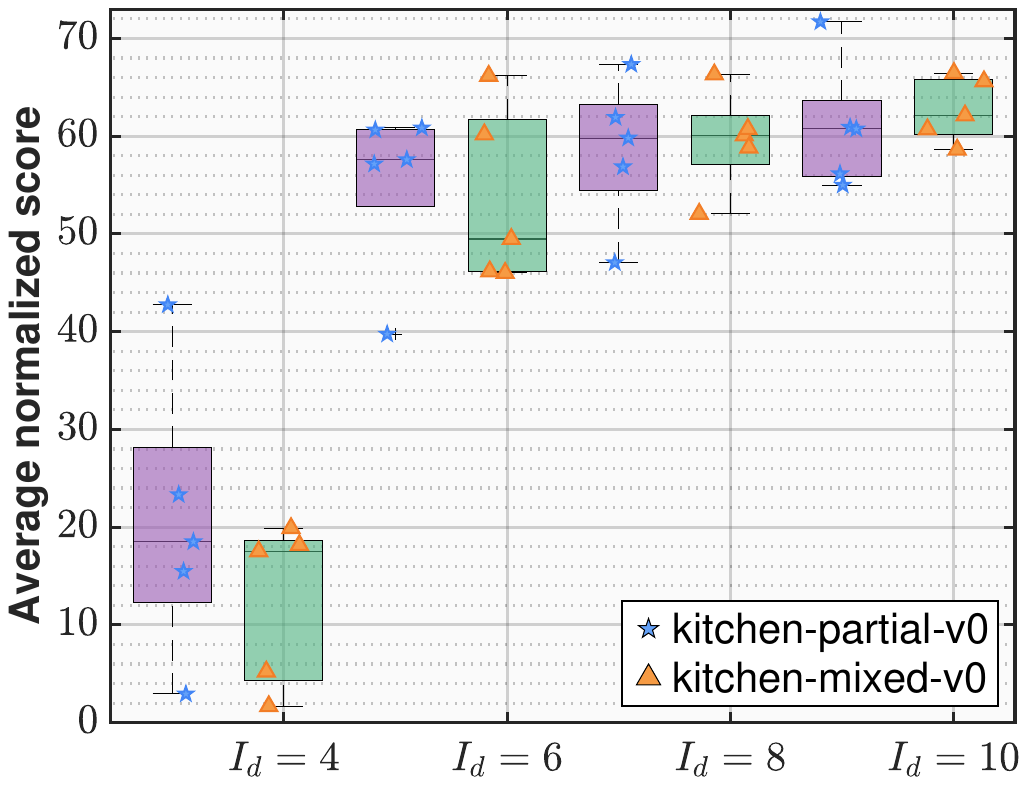}
	}
	\subfigure[Adroit]{
	\includegraphics[width=4.120cm]{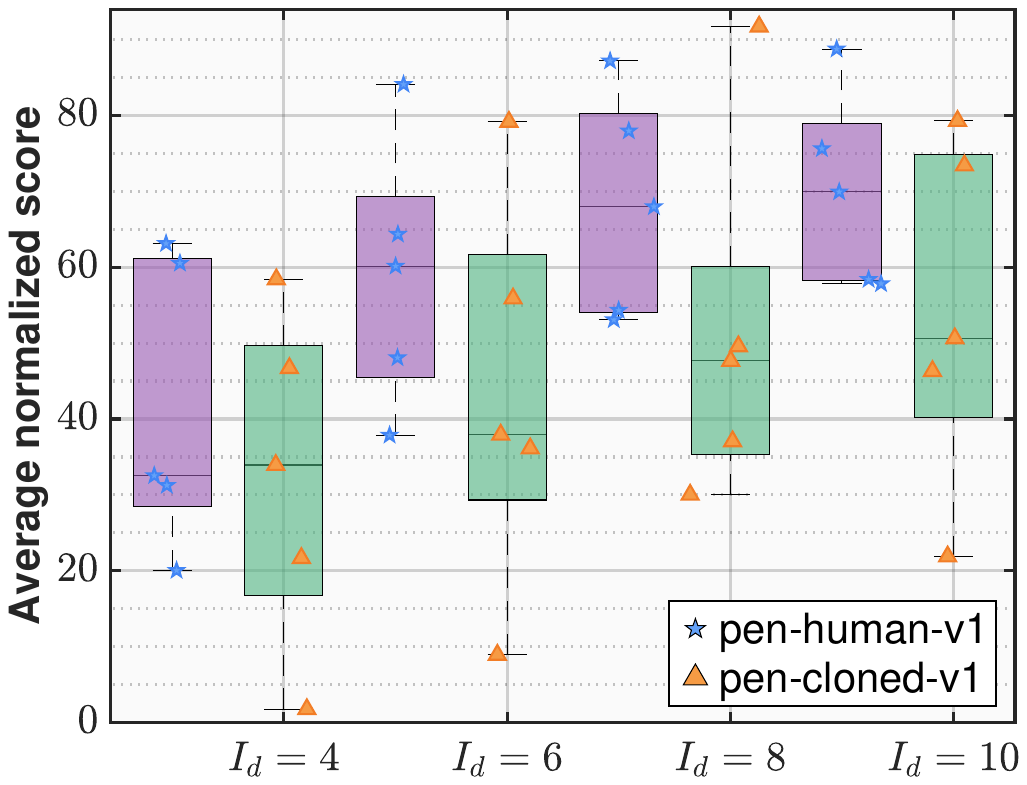}
	}
	\caption{Sensitivity test on the number of diffusion steps $I_d$, with results averaged over eight random seeds for $I_d \in \{4, 6, 8, 10\}$.}
	\label{Boxplot}
\end{figure}

\begin{figure*}[!ht]
	\centering
	\subfigure[halfcheetah-medium-v2]{
	\includegraphics[width=4.20cm]{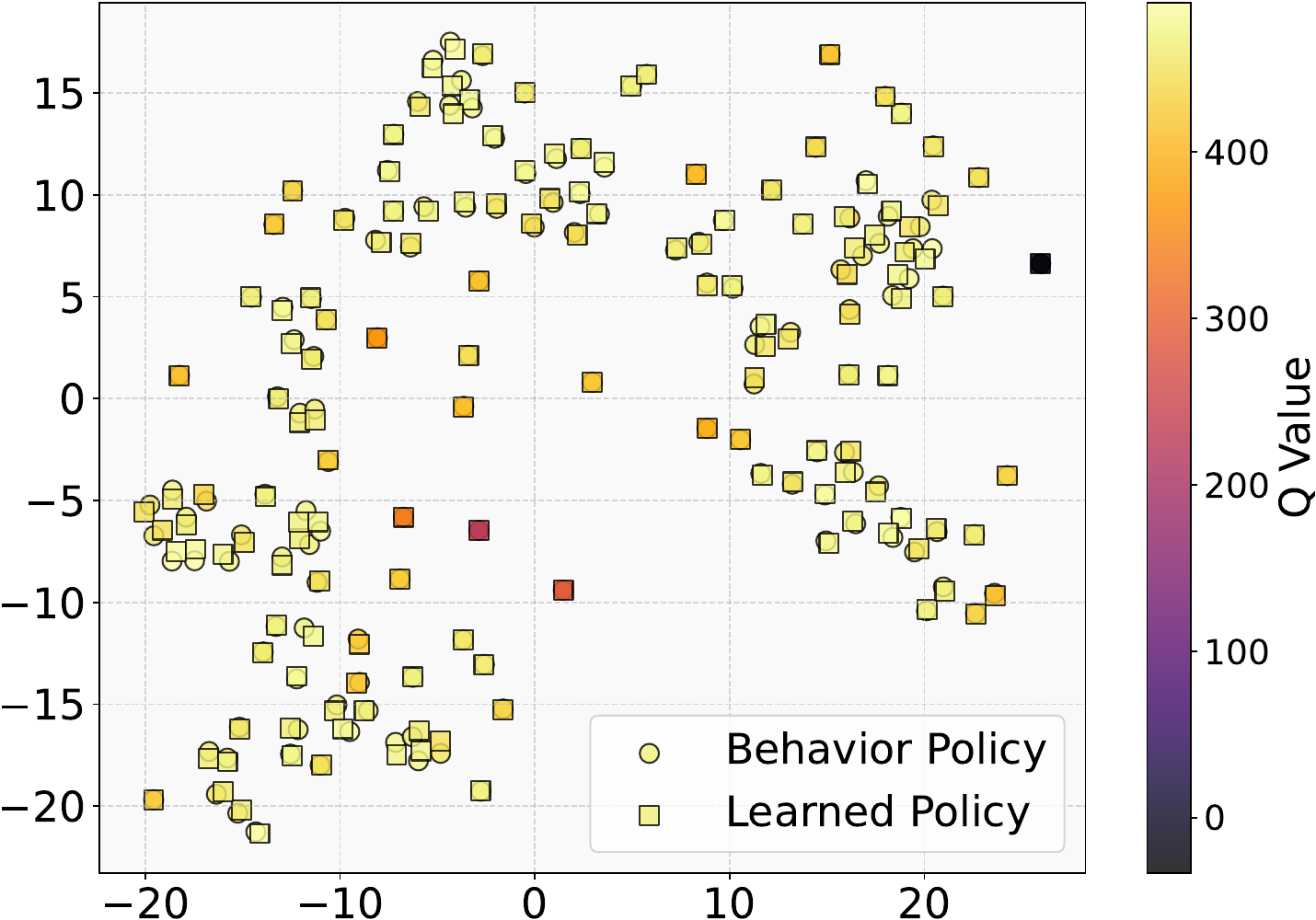}
	}
	\subfigure[hopper-medium-v2]{
	\includegraphics[width=4.20cm]{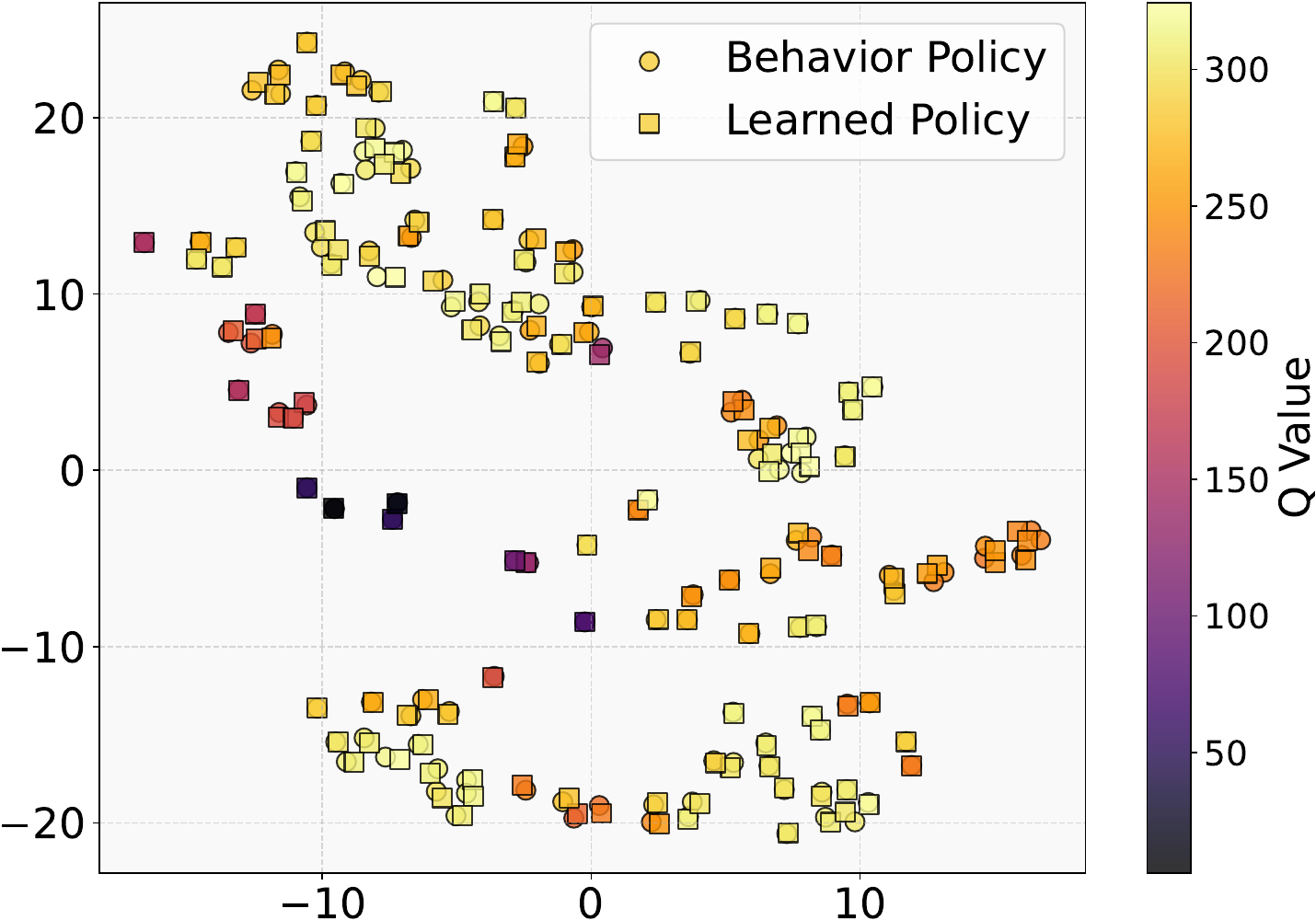}
	}
	\subfigure[walker2d-medium-v2]{
	\includegraphics[width=4.20cm]{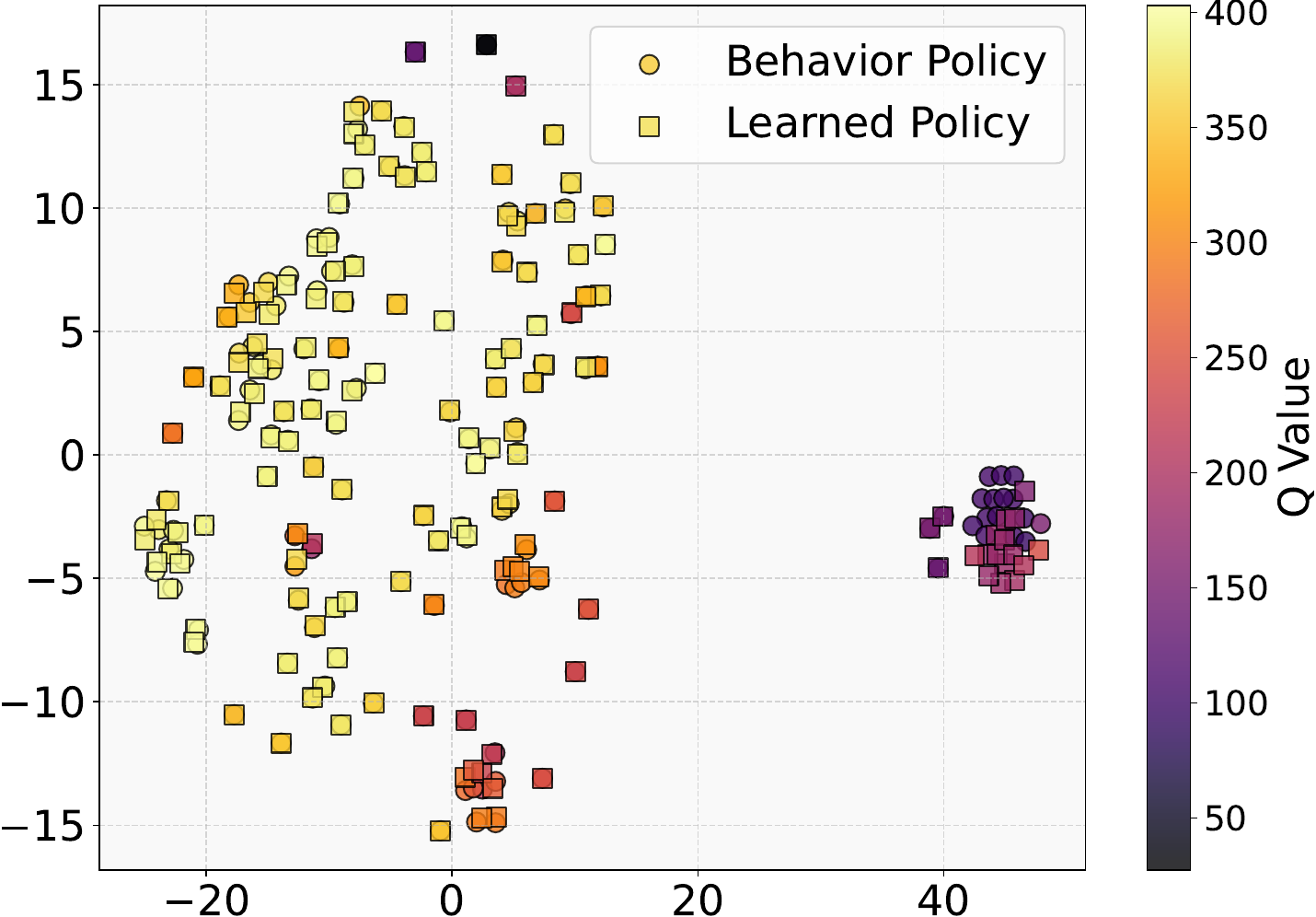}
	}
	\subfigure[halfcheetah-medium-replay-v2]{
	\includegraphics[width=4.20cm]{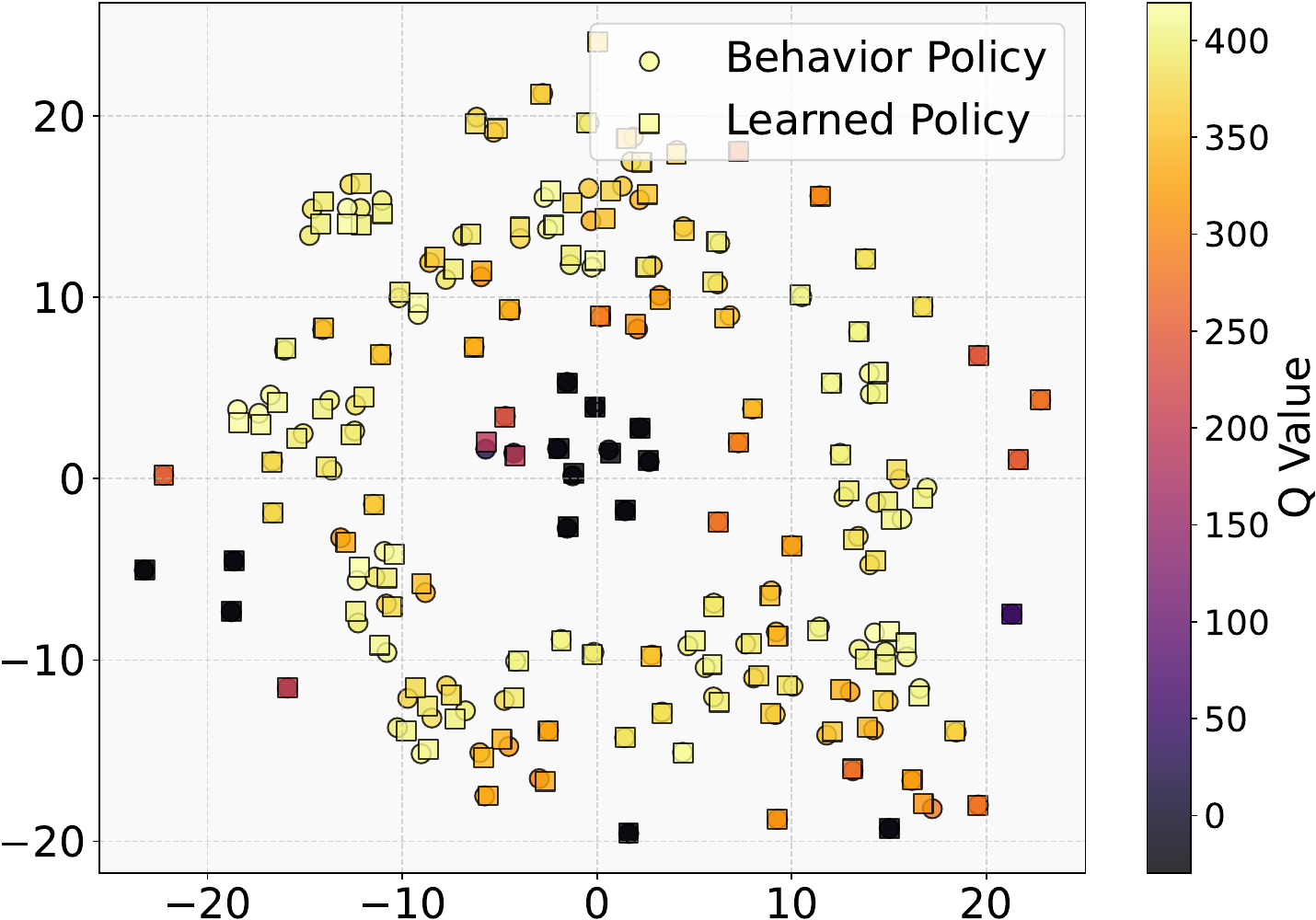}
	}
	\subfigure[hopper-medium-replay-v2]{
	\includegraphics[width=4.20cm]{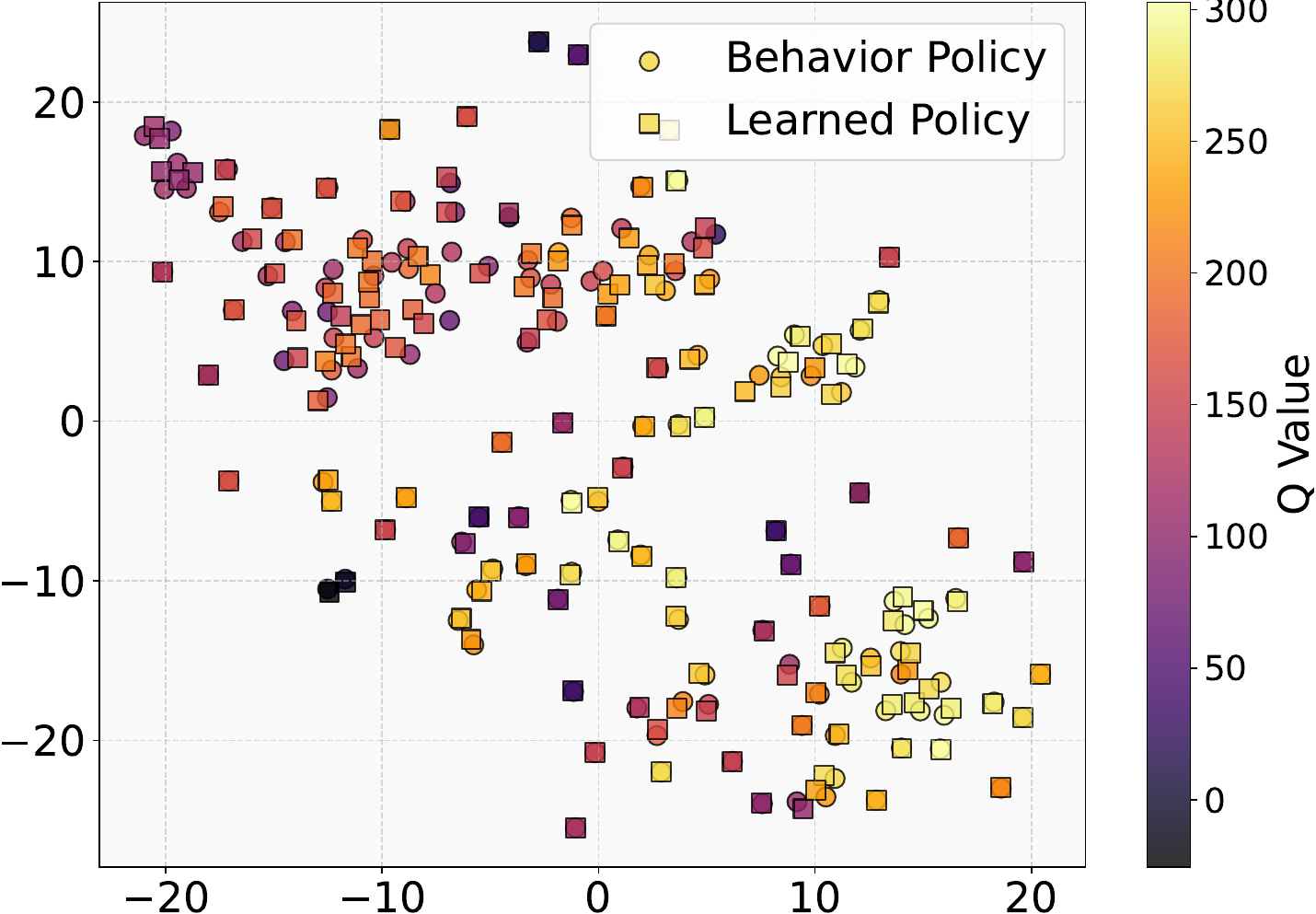}
	}
	\subfigure[walker2d-medium-replay-v2]{
	\includegraphics[width=4.20cm]{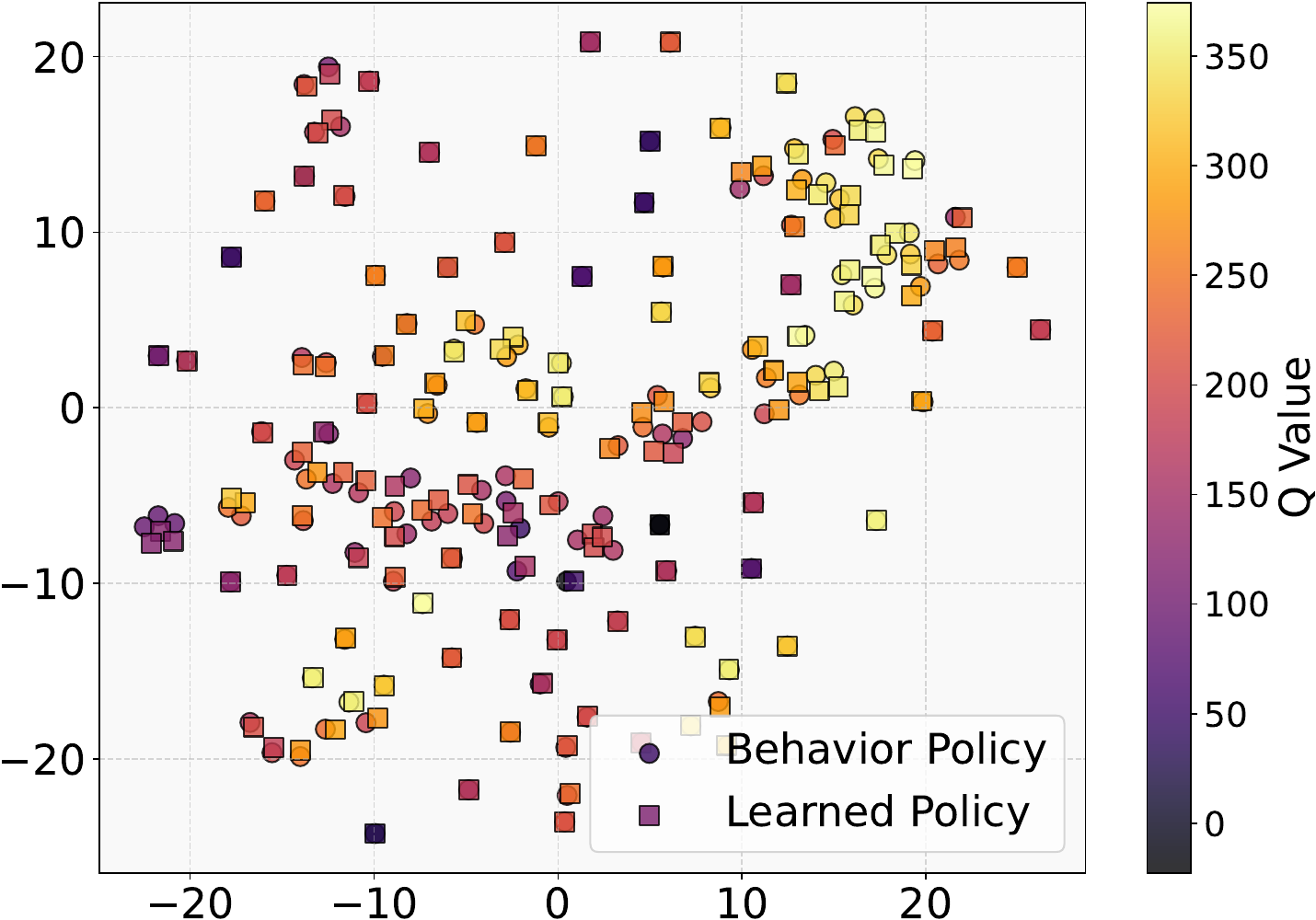}
	}
	\subfigure[halfcheetah-medium-expert-v2]{
	\includegraphics[width=4.20cm]{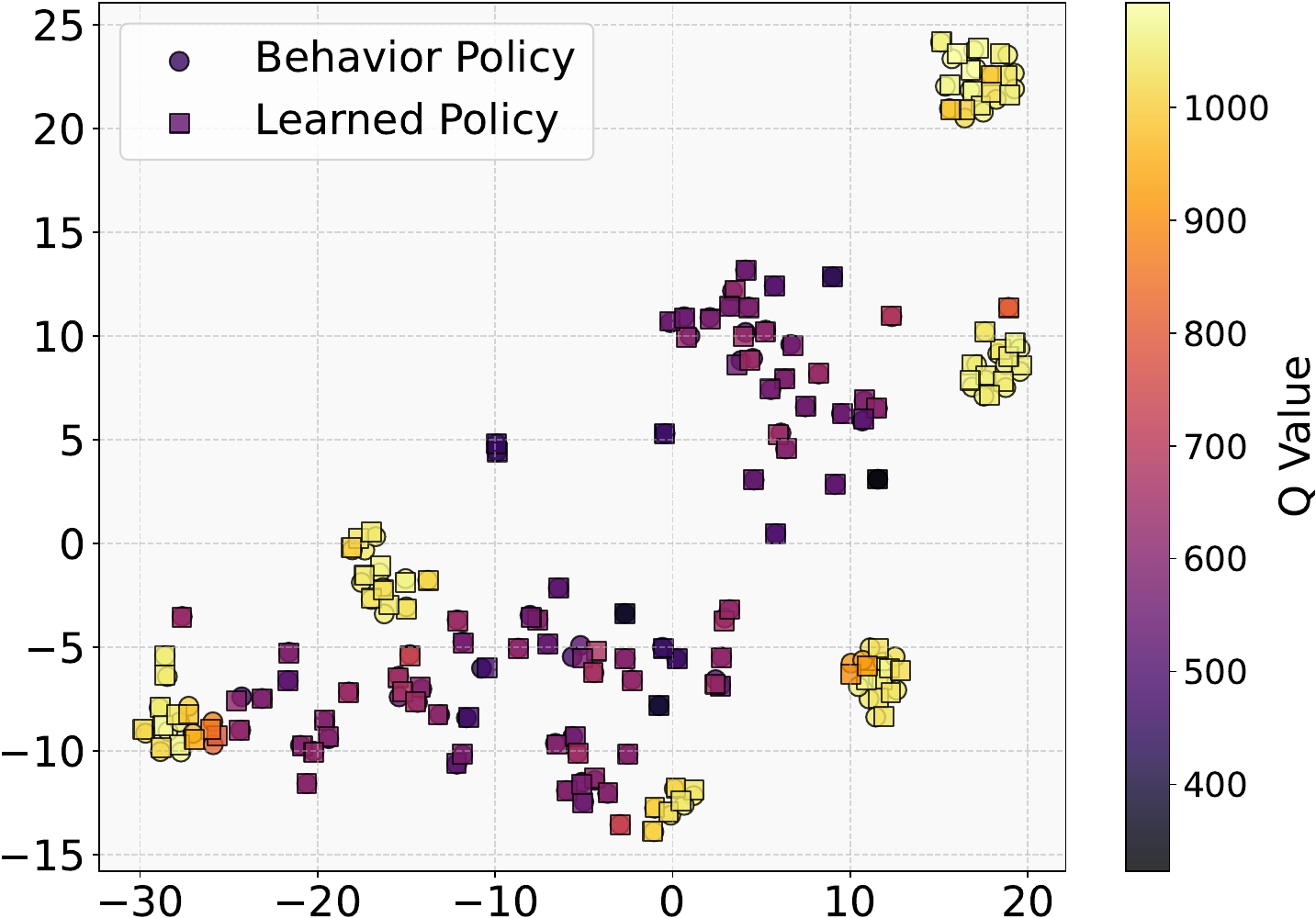}
	}
	\subfigure[hopper-medium-expert-v2]{
	\includegraphics[width=4.20cm]{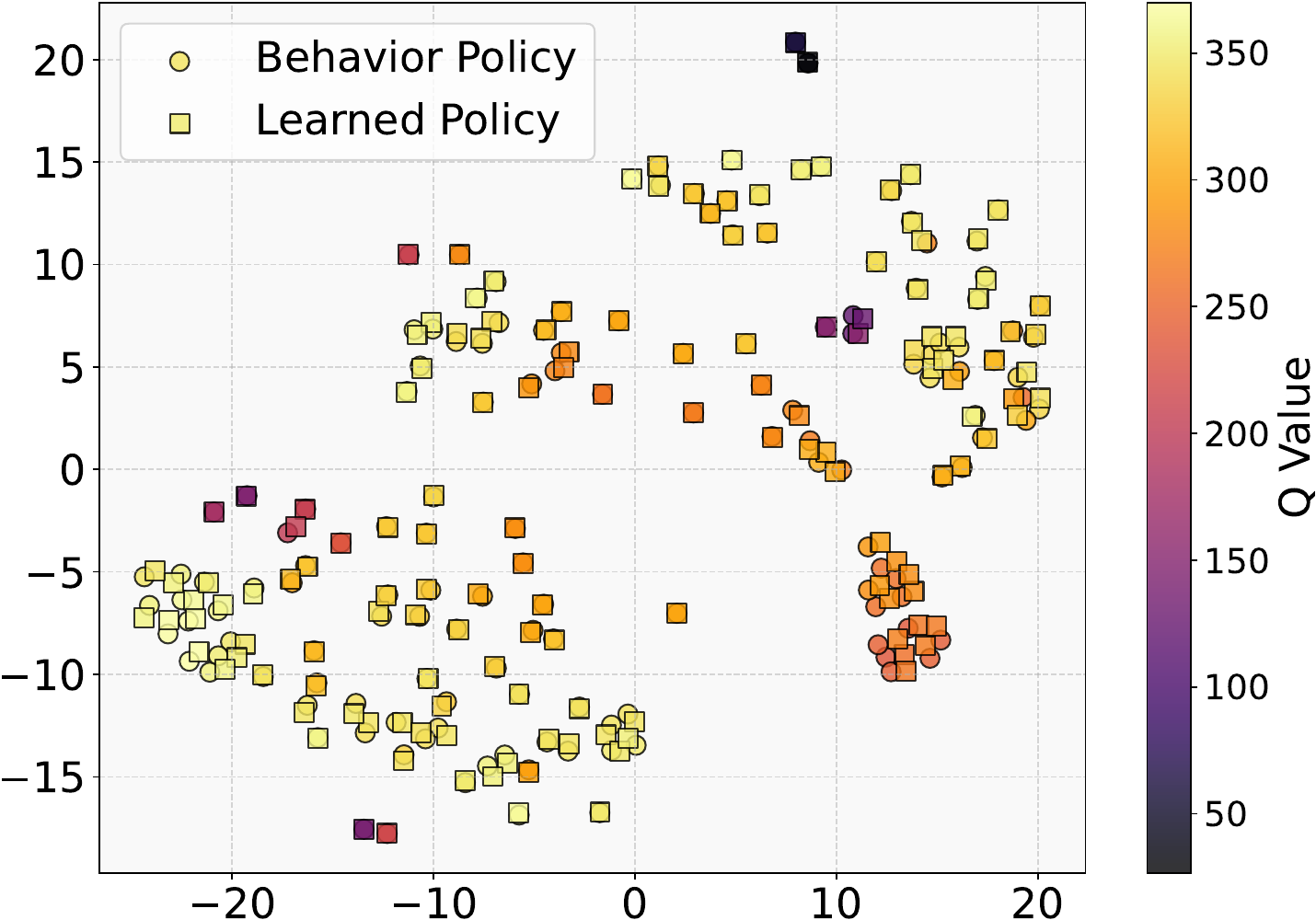}
	}
	\subfigure[walker2d-medium-expert-v2]{
	\includegraphics[width=4.20cm]{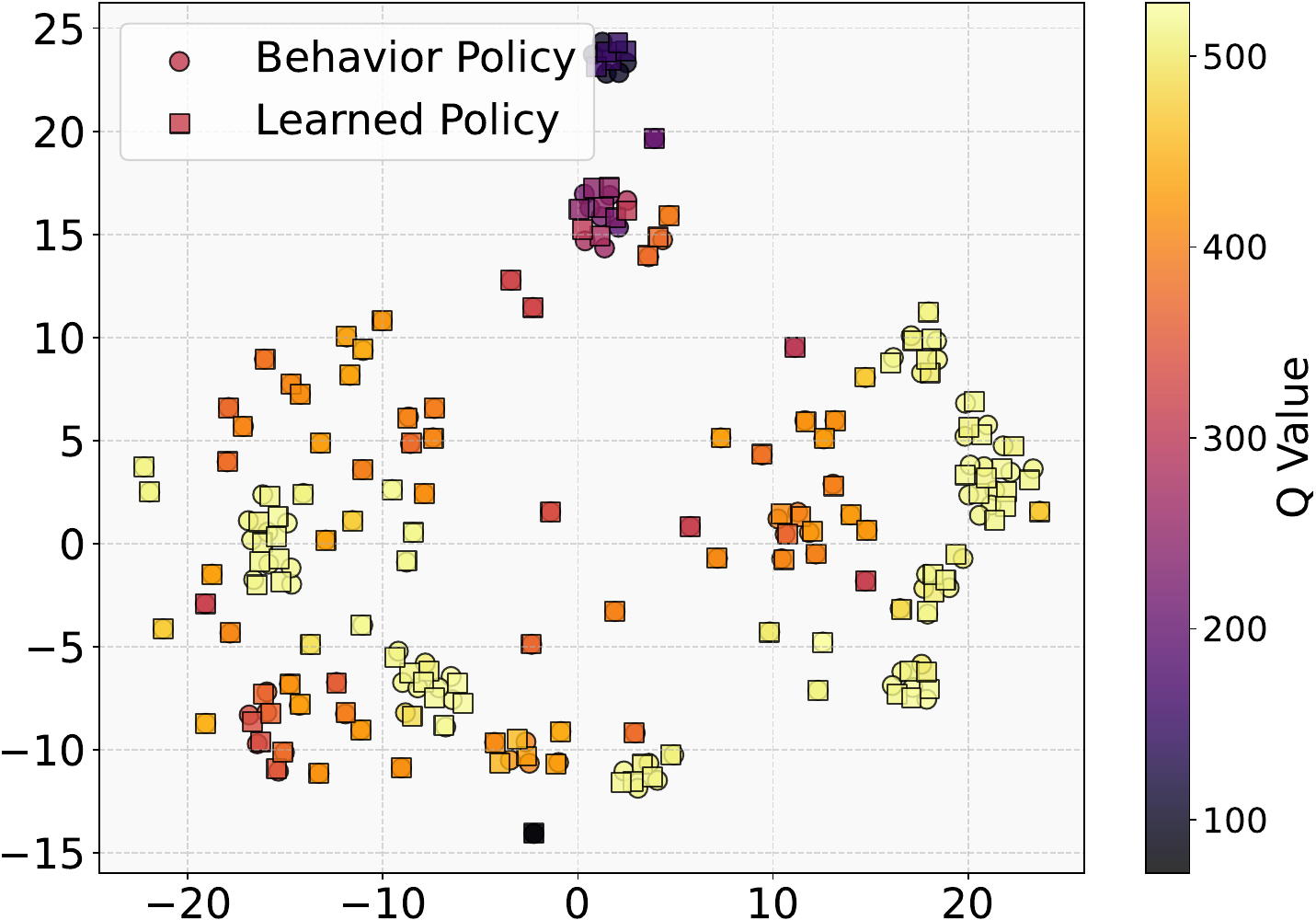}
	}
	\subfigure[antmaze-umaze-diverse-v2]{
	\includegraphics[width=4.20cm]{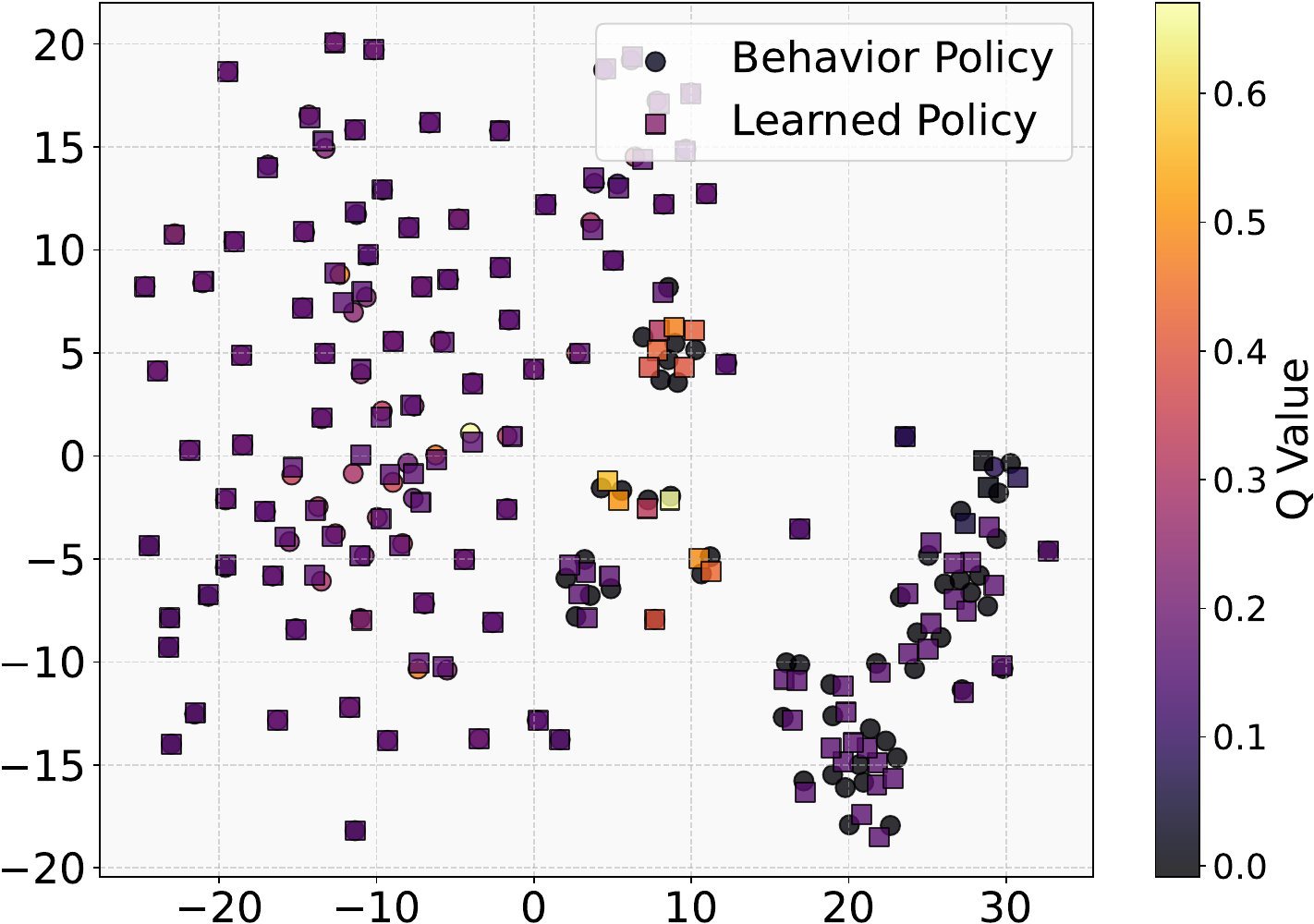}
	}
	\subfigure[antmaze-medium-play-v2]{
	\includegraphics[width=4.20cm]{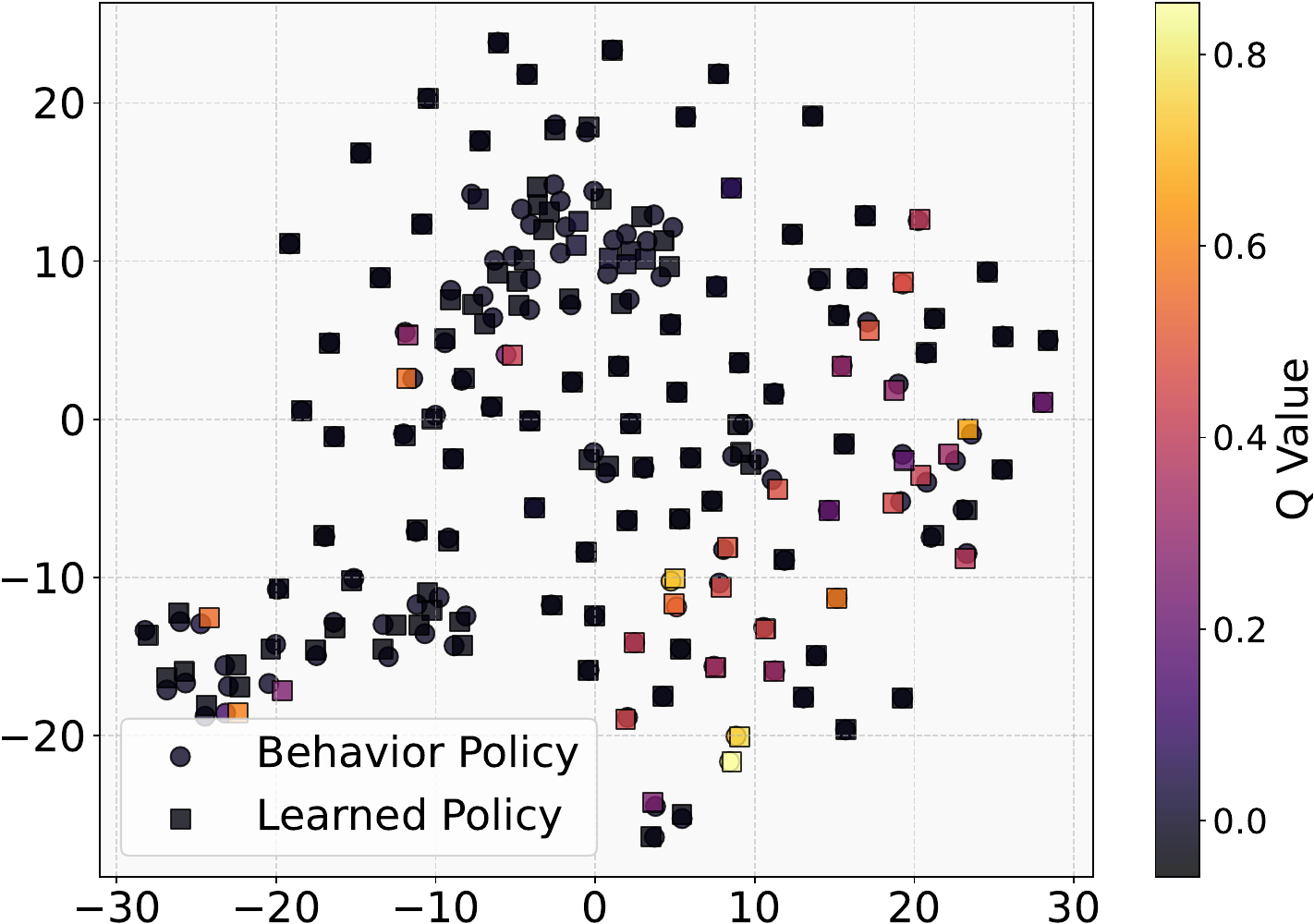}
	}
	\subfigure[antmaze-large-diverse-v2]{
	\includegraphics[width=4.20cm]{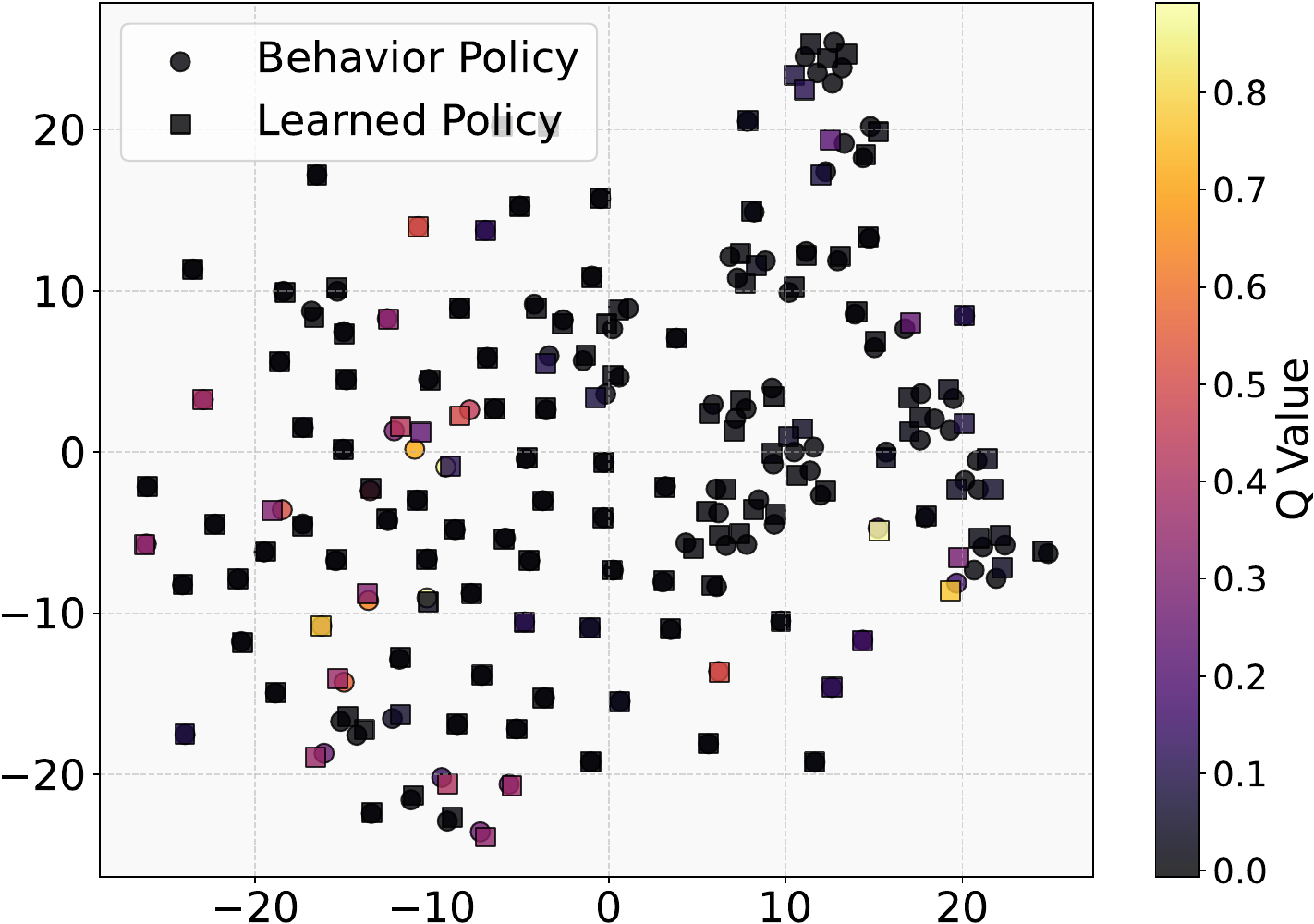}
	}
	\caption{Analysis of policy expressiveness and regularization ability: t-SNE visualization of state-action pair distributions from dataset $\mathcal{D}$ and those generated by the learned policy.}
	\label{fig_tsne}
\end{figure*}

\subsection{Sensitivity Studies and Parameter Analysis}\label{sdnl23004hugf}
To assess the contribution of each component within DPBAC to the overall performance, we conduct a series of sensitivity tests. Specifically, we investigate the impact of three key hyperparameters: the correction coefficient $\eta$, which controls the degree of \emph{Q}-value correction; the regularization weight $\varsigma$, which regulates the strength of policy regularization; and the number of diffusion steps $I_d$, which governs the expressiveness and precision of the generative process. Detailed test results for each hyperparameter are presented below.

\subsubsection{Evaluating the Impact of $\eta$ and $\varsigma$ via Sensitivity Tests} 
We conduct the following sensitivity tests by varying $\eta \in \{0.15, 0.35, 0.55, 0.75, 0.95\}$ and $\varsigma \in \{1, 2, 3, 4, 5\}$, with the corresponding results depicted in Fig.~\ref{3Dbar_test}. Overall, across different datasets, the performance of DPBAC exhibits a non-monotonic trend with respect to $\eta$, initially increasing and then decreasing. This phenomenon can be attributed to the trade-off in \emph{Q}-value correction: a small $\eta$ provides insufficient correction of the \emph{Q}-values, while a large $\eta$ may lead to overcorrection, thereby degrading performance. Although the optimal value of $\eta$ varies across datasets, $\eta = 0.35$ consistently yields strong performance. In addition, we observe that the algorithm's performance increases with larger values of $\varsigma$ on the medium and medium-replay datasets, but decreases with larger $\varsigma$ on the medium-expert datasets. This is due to the lower quality of trajectories in the former datasets, where weaker policy regularization (i.e., larger $\varsigma$) allows the policy to deviate more effectively from suboptimal behavior. In contrast, the medium-expert datasets contain higher-quality expert demonstrations, and stronger policy regularization helps preserve desirable behavior, thus enhancing performance.

\subsubsection{Evaluating the Impact of $I_d$ via Sensitivity Tests}
To investigate the effect of diffusion steps $I_d \in \{4, 6, 8, 10\}$ on algorithm performance across different task domains, we conduct sensitivity tests, with the corresponding results shown in Fig. \ref{Boxplot}. In the figure, triangles and pentagrams indicate individual experimental outcomes. As illustrated by the box plots, increasing the diffusion steps $I_d$ generally leads to improved performance. This trend can be attributed to the enhanced generative capacity of the diffusion policy with longer denoising steps, which facilitates the modeling of more complex action distributions. However, larger values of $I_d$ also incur higher computational costs. In high-dimensional domains such as FrankaKitchen and Adroit, selecting a larger $I_d$ leads to a more pronounced performance improvement due to the increased complexity of the state-action space.

\subsection{Policy Shift Investigation}
To evaluate the expressiveness and regularization ability of the diffusion policy in DPBAC, we randomly sample 150 state-action pairs from the dataset, where the actions are regarded as those taken by the behavior policy. For each sampled state, we generate a corresponding action using the trained diffusion policy, referred to as the learned policy action. We then visualize both sets of state-action pairs using t-SNE, as shown in Fig. \ref{fig_tsne}. Circles represent state-action pairs from the behavior policy, while squares represent those generated by the learned policy. To assess the quality of these actions, we compute the \emph{Q}-values of all visualized pairs using the trained \emph{Q}-function network, which are used for color encoding. The visualization reveals that the learned policy closely covers the distribution of the behavior policy, demonstrating the strong expressive capacity and regularization ability of the diffusion model. Moreover, the learned policy's state-action pairs exhibit generally higher \emph{Q}-values compared to those of the behavior policy, indicating that DPBAC effectively guides the diffusion policy toward higher-value regions of the action space, thereby achieving superior performance over the behavior policy.

\subsection{Behavioral Advantage Study}
To investigate the impact of the behavioral advantage $\widehat{A}^{\mu}_{\widetilde{Q}}$ on policy training, we vary the correction coefficient $\eta \in \{0.15, 0.25, 0.35, 0.55, 0.75, 0.95\}$. Fig. \ref{BA_plot} illustrates the evolution of the behavioral advantage throughout training under different $\eta$ values. It can be observed that stronger correction strengths (i.e., larger values of $\eta$) lead to smaller behavioral advantages, indicating that the learned \emph{Q}-function becomes more closely aligned with the \emph{Q}-function of the behavior policy. This observation is consistent with our theoretical analysis. A smaller behavioral advantage suggests reduced policy deviation from the behavior policy, which helps stabilize training in offline settings where distributional shift is detrimental. However, an excessively strong correction with a large $\eta$ may impede the policy's ability to explore higher-reward regions beyond those induced by the behavior policy, thereby potentially constraining overall performance improvement.

\begin{figure}[!ht]
	\centering
	\subfigure[halfcheetah-medium-v2]{
	\includegraphics[width=4.124cm]{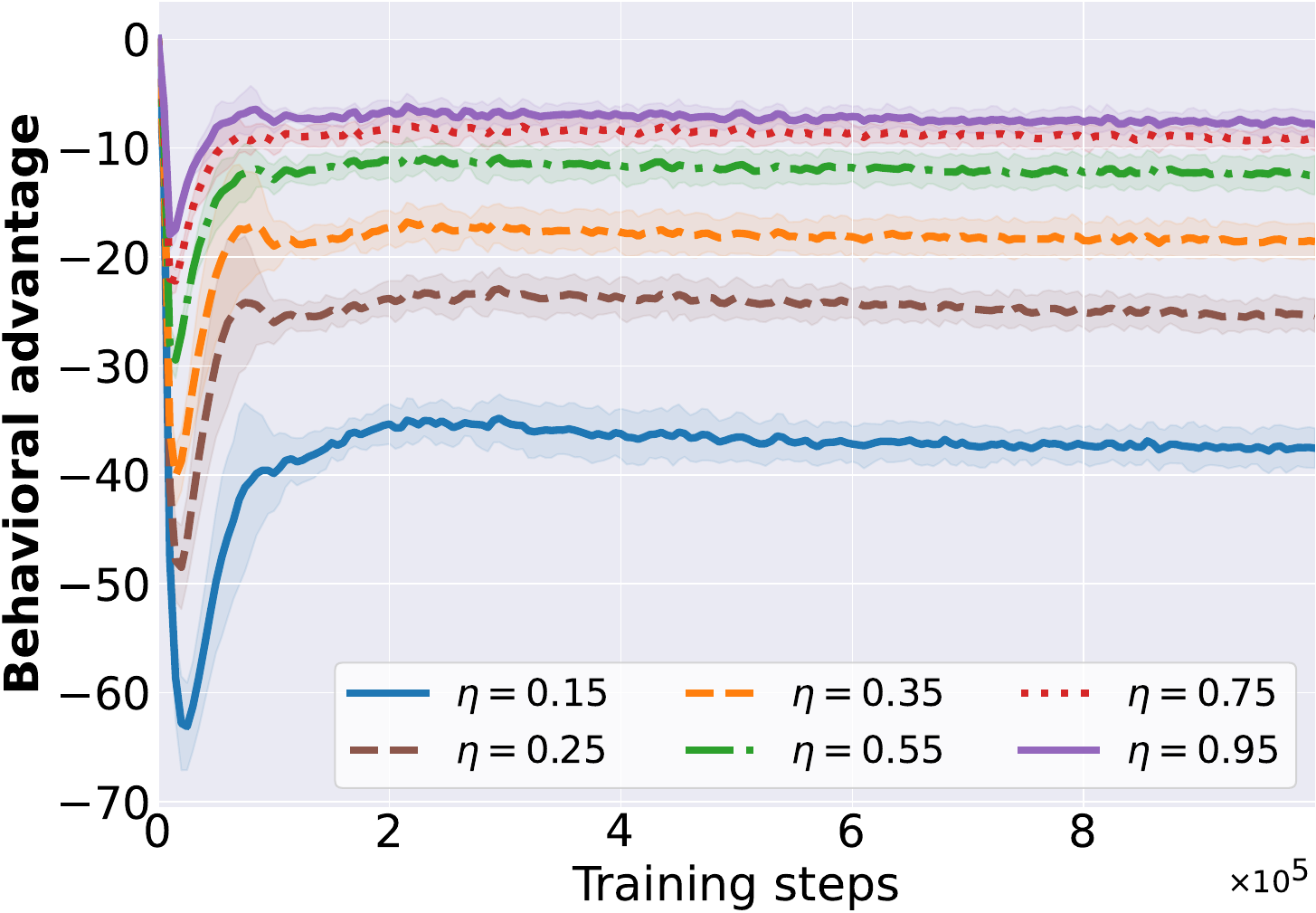}
	}
	\subfigure[halfcheetah-medium-replay-v2]{
	\includegraphics[width=4.124cm]{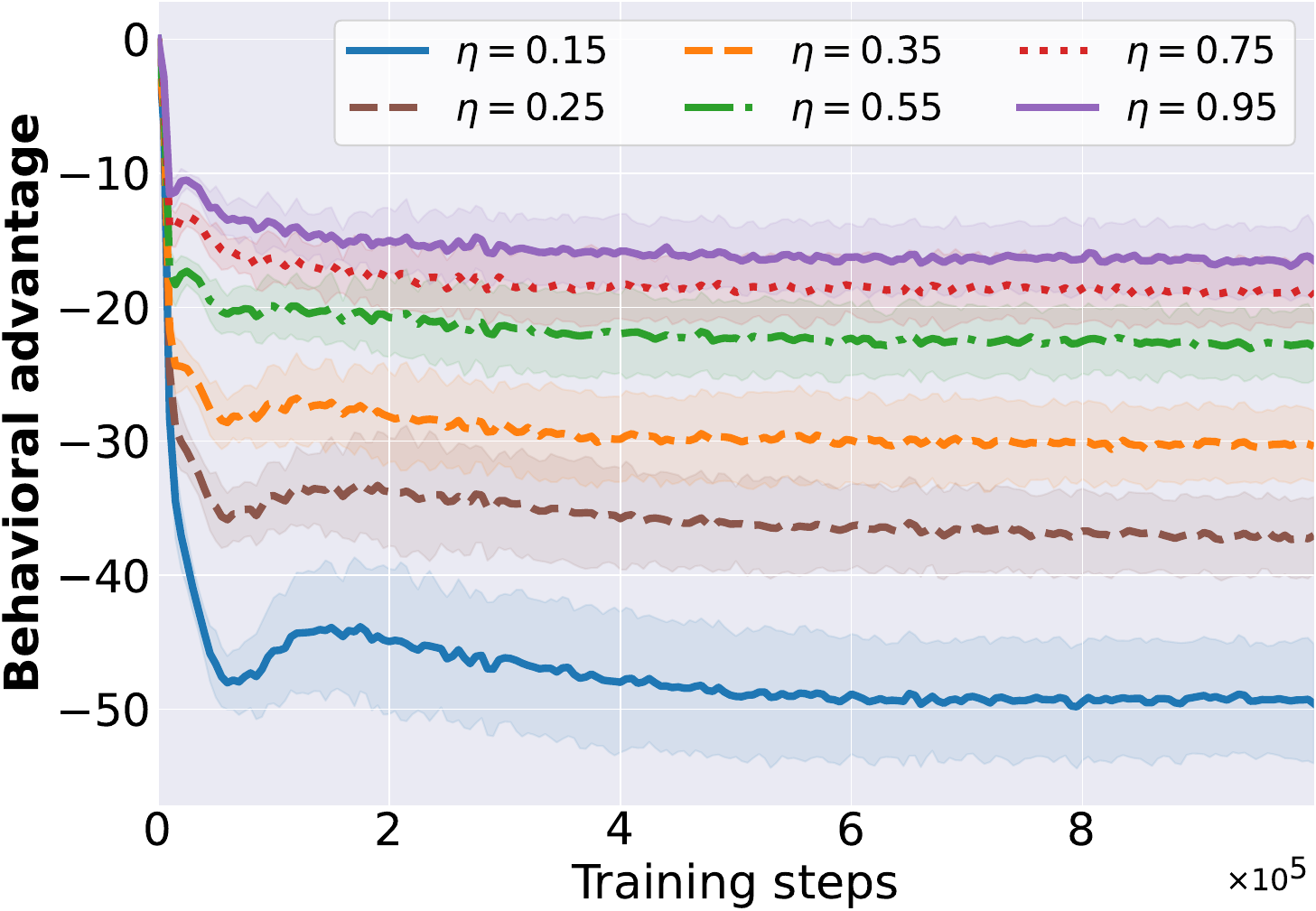}
	}
	\subfigure[halfcheetah-medium-expert-v2]{
	\includegraphics[width=4.124cm]{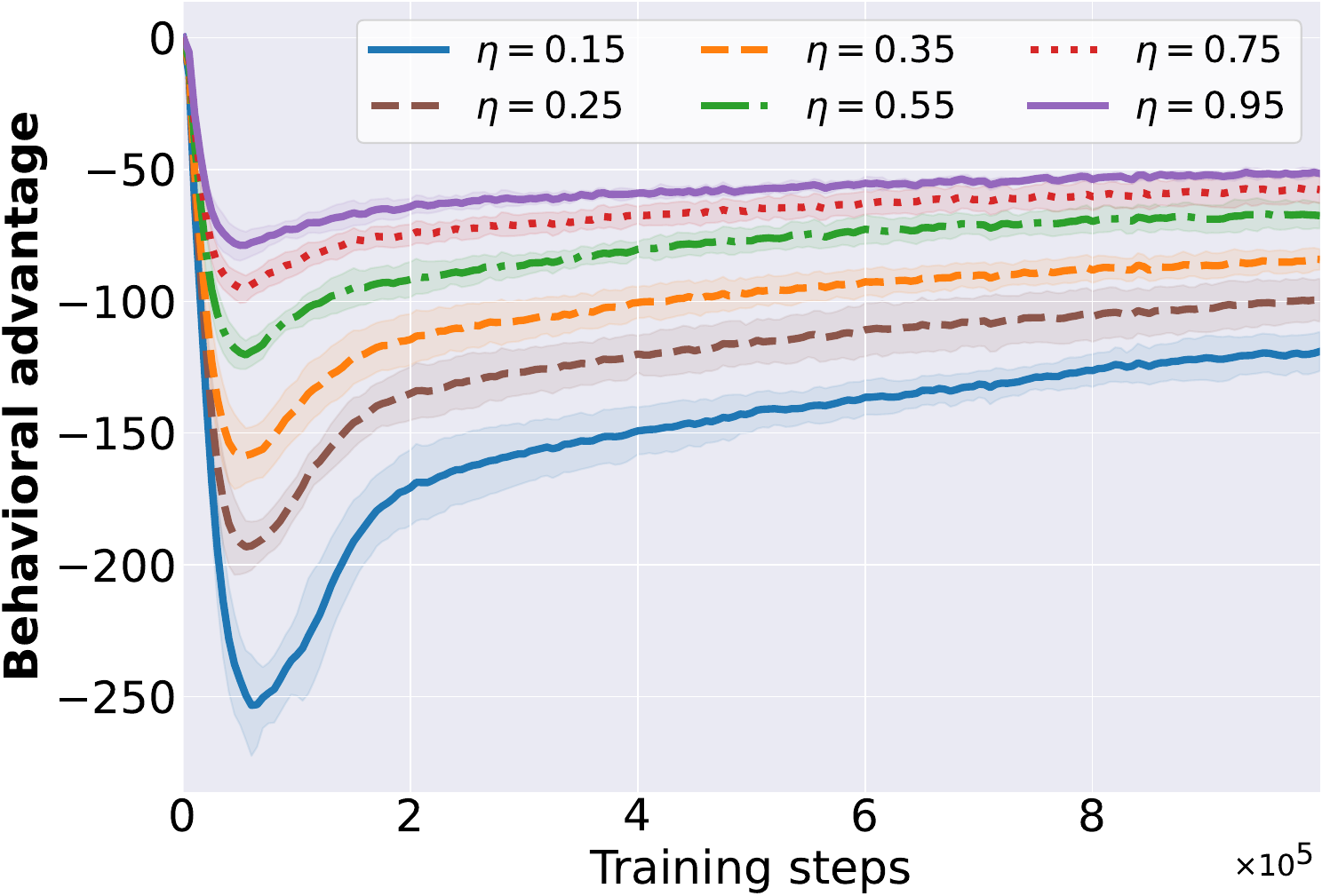}
	}
	\subfigure[hopper-medium-replay-v2]{
	\includegraphics[width=4.124cm]{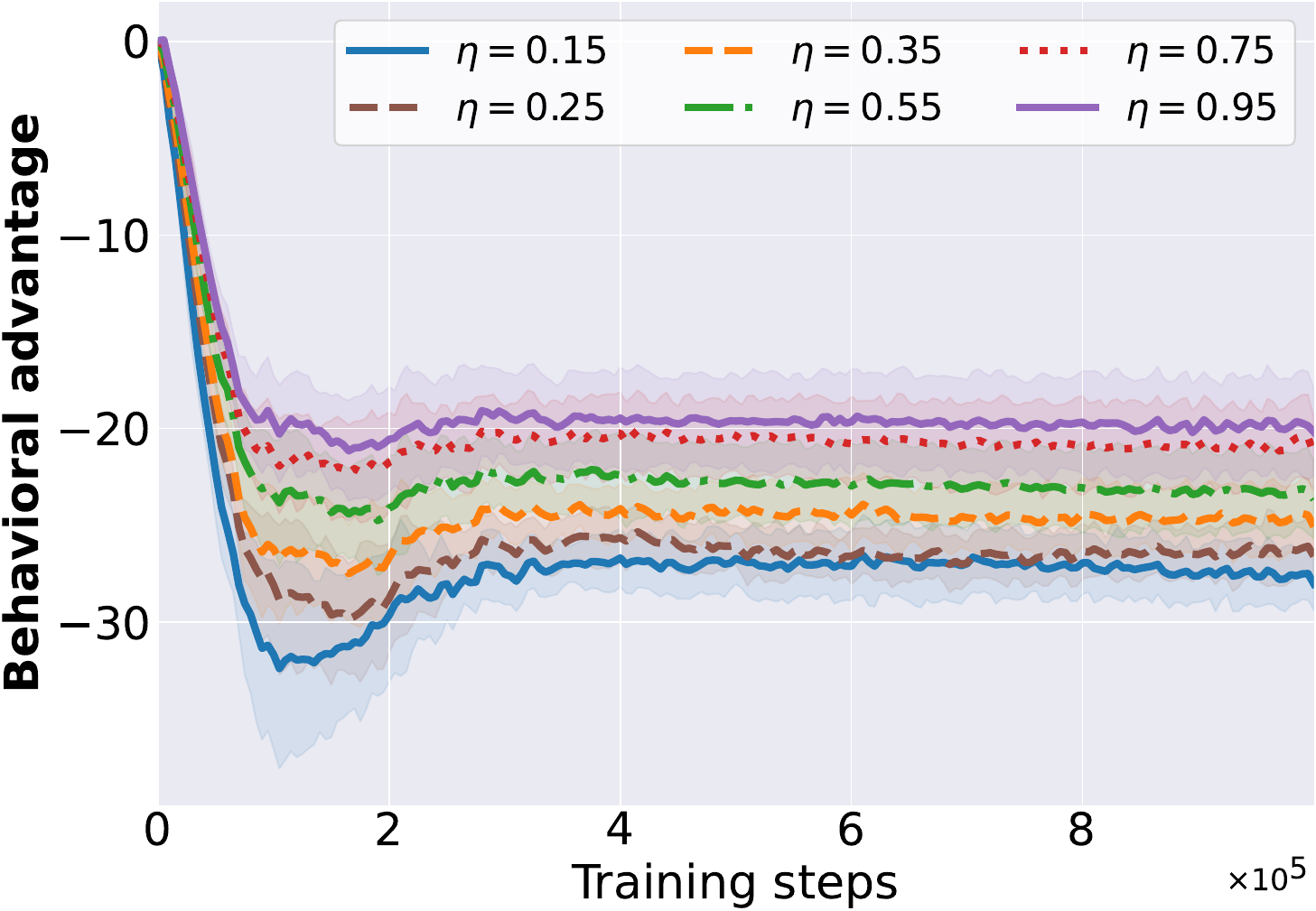}
	}
	\caption{Behavioral advantage curves of DPBAC over the full training process with varying $\eta$ values.}
	\label{BA_plot}
\end{figure}

\subsection{Sparse Reward Tasks Evaluation}
To evaluate the ability of DPBAC in handling sparse-reward tasks, we conduct experiments on the maze navigation tasks in the Maze2D domain, which includes four levels of maze complexity: open, umaze, medium, and large. We consider both dense and sparse reward settings. In the dense setting, rewards are distributed stepwise and are inversely proportional to the distance to the goal, whereas in the sparse setting, rewards are provided only upon reaching the goal. We compare DPBAC against the diffusion-based method DQL \cite{Wang172023}, the value regularization algorithm CQL \cite{KumarA2020}, and the conditional sequence modeling approach QDT \cite{Yamagata32842}. The experimental results are presented in Fig. \ref{Horizon_stack}. It can be observed that while CQL and DQL perform well under the dense reward setting, their performance significantly deteriorates under sparse rewards due to inaccurate value estimation. Similarly, QDT, which relies on CQL for return-to-go token relabeling, also struggles to adapt to sparse reward tasks. In contrast, DPBAC demonstrates consistently strong performance across both dense and sparse reward settings. This highlights the effectiveness of the proposed \emph{Q}-value correction mechanism and the powerful regularization ability of the diffusion policy, which together help mitigate the negative impact of \emph{Q}-value estimation bias.

\begin{figure}[!ht]
	\centering
	\subfigure[Dense reward setting]{
	\includegraphics[width=4.124cm]{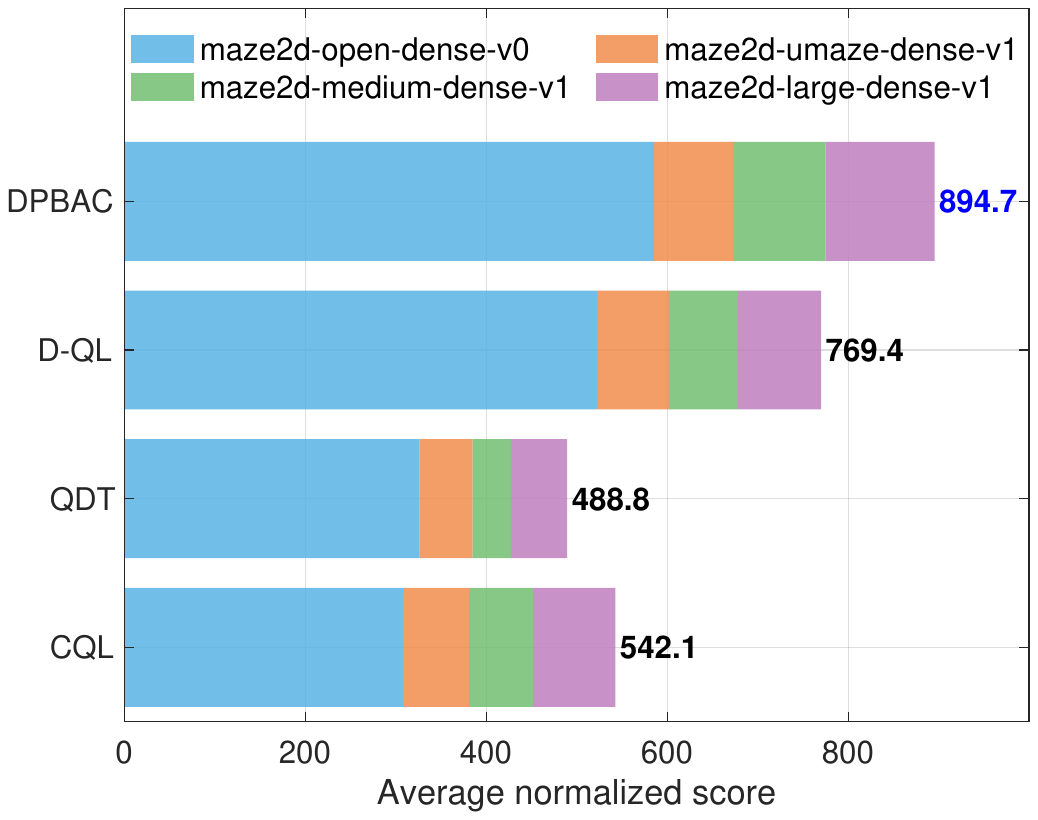}
	}
	\subfigure[Sparse reward setting]{
	\includegraphics[width=4.124cm]{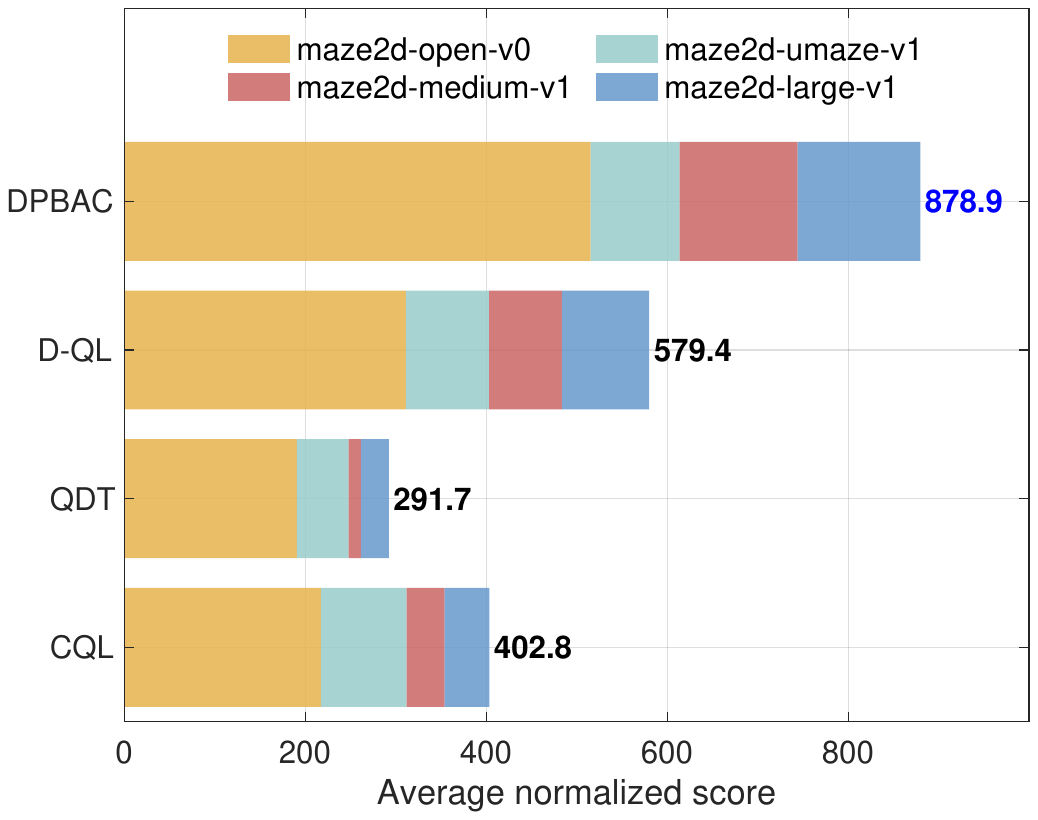}
	}
	\caption{Test results of algorithm performance on sparse-reward tasks. Evaluations are conducted on Maze2D tasks under both dense and sparse reward settings, with results averaged over eight random seeds.}
	\label{Horizon_stack}
\end{figure}

\subsection{Q-value Estimation Bias Analysis}
To evaluate the accuracy of \emph{Q}-value estimation in DPBAC algorithm, Fig. \ref{Q_value_Estimation_Bias} reports the estimation bias of the learned \emph{Q}-values under different correction strengths $\eta$ on the walker2d-medium-v2 and walker2d-medium-expert-v2 datasets. For a given initial state-action pair $(\bm{s}_0, \bm{a}_0)$, the \emph{Q}-value bias is computed as the absolute difference between the learned \emph{Q}-value $Q_{\theta_1}(\bm{s}_0, \bm{a}_0)$ and the corresponding ground-truth \emph{Q}-value. Following \cite{CaoSSMC2024}, the ground-truth \emph{Q}-value is estimated using the Monte Carlo method and serves as a reference for assessing the estimation bias of the learned \emph{Q}-values.

\begin{figure}[!ht]
	\centering
	\subfigure[walker2d-medium-v2]{
	\includegraphics[width=4.124cm]{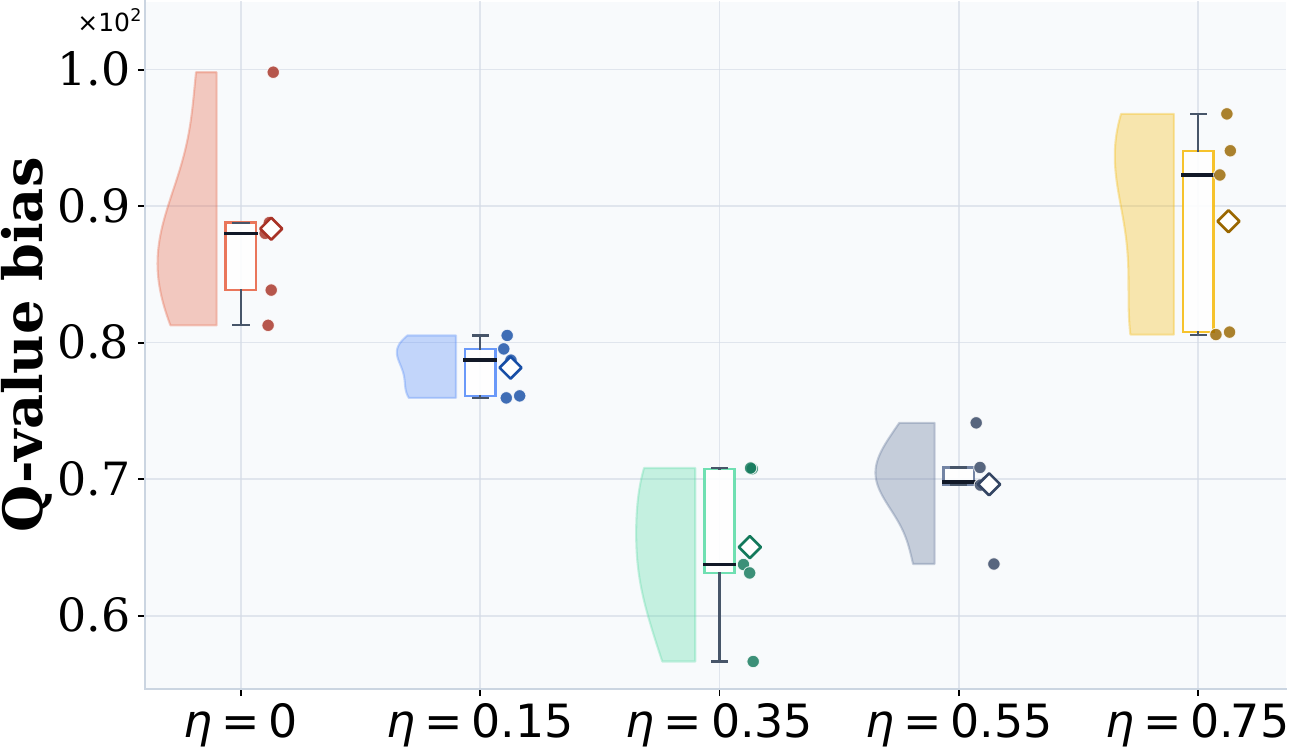}
	}
	\subfigure[walker2d-medium-expert-v2]{
	\includegraphics[width=4.124cm]{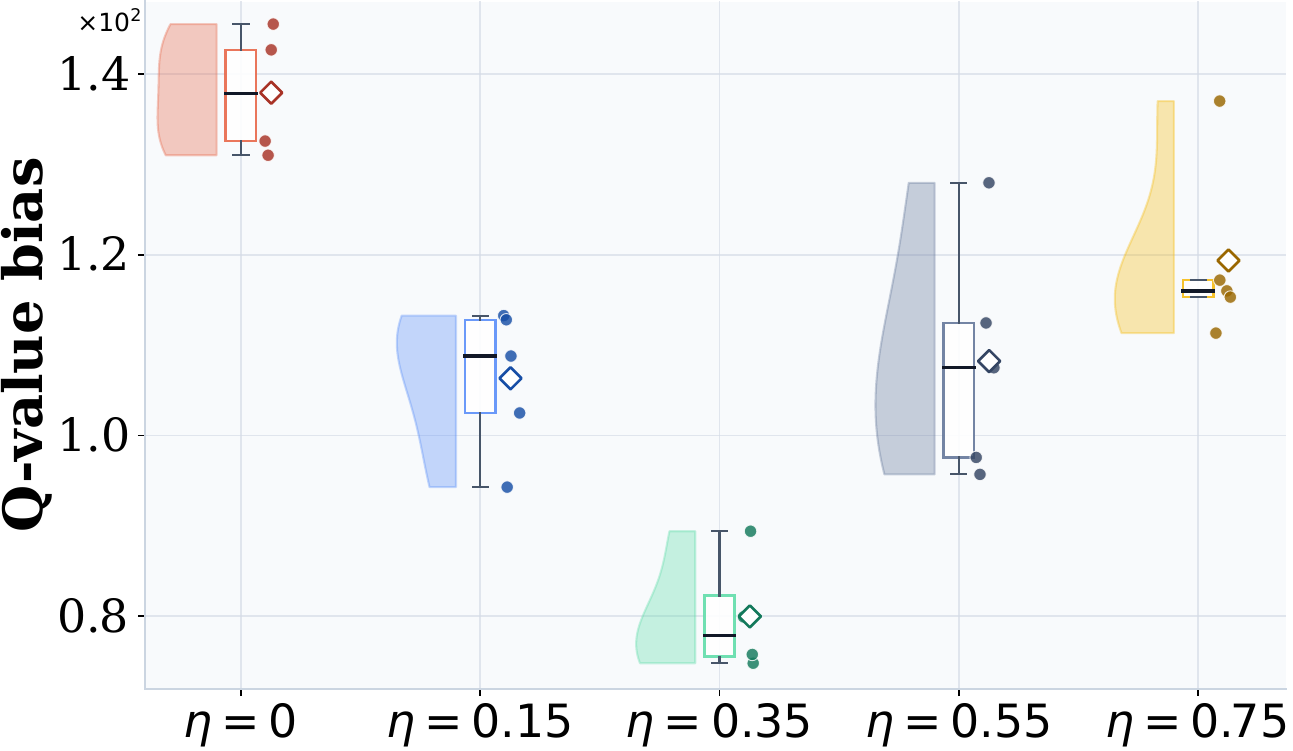}
	}
	\caption{Statistics of \emph{Q}-value bias for DPBAC under different correction coefficients $\eta$ on walker2d-medium-v2 and walker2d-medium-expert-v2 datasets.}
	\label{Q_value_Estimation_Bias}
\end{figure}

As shown in Fig. \ref{Q_value_Estimation_Bias}, when $\eta = 0$, no correction is applied to the learned \emph{Q}-values, resulting in a large absolute estimation error. As $\eta$ increases, this error gradually decreases, indicating that the proposed \emph{Q}-value correction mechanism effectively alleviates overestimation bias. However, when $\eta$ becomes excessively large, the bias increases again, suggesting that overly strong correction may lead to underestimation of the \emph{Q}-values. These results highlight the importance of selecting an appropriate correction strength $\eta$ to balance the mitigation of overestimation bias and the risk of underestimation, thereby enabling more accurate \emph{Q}-value estimation and more effective policy learning.

\section{Conclusion}\label{723843brevfkaf}
To address the pessimistic bias and \emph{Q}-function overestimation issues in existing offline RL algorithms, this study proposed a BAC-PE method. We established the convergence of BAC-PE and derived both the relationship between its converged \emph{Q}-function and the true \emph{Q}-function, and an upper bound on their difference. This upper bound is positively correlated with the distributional difference between the learned policy and the behavior policy. To minimize this distributional discrepancy, we utilized a diffusion model for policy modeling, enhancing the expressive ability of the learned policy to better capture multimodal behavior policies, leading to the development of DPBAC. Extensive experiments, including comparisons with SOTA baselines, sensitivity studies, sparse-reward evaluations, and \emph{Q}-value bias analysis across multiple D4RL task domains, demonstrate the superior performance of DPBAC.

\appendices

\section{Proof of Theorem \ref{194hb5476f4909j33}}\label{proof_of_theorem_1}
\begin{IEEEproof}
	Let $\widetilde{Q}_{\diamond}$ and $\widetilde{Q}_{\dagger}$ represent two distinct \emph{Q}-functions. Based on the iterative rule (\ref{dn3974t34th39}), it follows that:
	\begin{align}\label{dn38n467fvnb4}
		& \Big(\widehat{\mathcal{T}}^{\pi} \widetilde{Q}_{\diamond} \Big)(\bm{s}, \bm{a}) - \Big(\widehat{\mathcal{T}}^{\pi} \widetilde{Q}_{\dagger} \Big)(\bm{s}, \bm{a}) \nonumber \\ 
		& = \Big(\widehat{\mathcal{B}}^{\pi} \widetilde{Q}_{\diamond} \Big) (\bm{s}, \bm{a}) - \Big(\widehat{\mathcal{B}}^{\pi} \widetilde{Q}_{\dagger} \Big) (\bm{s}, \bm{a}) - \eta \Big(\widetilde{Q}_{\diamond}(\bm{s}, \bm{a}) - \widetilde{Q}_{\dagger}(\bm{s}, \bm{a})\Big) \nonumber \\ 
		& = \gamma \mathbb{E}_{\bm{s}' \sim \widehat{\mathbb{P}}(\cdot |\bm{s}, \bm{a}), \bm{a}' \sim \pi(\cdot |\bm{s}')} \Big[\widetilde{Q}_{\diamond}(\bm{s}', \bm{a}') - \widetilde{Q}_{\dagger}(\bm{s}', \bm{a}')\Big] \nonumber \\ & \ \ \ - \eta \Big(\widetilde{Q}_{\diamond}(\bm{s}, \bm{a}) - \widetilde{Q}_{\dagger}(\bm{s}, \bm{a})\Big).
	\end{align}
	Taking the absolute value on both sides of (\ref{dn38n467fvnb4}), one has
	\begin{align}\label{dn23r93r4t890}
		& \Big| \Big(\widehat{\mathcal{T}}^{\pi} \widetilde{Q}_{\diamond} \Big)(\bm{s}, \bm{a}) - \Big(\widehat{\mathcal{T}}^{\pi} \widetilde{Q}_{\dagger} \Big)(\bm{s}, \bm{a}) \Big| \nonumber \\
		& \ \ \ \leq \gamma \Big| \mathbb{E}_{\bm{s}' \sim \widehat{\mathbb{P}}(\cdot |\bm{s}, \bm{a}), \bm{a}' \sim \pi(\cdot |\bm{s}')} \Big[\widetilde{Q}_{\diamond}(\bm{s}', \bm{a}') - \widetilde{Q}_{\dagger}(\bm{s}', \bm{a}')\Big] \Big| \nonumber \\ & \ \ \ \ \ \ + \eta \Big|\widetilde{Q}_{\diamond}(\bm{s}, \bm{a}) - \widetilde{Q}_{\dagger}(\bm{s}, \bm{a}) \Big| \nonumber \\
		& \ \ \ \leq \gamma \mathbb{E}_{\bm{s}' \sim \widehat{\mathbb{P}}(\cdot |\bm{s}, \bm{a}), \bm{a}' \sim \pi(\cdot |\bm{s}')} \Big | \widetilde{Q}_{\diamond}(\bm{s}', \bm{a}') - \widetilde{Q}_{\dagger}(\bm{s}', \bm{a}')\Big | \nonumber \\ & \ \ \ \ \ \ + \eta \Big|\widetilde{Q}_{\diamond}(\bm{s}, \bm{a}) - \widetilde{Q}_{\dagger}(\bm{s}, \bm{a}) \Big| \nonumber \\ 
		& \ \ \ \leq \gamma \max_{\bm{s}',\bm{a}'} \Big | \widetilde{Q}_{\diamond}(\bm{s}', \bm{a}') - \widetilde{Q}_{\dagger}(\bm{s}', \bm{a}')\Big | \nonumber \\ & \ \ \ \ \ \ + \eta \max_{\bm{s},\bm{a}} \Big|\widetilde{Q}_{\diamond}(\bm{s}, \bm{a}) - \widetilde{Q}_{\dagger}(\bm{s}, \bm{a}) \Big| \nonumber \\ 
		& \ \ \ \leq \widetilde{\gamma} \max_{\bm{s},\bm{a}} \Big | \widetilde{Q}_{\diamond}(\bm{s}, \bm{a}) - \widetilde{Q}_{\dagger}(\bm{s}, \bm{a})\Big |, \forall (\bm{s}, \bm{a}). 
	\end{align}
	where $\widetilde{\gamma} = \gamma + \eta$. Equation (\ref{dn23r93r4t890}) can be reformulated in its vector representation as:
	\begin{align}\label{dan38654h3nc03}
		\big \| \widehat{\mathcal{T}}^{\pi} \widetilde{\bm{Q}}_{\diamond} - \widehat{\mathcal{T}}^{\pi} \widetilde{\bm{Q}}_{\dagger} \big \|_{\infty} \leq \widetilde{\gamma} \big \| \widetilde{\bm{Q}}_{\diamond} - \widetilde{\bm{Q}}_{\dagger} \big\|_{\infty},
	\end{align}
	which indicates $\widehat{\mathcal{T}}^{\pi}$ constitutes a contraction mapping, ensuring the existence of a fixed point $\widetilde{\bm{Q}}^{\pi}$ satisfying $\widetilde{\bm{Q}}^{\pi} = \widehat{\mathcal{T}}^{\pi} \widetilde{\bm{Q}}^{\pi}$. According to (\ref{dan38654h3nc03}), one has
	\begin{align}
		& \big\| \widetilde{\bm{Q}}_{k+1} - \widetilde{\bm{Q}}^{\pi} \big\|_{\infty} = \big\| \widehat{\mathcal{T}}^{\pi} \widetilde{\bm{Q}}_{k} - \widehat{\mathcal{T}}^{\pi} \widetilde{\bm{Q}}^{\pi} \big\|_{\infty} \nonumber \\
		& \ \ \ \leq \widetilde{\gamma} \big\| \widetilde{\bm{Q}}_{k} - \widetilde{\bm{Q}}^{\pi} \big\|_{\infty} = \widetilde{\gamma} \big\| \widehat{\mathcal{T}}^{\pi} \widetilde{\bm{Q}}_{k-1} - \widehat{\mathcal{T}}^{\pi} \widetilde{\bm{Q}}^{\pi} \big\|_{\infty} \nonumber \\
		& \ \ \ \leq \widetilde{\gamma}^2 \big\| \widetilde{\bm{Q}}_{k-1} - \widetilde{\bm{Q}}^{\pi} \big\|_{\infty} 
		\leq \cdots \leq \widetilde{\gamma}^{k+1} \big\| \widetilde{\bm{Q}}_{0} - \widetilde{\bm{Q}}^{\pi} \big\|_{\infty},
	\end{align}
	which demonstrates that the iterative sequence initiated from $\widetilde{\bm{Q}}_{0}$ converges to the fixed point $\widetilde{\bm{Q}}^{\pi}$.
\end{IEEEproof}

\section{Proof of Theorem \ref{dn239r8324r73840jfd9df}}\label{proof_of_theorem_2}
\begin{IEEEproof}
	Based on Theorem \ref{194hb5476f4909j33} and equation (\ref{dan73693hr39976g}), the following result holds:
	\begin{align}\label{adn3874rtn39489}
		\widetilde{\bm{Q}}^{\pi} & = \widehat{\mathcal{T}}^{\pi} \widetilde{\bm{Q}}^{\pi} \nonumber \\ 
		& = \widehat{\mathcal{B}}^{\pi} \widetilde{\bm{Q}}^{\pi} + \eta \widehat{\bm{A}}^{\mu}_{\widetilde{Q}^{\pi}} \nonumber \\
		& = \bm{r} + \gamma \widehat{\mathbf{P}}^{\pi} \widetilde{\bm{Q}}^{\pi} + \eta \big(\widehat{\bm{Q}}^{\mu} - \widetilde{\bm{Q}}^{\pi}\big).
	\end{align}
	From equation (\ref{dano382j0r537g67f4}), it follows that $\bm{Q}^{\pi} = \mathcal{B}^{\pi} \bm{Q}^{\pi} = \bm{r} + \gamma \mathbf{P}^{\pi} \bm{Q}^{\pi}$. Consequently, the vectorized form of reward function $\bm{r}$ can be expressed as $\bm{r} = (\bm{I} - \gamma \mathbf{P}^{\pi}) \bm{Q}^{\pi}$, which further leads to the derivation of (\ref{adn3874rtn39489}) as:
	\begin{align}\label{fm3944h84jnkro0m}
		\widetilde{\bm{Q}}^{\pi} & = (\bm{I} - \gamma \mathbf{P}^{\pi}) \bm{Q}^{\pi} + \gamma \widehat{\mathbf{P}}^{\pi} \widetilde{\bm{Q}}^{\pi} + \eta \big(\widehat{\bm{Q}}^{\mu} - \widetilde{\bm{Q}}^{\pi}\big) \nonumber \\
		& = (\bm{I} - \gamma \mathbf{P}^{\pi}) \bm{Q}^{\pi} + \gamma \mathbf{P}^{\pi} \widetilde{\bm{Q}}^{\pi} - \gamma \mathbf{P}^{\pi} \widetilde{\bm{Q}}^{\pi} \nonumber \\ & \ \ \ + \gamma \widehat{\mathbf{P}}^{\pi} \widetilde{\bm{Q}}^{\pi} + \eta \big(\widehat{\bm{Q}}^{\mu} - \widetilde{\bm{Q}}^{\pi}\big).
	\end{align}
	Reorganizing the terms in (\ref{fm3944h84jnkro0m}) yields:
	\begin{align}\label{dm203r23r8734r34989}
		& (\bm{I} - \gamma \mathbf{P}^{\pi}) \widetilde{\bm{Q}}^{\pi} = (\bm{I} - \gamma \mathbf{P}^{\pi}) \bm{Q}^{\pi} + \gamma \big(\widehat{\mathbf{P}}^{\pi} - \mathbf{P}^{\pi}\big) \widetilde{\bm{Q}}^{\pi} \nonumber \\
		& \ \ \ \ \ \ \ \ \ \ \ \ \ \ \ \ \ \ \ \ + \eta \big(\widehat{\bm{Q}}^{\mu} - \widetilde{\bm{Q}}^{\pi}\big).
	\end{align}
	By left-multiplying both sides of (\ref{dm203r23r8734r34989}) by $(\bm{I} - \gamma \mathbf{P}^{\pi})^{-1}$, equation (\ref{dan392893hr83}) is obtained.
\end{IEEEproof}

\section{Proof of Theorem \ref{sadn239er823rh923}}\label{proof_of_theorem_3}
\begin{IEEEproof}
	According to the analysis about the fixed-point of $\mathcal{T}^{\pi}$ presented in Corollary \ref{dn7539hr339h8}, one has
	\begin{align}\label{dm2138r2g85r2}
		& \overline{Q}^{\pi}(\bm{s}, \bm{a}) = \Big(\mathcal{T}^{\pi} \overline{Q}^{\pi}\Big)(\bm{s}, \bm{a}) \nonumber \\ 
		& = \Big(\mathcal{B}^{\pi} \overline{Q}^{\pi}\Big) (\bm{s}, \bm{a}) + \eta A^{\mu}_{\overline{Q}^{\pi}}(\bm{s}, \bm{a}) \nonumber \\
		& = r(\bm{s}, \bm{a}) + \gamma \mathbb{E}_{\substack{\bm{s}' \sim \mathbb{P}(\cdot |\bm{s}, \bm{a})\\\bm{a}' \sim \pi(\cdot |\bm{s}')}} \Big[\overline{Q}^{\pi}(\bm{s}', \bm{a}')\Big] + \eta A^{\mu}_{\overline{Q}^{\pi}}(\bm{s}, \bm{a}).
	\end{align}
	Based on the fixed-point equation of Bellman operator $\mathcal{B}^{\pi}$ in (\ref{dano382j0r537g67f4}), it follows that $Q^{\pi}(\bm{s}, \bm{a}) = (\mathcal{B}^{\pi} Q^{\pi}) (\bm{s}, \bm{a}) = r(\bm{s}, \bm{a}) + \gamma \mathbb{E}_{\bm{s}' \sim \mathbb{P}(\cdot |\bm{s}, \bm{a}), \bm{a}' \sim \pi(\cdot |\bm{s}')} [Q^{\pi}(\bm{s}', \bm{a}')]$, which leads to 
	\begin{align}\label{nd92137205g764765498u934}
		r(\bm{s}, \bm{a}) = Q^{\pi}(\bm{s}, \bm{a}) - \gamma \mathbb{E}_{\bm{s}' \sim \mathbb{P}(\cdot |\bm{s}, \bm{a}), \bm{a}' \sim \pi(\cdot |\bm{s}')} [Q^{\pi}(\bm{s}', \bm{a}')].
	\end{align}
	Substituting (\ref{nd92137205g764765498u934}) into (\ref{dm2138r2g85r2}) results in 
	\begin{align}
		& \overline{Q}^{\pi}(\bm{s}, \bm{a}) = Q^{\pi}(\bm{s}, \bm{a}) - \gamma \mathbb{E}_{\bm{s}' \sim \mathbb{P}(\cdot |\bm{s}, \bm{a}), \bm{a}' \sim \pi(\cdot |\bm{s}')} [Q^{\pi}(\bm{s}', \bm{a}')] \nonumber \\
		& \ + \gamma \mathbb{E}_{\bm{s}' \sim \mathbb{P}(\cdot |\bm{s}, \bm{a}), \bm{a}' \sim \pi(\cdot |\bm{s}')} \Big[\overline{Q}^{\pi}(\bm{s}', \bm{a}')\Big] + \eta A^{\mu}_{\overline{Q}^{\pi}}(\bm{s}, \bm{a}),
	\end{align}
	which implies that 
	\begin{align}\label{d2n3782r67349634b83}
		& \overline{Q}^{\pi}(\bm{s}, \bm{a}) - Q^{\pi}(\bm{s}, \bm{a}) = \eta \big(Q^{\mu}(\bm{s}, \bm{a}) - \overline{Q}^{\pi}(\bm{s}, \bm{a})\big) \nonumber \\ 
		& + \gamma \mathbb{E}_{\bm{s}' \sim \mathbb{P}(\cdot |\bm{s}, \bm{a}),\bm{a}' \sim \pi(\cdot |\bm{s}')} \Big[\overline{Q}^{\pi}(\bm{s}', \bm{a}') - Q^{\pi}(\bm{s}', \bm{a}') \Big] \nonumber \\ 
		& + \eta Q^{\pi}(\bm{s}, \bm{a}) - \eta Q^{\pi}(\bm{s}, \bm{a}) \nonumber \\
		= & \ \eta \big(Q^{\pi}(\bm{s}, \bm{a}) - \overline{Q}^{\pi}(\bm{s}, \bm{a})\big) + \eta \big(Q^{\mu}(\bm{s}, \bm{a}) - Q^{\pi}(\bm{s}, \bm{a})\big) \nonumber \\ 
		& + \gamma \mathbb{E}_{\bm{s}' \sim \mathbb{P}(\cdot |\bm{s}, \bm{a}),\bm{a}' \sim \pi(\cdot |\bm{s}')} \Big[\overline{Q}^{\pi}(\bm{s}', \bm{a}') - Q^{\pi}(\bm{s}', \bm{a}') \Big].
	\end{align}
	Computing the absolute value on both sides of (\ref{d2n3782r67349634b83}) yields:
	\begin{align}\label{31294h86fcj3893}
		& \Big| \overline{Q}^{\pi}(\bm{s}, \bm{a}) - Q^{\pi}(\bm{s}, \bm{a}) \Big| \nonumber \\ 
		& \ \ \leq \eta \Big| Q^{\pi}(\bm{s}, \bm{a}) - \overline{Q}^{\pi}(\bm{s}, \bm{a}) \Big| + \eta \Big| Q^{\mu}(\bm{s}, \bm{a}) - Q^{\pi}(\bm{s}, \bm{a}) \Big| \nonumber \\ 
		& \ \ \ \ \ + \gamma \Big| \mathbb{E}_{\bm{s}' \sim \mathbb{P}(\cdot |\bm{s}, \bm{a}),\bm{a}' \sim \pi(\cdot |\bm{s}')} \Big[\overline{Q}^{\pi}(\bm{s}', \bm{a}') - Q^{\pi}(\bm{s}', \bm{a}') \Big] \Big| \nonumber \\ 
		& \ \ \leq \eta \Big| \overline{Q}^{\pi}(\bm{s}, \bm{a}) - Q^{\pi}(\bm{s}, \bm{a}) \Big| + \eta \Big| Q^{\mu}(\bm{s}, \bm{a}) - Q^{\pi}(\bm{s}, \bm{a}) \Big| \nonumber \\ 
		& \ \ \ \ \ + \gamma \mathbb{E}_{\bm{s}' \sim \mathbb{P}(\cdot |\bm{s}, \bm{a}),\bm{a}' \sim \pi(\cdot |\bm{s}')} \Big| \overline{Q}^{\pi}(\bm{s}', \bm{a}') - Q^{\pi}(\bm{s}', \bm{a}') \Big| \nonumber \\
		& \ \ \leq \eta \max_{\bm{s},\bm{a}} \Big[\Big| \overline{Q}^{\pi}(\bm{s}, \bm{a}) - Q^{\pi}(\bm{s}, \bm{a}) \Big| + \Big| Q^{\mu}(\bm{s}, \bm{a}) - Q^{\pi}(\bm{s}, \bm{a}) \Big| \Big] \nonumber \\
		& \ \ \ \ \ + \gamma \max_{\bm{s}',\bm{a}'} \Big| \overline{Q}^{\pi}(\bm{s}', \bm{a}') - Q^{\pi}(\bm{s}', \bm{a}') \Big| \nonumber \\
		& \ \ \leq \widetilde{\gamma} \max_{\bm{s},\bm{a}} \Big| \overline{Q}^{\pi}(\bm{s}, \bm{a}) - Q^{\pi}(\bm{s}, \bm{a}) \Big| \nonumber \\
		& \ \ \ \ \ + \eta \max_{\bm{s},\bm{a}} \Big| Q^{\mu}(\bm{s}, \bm{a}) - Q^{\pi}(\bm{s}, \bm{a}) \Big|, 
	\end{align}
	with its vectorized form represented as:
	\begin{align}\label{dm93r72r7674n}
		\big\| \overline{\bm{Q}}^{\pi} - \bm{Q}^{\pi} \big\|_{\infty} & \leq \widetilde{\gamma} \big\| \overline{\bm{Q}}^{\pi} - \bm{Q}^{\pi} \big\|_{\infty} + \eta \big\| \bm{Q}^{\mu} - \bm{Q}^{\pi}\big\|_{\infty} \nonumber \\ 
		& \leq \frac{\eta}{1 - \widetilde{\gamma}} \big\| \bm{Q}^{\mu} - \bm{Q}^{\pi}\big\|_{\infty}.
	\end{align}
	Following Lemma \ref{en32976dg37239922}, (\ref{en238437gbh48784}) is straightforwardly obtained.  
\end{IEEEproof}

\section{Proof of Lemma \ref{d2n32983898y74bbnf}}\label{proof_of_lemma_2}
\begin{IEEEproof}
	By the triangle inequality, it follows that:
	\begin{align}\label{md921636gb73879nn}
		\Big\| \bm{Q}^{\pi} - \widehat{\bm{Q}}^{\mu} \Big\|_{\infty} & \leq \Big\| \bm{Q}^{\pi} - \bm{Q}^{\mu} + \bm{Q}^{\mu} - \widehat{\bm{Q}}^{\mu} \Big\|_{\infty} \nonumber \\ 
		& \leq \Big\| \bm{Q}^{\pi} - \bm{Q}^{\mu} \Big\|_{\infty} + \Big\| \bm{Q}^{\mu} - \widehat{\bm{Q}}^{\mu} \Big\|_{\infty}.
	\end{align}
	Based on the definitions of $\mathcal{B}^{\pi}$ in (\ref{dano382j0r537g67f4}) and $\widehat{\mathcal{B}}^{\pi}$ in (\ref{dn38hf7439b3uef}), one has
	\begin{align}
		\bm{Q}^{\mu} & = \mathcal{B}^{\mu} \bm{Q}^{\mu} = \bm{r} + \gamma \mathbf{P}^{\mu} \bm{Q}^{\mu}, \nonumber \\
		\widehat{\bm{Q}}^{\mu} & = \widehat{\mathcal{B}}^{\mu} \widehat{\bm{Q}}^{\mu} = \bm{r} + \gamma \widehat{\mathbf{P}}^{\mu} \widehat{\bm{Q}}^{\mu},
	\end{align}
	which implies that
	\begin{align}\label{dnm238nf34893ngft}
		\widehat{\bm{Q}}^{\mu} - \bm{Q}^{\mu} & = \gamma \widehat{\mathbf{P}}^{\mu} \widehat{\bm{Q}}^{\mu} - \gamma \mathbf{P}^{\mu} \bm{Q}^{\mu} \nonumber \\
		& = \gamma \widehat{\mathbf{P}}^{\mu} \widehat{\bm{Q}}^{\mu} - \gamma \widehat{\mathbf{P}}^{\mu} \bm{Q}^{\mu} + \gamma \widehat{\mathbf{P}}^{\mu} \bm{Q}^{\mu} - \gamma \mathbf{P}^{\mu} \bm{Q}^{\mu} \nonumber \\
		& = \gamma \widehat{\mathbf{P}}^{\mu} \big(\widehat{\bm{Q}}^{\mu} - \bm{Q}^{\mu}\big) + \gamma \big(\widehat{\mathbf{P}}^{\mu} - \mathbf{P}^{\mu} \big) \bm{Q}^{\mu} \nonumber \\ 
		& = \gamma \big(\bm{I} - \gamma \widehat{\mathbf{P}}^{\mu}\big)^{-1} \big(\widehat{\mathbf{P}}^{\mu} - \mathbf{P}^{\mu} \big) \bm{Q}^{\mu}.
	\end{align}
	Applying the maximum norm to both sides of (\ref{dnm238nf34893ngft}) yields:
	\begin{align}\label{dn23n98r7347638}
		\Big\| \widehat{\bm{Q}}^{\mu} - \bm{Q}^{\mu} \Big\|_{\infty} = \gamma \Big\| \Big(\bm{I} - \gamma \widehat{\mathbf{P}}^{\mu}\Big)^{-1} \Big\|_{\infty} \Big\| \Big(\widehat{\mathbf{P}}^{\mu} - \mathbf{P}^{\mu} \Big) \bm{Q}^{\mu} \Big\|_{\infty}.
	\end{align}
	Next, the term $\big(\widehat{\mathbf{P}}^{\mu} - \mathbf{P}^{\mu} \big) \bm{Q}^{\mu}$ in (\ref{dn23n98r7347638}) is analyzed as follows:
	\begin{align}\label{nd838m49784}
		& \Big(\widehat{\mathbf{P}}^{\mu} - \mathbf{P}^{\mu} \Big) Q^{\mu}(\bm{s}, \bm{a}) = \widehat{\mathbf{P}}^{\mu} Q^{\mu}(\bm{s}, \bm{a}) - \mathbf{P}^{\mu} Q^{\mu}(\bm{s}, \bm{a}) \nonumber \\
		& \ \ \ \ = \mathbb{E}_{\substack{\bm{s}' \sim \widehat{\mathbb{P}}(\cdot |\bm{s}, \bm{a})\\\bm{a}' \sim \mu(\cdot |\bm{s}')}} \big[Q^{\mu}(\bm{s}', \bm{a}')\big] - \mathbb{E}_{\substack{\bm{s}' \sim \mathbb{P}(\cdot |\bm{s}, \bm{a})\\\bm{a}' \sim \mu(\cdot |\bm{s}')}} \big[Q^{\mu}(\bm{s}', \bm{a}')\big] \nonumber \\
		& \ \ \ \ = \int \Big(\widehat{\mathbb{P}}(\bm{s}' |\bm{s}, \bm{a}) - \mathbb{P}(\bm{s}' |\bm{s}, \bm{a})\Big) \mathbb{E}_{\bm{a}' \sim \mu(\cdot |\bm{s}')} \big[Q^{\mu}(\bm{s}', \bm{a}')\big] \,d\bm{s}'.
	\end{align}
	Considering the absolute value of both sides of (\ref{nd838m49784}), one has:
	\begin{align}\label{dn329847bftht}
		& \Big| \big(\widehat{\mathbf{P}}^{\mu} - \mathbf{P}^{\mu} \big) Q^{\mu}(\bm{s}, \bm{a}) \Big| \nonumber \\
		& = \bigg| \int \Big(\widehat{\mathbb{P}}(\bm{s}' |\bm{s}, \bm{a}) - \mathbb{P}(\bm{s}' |\bm{s}, \bm{a})\Big) \mathbb{E}_{\bm{a}' \sim \mu(\cdot |\bm{s}')} \big[Q^{\mu}(\bm{s}', \bm{a}')\big] \,d\bm{s}' \bigg| \nonumber \\
		& \leq \int \Big|\widehat{\mathbb{P}}(\bm{s}' |\bm{s}, \bm{a}) - \mathbb{P}(\bm{s}' |\bm{s}, \bm{a})\Big| \Big| \mathbb{E}_{\bm{a}' \sim \mu(\cdot |\bm{s}')} \big[Q^{\mu}(\bm{s}', \bm{a}')\big] \Big| \,d\bm{s}' \nonumber \\
		& \leq \int \Big|\widehat{\mathbb{P}}(\bm{s}' |\bm{s}, \bm{a}) - \mathbb{P}(\bm{s}' |\bm{s}, \bm{a})\Big| \,d\bm{s}' \max_{\bm{s}',\bm{a}'} \big|Q^{\mu}(\bm{s}', \bm{a}') \big| \nonumber \\ 
		& = 2 \varUpsilon_{TV}\Big[\widehat{\mathbb{P}}(\cdot |\bm{s}, \bm{a}) \big\Vert \mathbb{P}(\cdot |\bm{s}, \bm{a}) \Big] \max_{\bm{s},\bm{a}} \big|Q^{\mu}(\bm{s}, \bm{a}) \big|.
	\end{align}
	Due to the property $\max_{\bm{s},\bm{a}} |Q^{\mu}(\bm{s}, \bm{a}) | < \overline{r}/(1-\gamma)$, it can be deduced from (\ref{dn329847bftht}) that
	\begin{align}\label{nd921373rb3878835}
		\Big\| \big(\widehat{\mathbf{P}}^{\mu} - \mathbf{P}^{\mu} \big) \bm{Q}^{\mu} \Big\|_{\infty} \leq \frac{2 \overline{r}}{1-\gamma} \max_{\bm{s},\bm{a}} \varUpsilon_{TV}\Big[\widehat{\mathbb{P}}(\cdot |\bm{s}, \bm{a}) \big\Vert \mathbb{P}(\cdot |\bm{s}, \bm{a}) \Big].
	\end{align}
	By synthesizing (\ref{d29387reh2937444}), (\ref{dn23n98r7347638}), and (\ref{nd921373rb3878835}), equation (\ref{dj2903r83b8h939hn34}) can be derived from (\ref{md921636gb73879nn}).
\end{IEEEproof}

\section{Proof of Theorem \ref{dmff2398472}}\label{proof_of_theorem_4}
\begin{IEEEproof}
	From the fixed-point equation of $\widehat{\mathcal{T}}^{\pi}$ presented in Theorem \ref{194hb5476f4909j33}, one can derive:
	\begin{align}\label{dn8327r4379fh9434}
		\widetilde{Q}^{\pi}(\bm{s}, \bm{a}) & = \Big(\widehat{\mathcal{T}}^{\pi} \widetilde{Q}^{\pi}\Big)(\bm{s}, \bm{a}) \nonumber \\ 
		& = \Big(\widehat{\mathcal{B}}^{\pi} \widetilde{Q}^{\pi}\Big) (\bm{s}, \bm{a}) + \eta \widehat{A}^{\mu}_{\widetilde{Q}^{\pi}}(\bm{s}, \bm{a}) \nonumber \\ 
		& = r(\bm{s}, \bm{a}) + \gamma \mathbb{E}_{\bm{s}' \sim \widehat{\mathbb{P}}(\cdot |\bm{s}, \bm{a}),\bm{a}' \sim \pi(\cdot |\bm{s}')} \Big[\widetilde{Q}^{\pi}(\bm{s}', \bm{a}')\Big] \nonumber \\ & \ \ \ + \eta \widehat{A}^{\mu}_{\widetilde{Q}^{\pi}}(\bm{s}, \bm{a}).
	\end{align}
	Incorporating (\ref{nd92137205g764765498u934}) into (\ref{dn8327r4379fh9434}) leads to:
	\begin{align}\label{sdn2398325h27838}
		& \widetilde{Q}^{\pi}(\bm{s}, \bm{a}) = Q^{\pi}(\bm{s}, \bm{a}) - \gamma \mathbb{E}_{\bm{s}' \sim \mathbb{P}(\cdot |\bm{s}, \bm{a}), \bm{a}' \sim \pi(\cdot |\bm{s}')} \big[Q^{\pi}(\bm{s}', \bm{a}')\big] \nonumber \\
		& \ + \gamma \mathbb{E}_{\bm{s}' \sim \widehat{\mathbb{P}}(\cdot |\bm{s}, \bm{a}),\bm{a}' \sim \pi(\cdot |\bm{s}')} \big[\widetilde{Q}^{\pi}(\bm{s}', \bm{a}')\big] + \eta \widehat{A}^{\mu}_{\widetilde{Q}^{\pi}}(\bm{s}, \bm{a}).
	\end{align} 
	It can be observed that
	\begin{align}\label{dm3976543bnds32l}
		& \mathbb{E}_{\substack{\bm{s}' \sim \widehat{\mathbb{P}}(\cdot |\bm{s}, \bm{a})\\\bm{a}' \sim \pi(\cdot |\bm{s}')}} \Big[\widetilde{Q}^{\pi}(\bm{s}', \bm{a}')\Big] - \mathbb{E}_{\substack{\bm{s}' \sim \mathbb{P}(\cdot |\bm{s}, \bm{a}) \\ \bm{a}' \sim \pi(\cdot |\bm{s}')}} \Big[Q^{\pi}(\bm{s}', \bm{a}')\Big] \nonumber \\ 
		& = \mathbb{E}_{\substack{\bm{s}' \sim \widehat{\mathbb{P}}(\cdot |\bm{s}, \bm{a})\\\bm{a}' \sim \pi(\cdot |\bm{s}')}} \Big[\widetilde{Q}^{\pi}(\bm{s}', \bm{a}')\Big] - \mathbb{E}_{\substack{\bm{s}' \sim \widehat{\mathbb{P}}(\cdot |\bm{s}, \bm{a})\\\bm{a}' \sim \pi(\cdot |\bm{s}')}} \Big[Q^{\pi}(\bm{s}', \bm{a}')\Big] \nonumber \\
		& \ \ \ \ + \mathbb{E}_{\substack{\bm{s}' \sim \widehat{\mathbb{P}}(\cdot |\bm{s}, \bm{a})\\\bm{a}' \sim \pi(\cdot |\bm{s}')}} \Big[Q^{\pi}(\bm{s}', \bm{a}')\Big] - \mathbb{E}_{\substack{\bm{s}' \sim \mathbb{P}(\cdot |\bm{s}, \bm{a}) \\ \bm{a}' \sim \pi(\cdot |\bm{s}')}} \Big[Q^{\pi}(\bm{s}', \bm{a}')\Big] \nonumber \\
		& = \int \Big(\widehat{\mathbb{P}}(\bm{s}' |\bm{s}, \bm{a}) - \mathbb{P}(\bm{s}' |\bm{s}, \bm{a})\Big) \mathbb{E}_{\bm{a}' \sim \pi(\cdot |\bm{s}')} \big[Q^{\pi}(\bm{s}', \bm{a}')\big] \,d\bm{s}' \nonumber \\
		& \ \ \ \ + \mathbb{E}_{\bm{s}' \sim \widehat{\mathbb{P}}(\cdot |\bm{s}, \bm{a}),\bm{a}' \sim \pi(\cdot |\bm{s}')} \Big[\widetilde{Q}^{\pi}(\bm{s}', \bm{a}') - Q^{\pi}(\bm{s}', \bm{a}')\Big].
	\end{align}
	Building on (\ref{dm3976543bnds32l}), (\ref{sdn2398325h27838}) can be derived as follows:
	\begin{align}\label{dnmn329845n349rnff}
		& \widetilde{Q}^{\pi}(\bm{s}, \bm{a}) - Q^{\pi}(\bm{s}, \bm{a}) = \gamma \mathbb{E}_{\substack{\bm{s}' \sim \widehat{\mathbb{P}}(\cdot |\bm{s}, \bm{a})\\\bm{a}' \sim \pi(\cdot |\bm{s}')}} \Big[\widetilde{Q}^{\pi}(\bm{s}', \bm{a}') - Q^{\pi}(\bm{s}', \bm{a}')\Big] \nonumber \\
		& + \gamma \int \Big(\widehat{\mathbb{P}}(\bm{s}' |\bm{s}, \bm{a}) - \mathbb{P}(\bm{s}' |\bm{s}, \bm{a})\Big) \mathbb{E}_{\bm{a}' \sim \pi(\cdot |\bm{s}')} \big[Q^{\pi}(\bm{s}', \bm{a}')\big] \,d\bm{s}' \nonumber \\ 
		& + \eta \Big(Q^{\pi}(\bm{s}, \bm{a}) - \widetilde{Q}^{\pi}(\bm{s}, \bm{a}) + \widehat{Q}^{\mu}(\bm{s}, \bm{a}) - Q^{\pi}(\bm{s}, \bm{a})\Big).
	\end{align}
	Considering the absolute value of both sides of (\ref{dnmn329845n349rnff}), one has:
	\begin{align}\label{nd83903biierer}
		& \Big| \widetilde{Q}^{\pi}(\bm{s}, \bm{a}) - Q^{\pi}(\bm{s}, \bm{a}) \Big| \leq \eta \Big| \widetilde{Q}^{\pi}(\bm{s}, \bm{a}) - Q^{\pi}(\bm{s}, \bm{a}) \Big| \nonumber \\ 
		& \ + \gamma \bigg| \int \Big(\widehat{\mathbb{P}}(\bm{s}' |\bm{s}, \bm{a}) - \mathbb{P}(\bm{s}' |\bm{s}, \bm{a})\Big) \mathbb{E}_{\bm{a}' \sim \pi(\cdot |\bm{s}')} \big[Q^{\pi}(\bm{s}', \bm{a}')\big] \,d\bm{s}' \bigg| \nonumber \\ 
		& \ + \gamma \Big| \mathbb{E}_{\bm{s}' \sim \widehat{\mathbb{P}}(\cdot |\bm{s}, \bm{a}),\bm{a}' \sim \pi(\cdot |\bm{s}')} \Big[\widetilde{Q}^{\pi}(\bm{s}', \bm{a}') - Q^{\pi}(\bm{s}', \bm{a}')\Big] \Big| \nonumber \\
		& \ + \eta \Big| \widehat{Q}^{\mu}(\bm{s}, \bm{a}) - Q^{\pi}(\bm{s}, \bm{a}) \Big| \nonumber \\
		& \leq \eta \Big| \widetilde{Q}^{\pi}(\bm{s}, \bm{a}) - Q^{\pi}(\bm{s}, \bm{a}) \Big| + \eta \Big| \widehat{Q}^{\mu}(\bm{s}, \bm{a}) - Q^{\pi}(\bm{s}, \bm{a}) \Big| \nonumber \\ 
		& \ + \gamma \int \Big|\widehat{\mathbb{P}}(\bm{s}' |\bm{s}, \bm{a}) - \mathbb{P}(\bm{s}' |\bm{s}, \bm{a})\Big| \mathbb{E}_{\bm{a}' \sim \pi(\cdot |\bm{s}')} \big[\big|Q^{\pi}(\bm{s}', \bm{a}')\big| \big] \,d\bm{s}' \nonumber \\
		& \ + \gamma \mathbb{E}_{\bm{s}' \sim \widehat{\mathbb{P}}(\cdot |\bm{s}, \bm{a}),\bm{a}' \sim \pi(\cdot |\bm{s}')} \Big[\Big|\widetilde{Q}^{\pi}(\bm{s}', \bm{a}') - Q^{\pi}(\bm{s}', \bm{a}')\Big| \Big] \nonumber \\
		& \leq 2 \gamma  \varUpsilon_{TV}\Big[\widehat{\mathbb{P}}(\cdot |\bm{s}, \bm{a}) \big\Vert \mathbb{P}(\cdot |\bm{s}, \bm{a}) \Big] \max_{\bm{s},\bm{a}} \big|Q^{\pi}(\bm{s}, \bm{a})\big| \nonumber \\ 
		& \ + (\gamma + \eta) \max_{\bm{s},\bm{a}} \Big| \widetilde{Q}^{\pi}(\bm{s}, \bm{a}) - Q^{\pi}(\bm{s}, \bm{a}) \Big| \nonumber \\ 
		& \ + \eta \max_{\bm{s},\bm{a}} \Big| \widehat{Q}^{\mu}(\bm{s}, \bm{a}) - Q^{\pi}(\bm{s}, \bm{a}) \Big|.
	\end{align}
	With the property that $\max_{\bm{s},\bm{a}} |Q^{\pi}(\bm{s}, \bm{a}) | < \overline{r}/(1-\gamma)$, (\ref{nd83903biierer}) can be reformulated as:
	\begin{align}\label{dnn9327665219ncedllv}
		& \Big\| \widetilde{\bm{Q}}^{\pi} - \bm{Q}^{\pi} \Big\|_{\infty} \leq \widetilde{\gamma} \Big\| \widetilde{\bm{Q}}^{\pi} - \bm{Q}^{\pi} \Big\|_{\infty} + \eta \Big\| \widehat{\bm{Q}}^{\mu} - \bm{Q}^{\pi} \Big\|_{\infty} \nonumber \\ 
		& \ \ \ \ \ \ \ \ \ \ \ \ \ \ \ \ \ \ \ \ \ \ \ \ \ + \frac{2 \gamma \overline{r}}{1-\gamma} \max_{\bm{s},\bm{a}} \varUpsilon_{TV}\Big[\widehat{\mathbb{P}}(\cdot |\bm{s}, \bm{a}) \big\Vert \mathbb{P}(\cdot |\bm{s}, \bm{a}) \Big] \nonumber \\
		& \ \ \ \ \ \ \ \ \ \ \ \ \ \leq \frac{2 \gamma \overline{r}}{(1-\gamma)(1 - \widetilde{\gamma})} \max_{\bm{s},\bm{a}} \varUpsilon_{TV}\Big[\widehat{\mathbb{P}}(\cdot |\bm{s}, \bm{a}) \big\Vert \mathbb{P}(\cdot |\bm{s}, \bm{a}) \Big] \nonumber \\
		& \ \ \ \ \ \ \ \ \ \ \ \ \ \ \ \ \ \ \ \ \ \ \ \ \ + \frac{\eta}{1 - \widetilde{\gamma}} \Big\| \widehat{\bm{Q}}^{\mu} - \bm{Q}^{\pi} \Big\|_{\infty}.
	\end{align}
	By combining (\ref{dj2903r83b8h939hn34}) with (\ref{dnn9327665219ncedllv}), (\ref{dnn38794br874r7484r49}) can be obtained.
\end{IEEEproof}

\ifCLASSOPTIONcaptionsoff
\newpage
\fi


\begin{thebibliography}{00}

\bibitem{Sutton1}
R.~S.~Sutton and A.~G.~Barto, {\it Reinforcement Learning.} Cambridge, MA, USA: MIT Press, 1998.

\bibitem{Feng2025TCDS}
Y. Feng, Z. Wu, J. Wang, S. Li, Y. Huang, J. Yu, and M. Tan, “Decentralized reinforcement learning for multiple robotic fish in cooperative pursuit task,” {\it IEEE Trans. Cogn. Develop. Syst.}, vol. 17, no. 4, pp. 1022--1034, Aug. 2025.

\bibitem{WangH2024}
C. Sun, X. Wu, Y. Su, X. Shi, and C. Sun, “Multithreaded asynchronous deep reinforcement learning with multisensor fusion for robot collision avoidance,” {\it IEEE Trans. Neural Netw. Learn. Syst.}, vol. 36, no. 9, pp. 16128--16142, Sep. 2025.

\bibitem{9817657}
C. Pan, Z. Peng, Y. Li, B. Han, and D. Wang, “Flocking of under-actuated unmanned surface vehicles via deep reinforcement learning and model predictive path integral control,” {\it IEEE Trans. Instrum. Meas.}, vol. 73, 2024, Art. no. 2505011.

\bibitem{Wang_X2023}
Y. Liu, Q. Zhang, Y. Gao, and D. Zhao, “Deep-reinforcement-learning-based driving policy at intersections utilizing lane graph networks,” {\it IEEE Trans. Cogn. Develop. Syst.}, vol. 16, no. 5, pp. 1759--1774, Oct. 2024.

\bibitem{Chen2023Milestones}
L.-Y. Hao, G. Dong, T. Li, and Z. Peng, “Path-following control with obstacle avoidance of autonomous surface vehicles subject to actuator faults,” {\it IEEE/CAA J. Autom. Sinica}, vol. 11, no. 4, pp. 956--964, Apr. 2024.

\bibitem{JiangY2024}
X. Huang, Y. Cheng, Q. Yu, and X. Wang, “Deep reinforcement learning for autonomous driving based on safety experience replay,” {\it IEEE Trans. Cogn. Develop. Syst.}, vol. 16, no. 6, pp. 2070--2084, Dec. 2024.

\bibitem{Dogru2024}
O. Dogru, J. Xie, O. Prakash, R. Chiplunkar, J. Soesanto, H. Chen, K. Velswamy, F. Ibrahim, and B. Huang, “Reinforcement learning in process industries: review and perspective,” {\it IEEE/CAA J. Autom. Sinica}, vol. 11, no. 2, pp. 283--300, Feb. 2024.

\bibitem{CuiY_TNNLS2023}
X. Hai, Q. Feng, W. Chen, C. Wen, and A. W. Khong, “Capability-oriented decision-making in multi-uav deployment and task allocation: A hierarchical game-based framework,” {\it IEEE Trans. Syst., Man, Cybern., Syst.}, vol. 55, no. 7, pp. 4562--4574, Jul. 2025.

\bibitem{CuiY_TNNLS2024}
W. Huang, Y. Cui, H. Li, and X. Wu, “Practical probabilistic model-based reinforcement learning by integrating dropout uncertainty and trajectory sampling,” {\it IEEE Trans. Neural Netw. Learn. Syst.}, vol. 36, no. 7, pp. 12812--12826, Jul. 2025.

\bibitem{WHChen2024}
S. Wei, X. Wang, X. Feng, and H. Yu, “MAST: Multiagent safe transformer for reinforcement learning,” {\it IEEE Trans. Cogn. Develop. Syst.}, vol. 17, no. 4, pp. 976--986, Aug. 2025.

\bibitem{Yang2022}
Y. Yang, Z. Ding, R. Wang, H. Modares, and D. C. Wunsch, “Data-driven human-robot interaction without velocity measurement using off-policy reinforcement learning,” {\it IEEE/CAA J. Autom. Sinica}, vol. 9, no. 1, pp. 47--63, Jan. 2022.

\bibitem{CaoSSMC2024}
L. Huang, B. Dong, W. Xie, and W. Zhang, “Offline reinforcement learning with behavior value regularization,” {\it IEEE Trans. Cybern.}, vol. 54, no. 6, pp. 3692--3704, Jun. 2024.

\bibitem{XiaL2024}
K. Jiang, W. Liu, Y. Wang, L. Dong, and C. Sun, “Discovering latent variables for the tasks with confounders in multi-agent reinforcement learning,” {\it IEEE/CAA J. Autom. Sinica}, vol. 11, no. 7, pp. 1591--1604, Jul. 2024.

\bibitem{Prudencio2023}
R. F. Prudencio, M. R. O. A. Maximo, and E. L. Colombini, “A survey on offline reinforcement learning: Taxonomy, review, and open problems,” {\it IEEE Trans. Neural Netw. Learn. Syst.}, vol. 35, no. 8, pp. 10237--10257, Aug. 2024.

\bibitem{WangJ2023}
J. Wang, J. Zhang, H. Jiang, J. Zhang, L. Wang, and C. Zhang, “Offline meta reinforcement learning with in-distribution online adaptation,” in {\it Proc. Int. Conf. Mach. Learn.}, vol. 202, 2023, pp. 36626--36669.

\bibitem{LiuJAAAI}
J. Liu, Z. Zhang, Z. Wei, Z. Zhuang, Y. Kang, S. Gai, and D. Wang, “Beyond ood state actions: Supported cross-domain offline reinforcement learning,” in {\it Proc. AAAI Conf. Artif. Intell.}, vol. 38, no. 12, 2024, pp. 13945--13953.

\bibitem{HuS2024}
S. Hu, Z. Fan, C. Huang, L. Shen, Y. Zhang, Y. Wang, and D. Tao, “Q-value regularized transformer for offline reinforcement learning,” in {\it Proc. Int. Conf. Mach. Learn.}, 2024, pp. 19165--19181.

\bibitem{YuanZ2024}
Z. Yuan, Z. Zhang, X. Li, Y. Cui, M. Li, and X. Ban, “Controlling partially observed industrial system based on offline reinforcement learning--a case study of paste thickener,” {\it IEEE Trans. Ind. Informat.}, vol. 21, no. 1, pp. 49--59, Jan. 2025.

\bibitem{MiaoC2024}
N. Pang, L. Huang, B. Dong, H. Chen, X. Wang, and W. Zhang, “Sarsa-augmented off-policy reinforcement learning,” {\it IEEE Trans. Cogn. Develop. Syst.}, vol. 18, no. 2, pp. 504--517, Apr. 2026.

\bibitem{ZhangY2024}
Y. Zhang, J. Liu, C. Li, Y. Niu, Y. Yang, Y. Liu, and W. Ouyang, “A perspective of q-value estimation on offline-to-online reinforcement learning,” in {\it Proc. AAAI Conf. Artif. Intell.}, vol. 38, no. 15, 2024, pp. 16908--16916.

\bibitem{KumarA2020}
A. Kumar, A. Zhou, G. Tucker, and S. Levine, “Conservative Q-learning for offline reinforcement learning,” in {\it Proc. Adv. Neural Inf. Process. Syst.}, vol. 33, 2020, pp. 1179--1191.

\bibitem{KostrikovI2021}
I. Kostrikov, A. Nair, and S. Levine, “Ofﬂine reinforcement learning with implicit Q-learning,” in {\it Proc. Int. Conf. Learn. Represent.}, 2021, pp. 1--13.

\bibitem{DengZ2024}
Z. Deng, Z. Fu, L. Wang, Z. Yang, C. Bai, T. Zhou, Z. Wang, and J. Jiang, “False correlation reduction for offline reinforcement learning,” {\it IEEE Trans. Pattern Anal. Mach. Intell.}, vol. 46, no. 2, pp. 1199--1211, Feb. 2024.

\bibitem{MaoY2024}
Y. Mao, H. Zhang, C. Chen, Y. Xu, and X. Ji, “Supported value regularization for ofﬂine reinforcement learning,” in {\it Proc. Adv. Neural Inf. Process. Syst.}, vol. 36, 2023, pp. 40587--40609.

\bibitem{ShaoJ20223}
J. Shao, Y. Qu, C. Chen, H. Zhang, and X. Ji, “Counterfactual conservative q learning for offline multi-agent reinforcement learning,” in {\it Proc. Adv. Neural Inf. Process. Syst.}, vol. 36, 2023, pp. 77290--77312.

\bibitem{MaC2025}
C. Ma, D. Yang, T. Wu, Z. Liu, H. Yang, X. Chen, X. Lan, and N. Zheng, “Improving offline reinforcement learning with in-sample advantage regularization for robot manipulation,” {\it IEEE Trans. Neural Netw. Learn. Syst.}, vol. 36, no. 6, pp. 11215--11227, Jun. 2025.

\bibitem{ZhangR2024}
R. Zhang, Z. Luo, J. Sj{\"o}lund, T. Sch{\"o}n, and P. Mattsson, “Entropy-regularized diffusion policy with q-ensembles for offline reinforcement learning,” in {\it Proc. Adv. Neural Inf. Process. Syst.}, vol. 37, 2024, pp. 98871--98897.

\bibitem{Fujimoto2021330}
S. Fujimoto and S. S. Gu, “A minimalist approach to offline reinforcement learning,” in {\it Proc. Adv. Neural Inf. Process. Syst.}, vol. 34, 2021, pp. 20132--20145.

\bibitem{10004017}
G. Peng, J. Yang, X. Li, and M. O. Khyam, “Deep reinforcement learning with a stage incentive mechanism of dense reward for robotic trajectory planning,” {\it IEEE Trans. Syst., Man, Cybern., Syst.}, vol. 53, no. 6, pp. 3566--3573, Jun. 2023.

\bibitem{FujimotoS2019}
S. Fujimoto, D. Meger, and D. Precup, “Off-policy deep reinforcement learning without exploration,” in {\it Proc. Int. Conf. Mach. Learn.}, vol. 97, 2019, pp. 2052--2062.

\bibitem{RanY2023ef}
Y. Ran, Y.-C. Li, F. Zhang, Z. Zhang, and Y. Yu, “Policy regularization with dataset constraint for offline reinforcement learning,” in {\it Proc. Int. Conf. Mach. Learn.}, vol. 202, 2023, pp. 28701--28717.

\bibitem{ZhangZ2024}
Z. Zhang and X. Tan, “An implicit trust region approach to behavior regularized offline reinforcement learning,” in {\it Proc. AAAI Conf. Artif. Intell.}, vol. 38, no. 15, 2024, pp. 16944--16952.

\bibitem{LiJXzhan2023}
J. Li, X. Zhan, H. Xu, X. Zhu, J. Liu, and Y. Q. Zhang, “When data geometry meets deep function: Generalizing offline reinforcement learning,” in {\it Proc. Int. Conf. Learn. Represent.}, 2023, pp. 1--35.

\bibitem{ChenH2024}
H. Chen, C. Lu, Z. Wang, H. Su, and J. Zhu, “Score regularized policy optimization through diffusion behavior,” in {\it Proc. Int. Conf. Learn. Represent.}, 2024, pp. 1--20.

\bibitem{CaoS2024}
S. Cao, X. Wang, and Y. Cheng, “Dual behavior regularized offline deterministic actor-critic,” {\it IEEE Trans. Syst., Man, Cybern., Syst.}, vol. 54, no. 8, pp. 4841--4852, Aug. 2024.

\bibitem{ZhouZ2024}
Z. Zhou, G. Liu, and M. Zhou, “A robust mean-field actor-critic reinforcement learning against adversarial perturbations on agent states,” {\it IEEE Trans. Neural Netw. Learn. Syst.}, vol. 35, no. 10, pp. 14370--14381, Oct. 2024.

\bibitem{Ho2020}
J. Ho, A. Jain, and P. Abbeel, “Denoising diffusion probabilistic models,” in {\it Proc. Adv. Neural Inf. Process. Syst.}, vol. 33, 2020, pp. 6840--6851.

\bibitem{ZhangJ2024}
J. Zhang, Y. Cheng, S. Cao, and X. Wang, “Offline reinforcement learning with reverse diffusion guide policy,” {\it IEEE Trans. Ind. Informat.}, vol. 20, no. 10, pp. 11785--11793, Oct. 2024.

\bibitem{HuangLTPAMI2024}
L. Huang, B. Dong, and W. Zhang, “Efﬁcient ofﬂine reinforcement learning with relaxed conservatism,” {\it IEEE Trans. Pattern Anal. Mach. Intell.}, vol. 46, no. 8, pp. 5260--5272, Aug. 2024.

\bibitem{CLu2022}
C. Lu, Y. Zhou, F. Bao, J. Chen, C. Li, and J. Zhu, “DPM-solver: A fast ODE solver for diffusion probabilistic model sampling in around 10 steps,” in {\it Proc. Adv. Neural Inf. Process. Syst.}, vol. 35, 2022, pp. 5775--5787.

\bibitem{FuD4RL}
J. Fu, A. Kumar, O. Nachum, G. Tucker, and S. Levine, “D4RL: Datasets for deep data-driven reinforcement learning,” 2020, {\it arXiv:2004.07219}.

\bibitem{TorabiF2018}
F. Torabi, G. Warnell, and P. Stone, “Behavioral cloning from observation,” in {\it Proc. Joint Conf. Artif. Intell.}, 2018, pp. 4950--4957.

\bibitem{Wang172023}
Z. Wang, J. J. Hunt, and M. Zhou, “Diffusion policies as an expressive policy class for offline reinforcement learning,” in {\it Proc. Int. Conf. Learn. Represent.}, 2023, pp. 1--17.

\bibitem{Yamagata32842}
T. Yamagata, A. Khalil, and R. Santos-Rodriguez, “Q-learning decision transformer: Leveraging dynamic programming for conditional sequence modelling in offline rl,” in {\it Proc. Int. Conf. Mach. Learn.}, vol. 202, 2023, pp. 38989--39007.



\end{thebibliography}
\end{document}